\def\eqref#1{Eq.~(\ref{#1})} 
\def\1{\bm{1}}
\def\va{{\bm{a}}}
\def\vb{{\bm{b}}}
\def\vd{{\bm{d}}}
\def\vh{{\bm{h}}}
\def\vn{{\bm{n}}}
\def\vu{{\bm{u}}}
\def\vv{{\bm{v}}}
\def\vw{{\bm{w}}}
\def\vx{{\bm{x}}}
\def\vy{{\bm{y}}}
\def\vz{{\bm{z}}}
\def\mA{{\bm{A}}}
\def\mD{{\bm{D}}}
\def\mE{{\bm{E}}}
\def\mI{{\bm{I}}}
\def\mN{{\bm{N}}}
\def\mP{{\bm{P}}}
\def\mU{{\bm{U}}}
\def\mW{{\bm{W}}}
\def\mX{{\bm{X}}}
\DeclareMathAlphabet{\mathsfit}{\encodingdefault}{\sfdefault}{m}{sl}
\SetMathAlphabet{\mathsfit}{bold}{\encodingdefault}{\sfdefault}{bx}{n}
\newcommand{\E}{\mathbb{E}}
\newcommand{\R}{\mathbb{R}}
\newcommand{\poly}{\mathrm{poly}}
\newcommand{\inner}[2] {\left\langle #1, #2 \right\rangle}
\let\oldaddcontentsline\addcontentsline
\let\addcontentsline\oldaddcontentsline
\theoremstyle{plain}
\newtheorem{theorem}{Theorem}[section]
\newtheorem{lemma}[theorem]{Lemma}
\theoremstyle{definition}
\newtheorem{definition}[theorem]{Definition}
\theoremstyle{remark}
\newtheorem{remark}[theorem]{Remark}
\newenvironment{reptheorem}[2]{%
  \theoremstyle{plain}%
  \newtheorem*{rep@theorem}{\bfseries Theorem \ref{#1} {\normalfont (#2)}}%
  \begin{rep@theorem}%
}{%
  \end{rep@theorem}%
}
\newenvironment{replemma}[2]{%
  
  \theoremstyle{plain}%

  \newtheorem*{rep@lemma}{\bfseries Lemma \ref{#1} {\normalfont (#2)}}%
  
  \begin{rep@lemma}%
}{%
  \end{rep@lemma}%
}
\newenvironment{replemma2}[2]{%
  
  \theoremstyle{plain}%

  \newtheorem*{rep@lemma2}{\bfseries Lemma \ref{#1} {\normalfont (#2)}}%
  
  \begin{rep@lemma2}%
}{%
  \end{rep@lemma2}%
}
\icmltitlerunning{Test time training enhances in-context learning of nonlinear functions}
\begin{document}

\twocolumn[
  \icmltitle{Test time training enhances in-context learning of nonlinear functions}



  \icmlsetsymbol{equal}{*}

  \begin{icmlauthorlist}
    \icmlauthor{Kento Kuwataka}{equal,yyy,comp}
    \icmlauthor{Taiji Suzuki}{equal,yyy,comp}
   
  \end{icmlauthorlist}

  \icmlaffiliation{yyy}{Department of Mathematical Engineering and Information Physics, The University of Tokyo, Tokyo, Japan}
  \icmlaffiliation{comp}{RIKEN AIP, Tokyo, Japan}

  \icmlcorrespondingauthor{Kento Kuwataka}{kuwataka-kento954@g.ecc.u-tokyo.ac.jp}
  \icmlcorrespondingauthor{Taiji Suzuki}{taiji@mist.i.u-tokyo.ac.jp}

  \icmlkeywords{Machine Learning, ICML}

  \vskip 0.3in
]



\printAffiliationsAndNotice{}  

\begin{abstract}
  Test-time training (TTT) enhances model performance by explicitly updating designated parameters prior to each prediction to adapt to the test data.  While TTT has demonstrated considerable empirical success, its theoretical underpinnings remain limited, particularly for nonlinear models.  In this paper, we investigate the combination of TTT with in-context learning (ICL), where the model is given a few examples from the target distribution at inference time.  We analyze this framework in the setting of single-index models $y=\sigma_*(\inner{\beta}{\bf{x}})$, where the feature vector $\beta$ is drawn from a hidden low-dimensional subspace.  For single-layer transformers trained with gradient-based algorithms and adopting TTT, we establish an upper bound on the prediction risk.  Our theory reveals that TTT enables the single-layer transformers to adapt to both the feature vector $\beta$ and the link function $\sigma_*$, which vary across tasks.  This creates a sharp contrast with ICL alone, which is theoretically difficult to adapt to shifts in the link function.  Moreover, we provide the convergence rate with respect to the data length, showing the predictive error can be driven arbitrarily close to the noise level as the context size and the network width grow.
\end{abstract}

\section{Introduction}
In-context learning (ICL) is a powerful capability of pretrained transformers to solve tasks using a few labeled examples provided as input, without updating their weights.  This ability has gained increasing attention with the advent of models with massive context windows, as more examples lead to significantly improved performance \citep{agarwal2024manyshotincontextlearning}. This approach has also proven effective for multimodal tasks \citep{jiang2024manyshotincontextlearningmultimodal}. From a theoretical perspective, transformers are known to implement algorithms such as linear regression. Recent studies have extended this understanding to nonlinear settings, showing that transformers can learn nonlinear single-index models \citep{oko2024pretrainedtransformerefficientlylearns} and that softmax attention facilitates data-efficient feature learning \citep{nishikawa2025nonlinear}. Nevertheless, ICL faces its inherent limitations: the performance of ICL is fundamentally constrained by factors like the pretraining data \citep{bigoulaeva2025inherentlimitspretrainedllms} and model architecture \citep{naim2025analyzinglimitsincontextlearning}.

Test-time training (TTT) has emerged as a promising strategy to overcome these barriers.  TTT adapts the model by updating its parameters on the test data before each prediction. This adaptive mechanism has led to strong empirical success across various fields, including large language models \citep{hu2025testtimelearninglargelanguage} and video object segmentation \citep{NEURIPS2023_4267d84c}. In the context of ICL, TTT can be seamlessly integrated by using the in-context examples as data for task-specific adaptation. For instance, \citet{akyürek2025surprisingeffectivenesstesttimetraining} demonstrated that this combination achieves notable improvements on few-shot reasoning benchmarks.

Despite its empirical success, the theoretical foundations of TTT remain underdeveloped.  A notable work by \citet{gozeten2025testtimetrainingprovablyimproves} established the statistical efficiency of TTT over standard ICL, but their analysis was restricted to linear regression with a linear transformer.  This simplified model fails to capture the true potential of TTT's power on nonlinear, complex tasks. This gap motivates our primary research question:
\begin{center}
\textit{Does test-time training improve in-context learning in nonlinear settings?} 
\end{center} 
Furthermore, existing theoretical works commonly analyze performance in high-dimensional regimes, showing that the prediction loss is $o_d(1)$ and thus vanishes as the data dimension $d$ grows. In practice, however, this dimension is fixed, making it crucial to understand how the loss behaves as the number of data points $n$ increases. This raises our second key question:
\begin{center}
\textit{How does the test loss behave when we fix the dimension and increase the data size?}
\end{center}

\subsection{Our contribution}
To address the questions above, we analyze the performance of transformers on learning single-index models, a simple type of nonlinear function.  In our setting, each task is defined as
\begin{align*}
    &\vx_1^{t},\dots,\vx_N^{t},\vx^{t} \overset{\text{i.i.d.}}{\sim} \mathcal {N}(0,\mI_d),~ y_i^t \approx \sigma_*^t(\inner{\beta^t}{\vx_i^t})~\\
    &(i=1,\cdots,n)  
\end{align*}
where $\sigma_*^t$ is an unknown polynomial that varies across tasks, and $\beta^t$ is drawn from a fixed $r$-dimensional subspace.  Using the in-context data $\vx_1,\cdots,\vx_{N}$, our model constructs the predictor for the new query $\vx$.  We establish a rigorous upper-bound on the predictive risk for a transformer that utilizes TTT.  Our main result is as follows:

\begin{theorem}[Informal]\label{theo:main}
    Consider learning single-index polynomial $\sigma_*^t(\inner{\beta^t}{\vx})$ with the transformer trained via Algorithm \ref{alg:pretraining}.  Then, with probability at least 0.99, we can construct the model $f_\mathrm{TF}$ for each prompt that satisfies
   $\mathbb{E}[|f_\mathrm{TF}(\vx^t)-\sigma_*^t(\inner{\beta^t}{\vx^t})|]=\tilde{O}(m^{-1/2})+\tilde{O}(\sqrt{\frac{r\sqrt{r}}{N_{test}}})$,
    where $N_{pt}$ and $T_{pt}$ are the context length and the number of tasks in pretraining, respectively, $N_{test}$ is the context length in test-time, $m$ is the network width, if $N_{pt}=T_{pt}= (d^{\Omega(\mathrm{ie}(\sigma_*^t))}r^2)$ and $N_{test}= \tilde{\Omega}(r^{\Omega(\mathrm{ge}( \sigma_*^{\mathrm{test}}))})$, where $\mathrm{ie}(\sigma_*)$ and $\mathrm{ge}(\sigma_*)$ are the information exponent and the general exponent of the polynomial $\sigma_*$, respectively.
\end{theorem}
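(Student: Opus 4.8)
The plan is to decompose the $L^1$ prediction risk into three controllable sources of error plus the irreducible noise level $\tau$: a \emph{pretraining error} $\varepsilon_{pt}$ measuring how accurately the transformer's attention layer recovers the shared $r$-dimensional feature subspace $U\supseteq\{\beta^t\}$; a \emph{test-time estimation error} measuring how well TTT recovers the task-specific direction $\beta^{\mathrm{test}}$ (viewed inside $U$) and the link $\sigma_*^{\mathrm{test}}$ from the $N_{test}$ in-context examples; and an \emph{approximation error} measuring how well a width-$m$ two-layer network represents an arbitrary bounded-degree polynomial link. Under the stated budgets, $\varepsilon_{pt}$ is driven below any polynomial threshold, the approximation error contributes $\tilde O(m^{-1/2})$, and the estimation error contributes $\tilde O(\sqrt{r^{3/2}/N_{test}})$; assembling these via the triangle inequality and Lipschitzness of the (clipped) link gives the claimed bound.

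First I would analyze the pretraining phase run by Algorithm~\ref{alg:pretraining}. Expanding each $\sigma_*^t$ in the Hermite basis, the population gradient of the in-context pretraining loss with respect to the attention parameters has, at initialization, a component along $U$ of order $d^{-\Theta(\mathrm{ie}(\sigma_*))}$ — this is the standard information-exponent bottleneck for single-index learning, here inherited by the meta-learning loss. I would show that after $T_{pt}=\tilde\Theta(d^{\Theta(\mathrm{ie})}r^2)$ gradient steps, each using a fresh batch of $N_{pt}$ prompts, the attention layer aligns with $U$ to accuracy $\varepsilon_{pt}$, with the $r^2$ factor accounting for recovering all $r$ subspace directions rather than a single one, and with concentration of the empirical gradient around its population counterpart controlled by sub-Weibull tail bounds for fixed-degree polynomials of Gaussians. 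This step essentially meta-izes the single-task analyses of \citet{oko2024pretrainedtransformerefficientlylearns} and \citet{nishikawa2025nonlinear}: the key new point is that training on a \emph{mixture} of tasks extracts precisely the common subspace $U$ and nothing task-specific.

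Next, conditioning on the event that pretraining has recovered $U$, I would analyze TTT. Because the learned feature map factors through $U$, the effective dimension of the task-specific problem is $r$, not $d$: after projection, the in-context pairs are, up to error $\varepsilon_{pt}$, i.i.d.\ Gaussian in $\R^r$ with labels $\sigma_*^{\mathrm{test}}(\inner{\bar\beta}{\cdot})+\text{noise}$. TTT runs gradient descent on the empirical in-context loss over the designated link and in-subspace parameters; crucially, TTT \emph{reuses} the $N_{test}$ samples across all its steps, so the governing complexity exponent is the generative exponent $\mathrm{ge}(\sigma_*^{\mathrm{test}})\le \mathrm{ie}(\sigma_*^{\mathrm{test}})$ rather than the information exponent. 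I would make this precise by showing the TTT trajectory implements (or dominates) a landscape-smoothing / reused-sample estimator of the type attaining the $r^{\mathrm{ge}}$ rate, so that $N_{test}=\tilde\Omega(r^{\Omega(\mathrm{ge})})$ suffices to locate $\bar\beta$ to constant accuracy. A width-$m$ network then fits the one-dimensional link with uniform error $\tilde O(m^{-1/2})$, and a uniform-convergence argument over the reduced hypothesis class — whose effective complexity scales as $O(r^{3/2})$ — converts the empirical fit into a population $L^1$ bound of order $\tilde O(\sqrt{r^{3/2}/N_{test}})$. The $0.99$ probability budget is then split across the pretraining concentration event, the TTT-success event, and the test-sample concentration event.

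The main obstacle I anticipate is the pretraining dynamics: proving that gradient-based training on a task mixture provably isolates the shared subspace $U$ while leaving task-specific structure unlearned, despite the softmax nonlinearity and the coupling between the attention and link layers, and doing so with sample/step complexity $d^{\Theta(\mathrm{ie})}$ rather than a worse power. A secondary technical challenge is rigorously tying the vanilla TTT gradient trajectory to the generative exponent, since the sharp $r^{\mathrm{ge}}$ guarantees in the single-index literature are typically stated for bespoke estimators rather than for plain gradient descent on an in-context loss; I expect to need a careful comparison of the two iterates, or a direct analysis of the smoothed in-context landscape, to close this gap.
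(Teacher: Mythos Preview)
Your high-level decomposition (pretraining $\to$ test-time direction recovery $\to$ link approximation $\to$ generalization) matches the paper's, but two of the mechanisms you invoke are not the ones Algorithm~\ref{alg:pretraining} actually uses, and the proof you sketch would not establish the theorem as stated.

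First, the route to the generative exponent. You attribute the $r^{\Theta(\mathrm{ge})}$ test-time complexity to \emph{sample reuse} in the TTT gradient trajectory. That is a legitimate mechanism in the single-index literature, but it is not what happens here: Stage~I of Algorithm~\ref{alg:pretraining} performs a single gradient step on $\vu$ using the \emph{pretrained in-context model's own output} $g(\Gamma^*,\vw_i)$ as teacher, not the true label $y$. The point is that the softmax attention, after pretraining, already computes (approximately) $\langle\beta,\vx\rangle^{\mathrm{ge}(\sigma_*)}$ in context; distilling from this signal makes the effective information exponent of the Stage~I teacher equal to $\mathrm{ge}(\sigma_*)$, so one step of GD achieves weak recovery with $N_1=\tilde\Omega(r^{\mathrm{ge}+2})$. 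If instead you train $\vu$ against $y$ (or analyze plain TTT with reuse), the gradient signal at initialization scales as $r^{-\Theta(\mathrm{ie}(\sigma_*))}$ (Lemma~\ref{lemm:gradresult}), and you would need a separate reuse/landscape-smoothing argument that the algorithm does not implement. Relatedly, pretraining is a \emph{single} gradient step averaged over $T_{pt}$ prompts, not $T_{pt}$ steps; this is what makes the subspace-recovery analysis tractable and sidesteps the dynamics obstacle you flag.

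Second, the origin of the $\tilde O(\sqrt{r^{3/2}/N_{test}})$ term. You derive it from uniform convergence over a reduced class with ``effective complexity $O(r^{3/2})$.'' In the paper this term does not come from generalization at all: the Rademacher bound for the Stage~III MLP fit is dimension-free, contributing only $\tilde O(N_4^{-1/2})$. The $r^{3/2}$ dependence arises from the Stage~II \emph{online SGD dynamics}: with step size $\eta_2=\tilde\Theta(r^{-1/2})$ the alignment $\langle\beta,\vu^{(n)}\rangle$ first increases linearly and then $1-\langle\beta,\vu^{(n)}\rangle$ contracts geometrically, giving $\langle\beta,\vu\rangle\ge 1-\varepsilon$ after $N_3=\tilde\Theta\big(\tfrac{r\sqrt r}{\varepsilon}\log\tfrac1\varepsilon\big)$ steps (Lemma~\ref{lemm:strongrecoveryachieve}). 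The residual $\varepsilon\asymp r^{3/2}/N_3$ then propagates through the link estimation as $\tilde O(\sqrt\varepsilon)$. Your uniform-convergence route would not produce this rate, nor would it explain why the bound depends on the Stage~II sample budget rather than the Stage~III one.
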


This theorem ensures the effectiveness of TTT in learning nonlinear single-index models, extending its known applicability to linear models. 
Our analysis reveals several strengths of our approach:
    \begin{itemize}
        \item Efficient sample complexity: Theorem~\ref{theo:main} implies that $N_{test} = \tilde{\Omega}(r^{2\vee \Theta(\mathrm{ge}( \sigma_*^{\mathrm{test}}))})$ to ensure low predictive loss $\tilde{O}(\sqrt{\frac{r\sqrt{r}}{N_{test}}})$, which does not depend on the entire dimension $d$.  This shows that transformers can adapt to the low-dimensionality of $\beta$.  In addition, this statistical complexity does not depend on either the degree of the polynomial $\mathrm{deg}(\sigma_*^{\mathrm{test}})$ or $\mathrm{ie}(\sigma_*^{\mathrm{test}})$, which outperforms CSQ learners and shows a comparable performance with that of SQ learners.
        \item Flexibility for varying nonlinearity: Our framework allows the link function $\sigma_*$ to vary across tasks.  This adaptability is enabled by TTT, which fine-tunes the MLP layer each time using task-specific test data.
        \item Statistical guarantee for practical settings: We provide an explicit convergence rate with respect to the context length $N_{test}$.  This result offers a statistical guarantee in practical scenarios where the dimensions $d,r$ are large but fixed.  This establishes a fundamental advantage over standard ICL: while existing frameworks \citep{nishikawa2025nonlinear} fail to prove that the predictive risk vanishes as $N_{test} \to \infty$, our TTT-based approach ensures that the error can be driven to an arbitrarily small value.
    \end{itemize}

To empirically validate our theoretical predictions, we perform numerical experiments with a 2-layer GPT-2 model in a controlled setting.  The results, presented in Figure~\ref{fig:main}, reveal that while standard ICL struggles to adapt to shifts in the link function $\sigma_*$, TTT achieves significantly lower predictive error, which continues to decrease with increasing sequence length.  This empirical success validates our theoretical claim: TTT provides the necessary flexibility to achieve exact recovery of task-specific nonlinearities, beyond the architectural limitations of static in-context learning.
See section~\ref{sec:experiment} for further details.

\subsection{Related works}
\paragraph{In-context learning and its theoretical analysis}
In-context learning \citep{brown2020languagemodelsfewshotlearners} is transformer's ability to adapt to the specific task using few labeled examples, without updating any parameters.  \citet{agarwal2024manyshotincontextlearning} demonstrated that many in-context examples lead to considerably improved performance, and \citet{jiang2024manyshotincontextlearningmultimodal} confirmed that many-shot ICL is also beneficial for multimodal tasks.   The theoretical background of ICL is extensively studied.  For example,  a wide array of works \citep{garg2023transformerslearnincontextcase,vonoswald2023transformerslearnincontextgradient,zhang2024incontextlearninglineartransformer,gatmiry2024loopedtransformerslearnimplement} have shown that linear transformers can be trained to perform linear regression in-context.  As for nonlinear transformers, \citet{cheng2024transformersimplementfunctionalgradient} demonstrated that nonlinear transformers learn to perform gradient descent and thus learn nonlinear functions.  Also, \citet{nichani2024transformerslearncausalstructure} analyzed learning of causal structure by softmax transformer.  Recently, \citet{dherin2025learningtrainingimplicitdynamics} showed that ICL in a single-transformer block (a self-attention layer and subsequent MLP layer) corresponds to low-rank update in MLP layer.  Regarding limitations of ICL, \citet{bigoulaeva2025inherentlimitspretrainedllms} argued that pretraining datasets impose a fundamental limit on the model's capability with ICL.  Furthermore, \citep{naim2025analyzinglimitsincontextlearning} demonstrated that the transformer's ICL ability to generalize functions is limited to certain input values, and found that this limitation comes from layer normalization and softmax attention.

\paragraph{Test-time training}
Test-time training \citep{sun2020testtimetrainingselfsupervisiongeneralization,Liu2021TTTWD} updates the model using test data before making predictions, thereby addressing distribution shifts.  TTT achieved success in many fields.  For example, \citet{NEURIPS2023_4267d84c} applied TTT for the video object segmentation task and achieved a significant improvement in the performance.  Moreover, \citet{hu2025testtimelearninglargelanguage} analyzed test-time learning of large language models and achieved at least 20\% higher performance on domain knowledge adaptation.  Furthermore, \citet{zhang2025testtimetrainingright} proposed adopting a large chunk update, and validated the effectiveness of their approach to long-context data through tasks like image sets and language model.  As for TTT combined with few-shot prediction, \citet{akyürek2025surprisingeffectivenesstesttimetraining} reported that introducing TTT with in-context examples resulted in 6 times higher accuracy in the Abstraction and Reasoning Corpus and a 7.3 percent higher score on BIG-Bench Hard.  Finally, regarding the theory behind TTT for in-context learning, \citet{gozeten2025testtimetrainingprovablyimproves} analyzed the linear transformer with a single gradient step and characterized the prediction risk of the model with TTT, showing that TTT can mitigate distribution shift.

\section{Preliminaries and problem settings}

\paragraph{Notations}
Let $\mathrm{He}_i(z)=(-1)^i\mathrm{e}^{\frac{z^2}{2}}\frac{\mathrm{d}^i}{\mathrm{d}z^i}\mathrm{e}^{\frac{-z^2}{2}}$ be the degree-$i$ (probabilist's) Hermite polynomial.  $\mathbb{S}^{d-1}$ denotes the unit sphere in $\mathbb{R}^{d}$.  For matrix $\mA$, we denote its $\ell_2$ operator norm and Frobenius norm as $\|\mA\|_2$ and $\|\mA\|_F$, respectively.  For a set $\mathrm{S}$, $\mathrm{Unif(S)}$ denotes the uniform distribution over $\mathrm{S}$.  $\tilde{O},\tilde{\Omega},\tilde{\Theta}$ means $O,\Omega,\Theta$ where polylogarithmic terms of $d$ and $1/\varepsilon$ are hidden.  $O_d,\Omega_d,\Theta_d$ means the order with respect to the dimension $d,r$, while $\Theta_\varepsilon$ denotes the order in terms of $\varepsilon$.

\subsection{In-context learning and test-time training}
We consider the basic setting in ICL, which is introduced by \citet{garg2023transformerslearnincontextcase} (see  \citet{oko2024pretrainedtransformerefficientlylearns} and \citet{lee2024neuralnetworklearnslowdimensional}).  In this setting, the model is given a sequence $(\vx_1,y_1,\cdots,\vx_N,y_N,\vx)$ called prompt.  The labeled pairs $(\vx_i,y_i) \in \mathbb{R}^{d}\times \mathbb{R}$ are called contexts, and $\vx \in \mathbb{R}^d$ is referred to as query. The model is asked to predict the output that corresponds to $\vx$ based on the context.  The context is sometimes abbreviated as $(\mX_n,\vy_n)$ where $\mX_n = (\vx_1,\cdots,\vx_n), \vy_n = (y_1,\cdots,y_n)$.  In this work, we assume that the $\vx$ and $y$ are generated as follows:
\begin{align*}
    &\vx_1^{t},\dots,\vx_N^{t},\vx^{t} \overset{\text{i.i.d.}}{\sim} \mathcal {N}(0,\mI_d),~ y_i^{t} = f_{*}^{t}(\vx_i^{t}) + \zeta_i,\\
    & ~ \zeta_i\sim\mathrm{Unif}({-\tau, \tau}).
\end{align*}

ICL aims to predict $y = f_{*}^{t}(\vx)+\zeta$ by mere observation of the context, without updating parameters for each prompt.  However, we introduce test-time training to further enhance the model's accuracy.  In pretraining, we train the model with $T_{pt}$ distinct datasets $(\vx_1^t,y_1^t,\cdots,\vx_{N_{pt}}^t,y_{N_{pt}}^t,\vx^t,y^t)_{t=1}^{T_{pt}}$, with each prompts consisting of $N_{pt}$ queries.  In test-time, we divide the context into four groups with $i$-th group's length $N_i$.  This means the test-time prompt is $(\mX_{N_1},\vy_{N_1},\mX_{N_2},\vy_{N_2}, \mX_{N_3},\vy_{N_3},\mX_{N_4},\vy_{N_4},\vx)$.  Each group of data plays a different role in test-time training: See subsection~\ref{sec:algorithm} for the detail.  Let $N_{test}=\sum_{i=1}^{4}N_i$ be the total number of contexts test-time.

For the evaluation of the model $f(\vx,\theta)$ with parameter $\theta$, we define prediction risk as
\begin{equation*}
    \mathcal{R}_f(\theta)=\mathbb{E}[|f(\vx,\theta)-y|],
\end{equation*}
where $y=f_*(\vx)+\zeta$ and the expectation is taken over the prompt $\vx_i,\vx \sim \mathcal{D}_{\vx}, f_* \sim \mathcal{D}_{f_*}, \zeta_i,\zeta \sim \mathcal{D}_{\zeta}$.  Note that $\mathcal{R}_f(\theta)$ not only depends on the dimensions $d,r$ but also on the context length $N$.

\subsection{Single index model}

We consider single-index models for the input-output relationship, where the output depends solely on the direction of the feature vector $\beta$.  To predict this relationship accurately, the models are expected to learn the target direction $\beta$ from the high-dimensional data in $\mathbb{R}^{d}$.  Consequently, single-index models have been extensively studied in machine learning theory \citep{bai2020linearizationquadratichigherorderapproximation,ba2022highdimensionalasymptoticsfeaturelearning,bietti2022learningsingleindexmodelsshallow,mousavihosseini2023neuralnetworksefficientlylearn,Berthier_2024}, particularly to examine adaptability to the low-dimensional subspace.  The target function $f_{*}^{t}(\vx)$ is determined as 
$f_{*}^{t}(\vx) = \sigma_*^{t}(\inner{\beta^t}{\vx}).$
    
\begin{enumerate}
    \item \textbf{Feature vector} The feature vector $\beta$ is chosen uniformly from the unit sphere in an $r$-dimensional subspace.  Let $S_r$ be an $r$-dimensional linear subspace in $\mathbb{R}^{d}$.  For each prompt, $\beta^t$ is uniformly drawn from its support $\mathrm{Supp}(\beta) = \{\beta \mid \beta \in S_r, \|\beta\|=1\}$. 
    \item \textbf{Link function} We take $\sigma_*^t(z) = \sum_{i=Q}^{P}\frac{c_i^t}{i!}\mathrm{He}_i(z)$ where $1\leq Q \leq P$.  We assume $P$ and $Q$ are constants of $O(1)$.  The coefficients $c_i^t$ are drawn from any distribution $\mathcal{D}_{\sigma^*}$ that satisfies
    \begin{align*}
        &\mathbb{E}[c_Q^t]=\Theta(1) \neq 0 ,~\sum_{i=Q}^{P}{(c_i^t)}^2 \leq R_c=\Theta(1) ~(\text{a.s.})~ \\ & \text{and} ~ (c_Q^t,\cdots,c_P^t)\neq (0,\cdots,0)~(\text{a.s.}).
    \end{align*}
\end{enumerate}

\begin{remark}
    We draw a new feature vector $\beta^t$ and a new link function $\sigma_*^t(z)$ for each prompt.  When $r\ll d$, $\beta^t$ has low-dimensional support.  We will later demonstrate that our model can leverage this low-dimensionality.  Note that our analysis is also valid when $r=d$.  As for the link function, we do not assume a specific distribution of the coefficients $c_i$: our framework allows different distributions as long as they satisfy the assumptions above.  
\end{remark}

For simplicity, we assume that $N_{pt},T_{pt},N_1,N_2,N_3,N_4=\mathrm{poly(d)}$, and there exists $\alpha_r > 0$ that $r=\Omega(d^{\alpha_r})$, which means that $r$ grows faster than $\mathrm{polylog}(d)$. 

The complexity of learning single-index model is governed by three key quantities: the degree of the polynomial $\mathrm{deg}(\sigma_*)$, the information exponent $\mathrm{ie}(\sigma_*)$, and the general exponent $\mathrm{ge}(\sigma_*)$.

\begin{itemize}
    \item The information exponent \citep{pmlr-v75-dudeja18a,JMLR:v22:20-1288} $\mathrm{ie}(\sigma_*)$ is the smallest non-zero degree of the hermite expansion of $\sigma_*$.
    \item The general exponent \citep{damian2024computationalstatisticalgapsgaussiansingleindex} $\mathrm{ge}(\sigma_*)$ is the minimum of $\mathrm{ie}(f \circ\sigma_*)$ with respect to all the $L_2$- measurable transformation $f$. 
\end{itemize}

By definition, $\mathrm{deg}(\sigma_*)\geq \mathrm{ie}(\sigma_*) \geq \mathrm{ge}(\sigma_*)$ holds.  This relationship characterizes the statistical complexity required by different algorithms.  
\begin{itemize}
    \item The kernel method, which relies on pre-determined feature maps, requires the sample complexity $n \gtrsim d^{\Theta(\mathrm{deg}(\sigma_*))}$ to ensure low predictive error \citep{ghorbani2020linearizedtwolayersneuralnetworks,pmlr-v139-donhauser21a}.
    \item The models with access to correlational statistical query (CSQ), of the form $\mathbb{E}[\phi(\vx)y]$, require a sample size of $n \gtrsim d^{\Theta(\mathrm{ie}(\sigma_*))}$, known as CSQ lower bound \citep{damian2022neural,pmlr-v195-abbe23a}.  This improved sample complexity, independent of $\mathrm{deg}(\sigma_*)$, has been achieved by models like two-layer neural network with online SGD \citep{JMLR:v22:20-1288,damian2023smoothinglandscapeboostssignal} or one-step gradient descent\citep{damian2022neural,dandi2025twolayerneuralnetworkslearn}.
    \item For the broader class of statistical queries (SQ), which takes the form $\mathbb{E}[\phi(\vx,y)]$, the sample complexity is further improved to $n \gtrsim d^{\Theta(\mathrm{ge}(\sigma_*))}$.  This advantage comes from applying nonlinear transformations to the label $y$, thereby reducing $\mathrm{ie}(\sigma_*)$.  Recent works have achieved this complexity by reusing the data \citep{lee2024neuralnetworklearnslowdimensional, arnaboldi2025repetitaiuvantdatarepetition} or adjusting the loss function \citep{joshi2024complexitylearningsparsefunctions}.  Furthermore, when $\sigma_*$ is a polynomial, the following result holds:

\begin{lemma}[\citet{lee2024neuralnetworklearnslowdimensional}, Proposition 6]
        It holds that
      $ \mathrm{ge}(\sigma_*)=\begin{cases}
            1~(\text{if}~\sigma_*~\text{is not even})\\
            2~(\text{if}~\sigma_*~\text{is even})
        \end{cases}.$

    Moreover, $\mathrm{ge}(\sigma_*)=\min_{j\geq 1}\mathrm{ie}(\sigma_*^j)$ holds.
\end{lemma}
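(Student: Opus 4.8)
The plan is to establish both assertions at once, via three observations about a general non-constant polynomial $\sigma_*$; throughout, $z\sim\mathcal{N}(0,1)$ with law $\gamma$, $\inner{g}{h}=\mathbb{E}[g(z)h(z)]$, and I recall $\mathrm{He}_1(z)=z$, $\mathrm{He}_2(z)=z^2-1$, and that $\mathrm{ie}$ counts the smallest \emph{positive} degree in a Hermite expansion (so $\mathrm{ie}(\psi)\ge 1$ for non-constant $\psi$, while constants have $\mathrm{ie}=\infty$). The first observation is that using the transformation $f(u)=u^j$ gives $\mathrm{ge}(\sigma_*)\le\mathrm{ie}(\sigma_*^j)$ for every $j\ge 1$, hence $\mathrm{ge}(\sigma_*)\le\min_{j\ge1}\mathrm{ie}(\sigma_*^j)$. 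The second is a parity bound: if $\sigma_*$ is even then $f\circ\sigma_*$ is an even function for every measurable $f$, so its Hermite expansion has no odd-degree part, $\inner{f\circ\sigma_*}{\mathrm{He}_1}=0$, and therefore $\mathrm{ie}(f\circ\sigma_*)\ge 2$; this gives $\mathrm{ge}(\sigma_*)\ge 2$ in the even case, and likewise $\mathrm{ie}(\sigma_*^j)\ge 2$ for all $j$, whereas in the non-even case one only needs the trivial bound $\mathrm{ge}(\sigma_*)\ge1$. The third and only substantial observation is that there is some $j\ge1$ with $\mathrm{ie}(\sigma_*^j)=1$ when $\sigma_*$ is not even, and with $\mathrm{ie}(\sigma_*^j)=2$ when $\sigma_*$ is even; equivalently, $\mathbb{E}[z\,\sigma_*(z)^j]\neq0$ for some $j$ in the first case and $\mathbb{E}[(z^2-1)\sigma_*(z)^j]\neq0$ for some $j$ in the second. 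Granting this, $\min_{j\ge1}\mathrm{ie}(\sigma_*^j)$ equals $1$ (non-even) or $2$ (even), and the first two observations then squeeze $\mathrm{ge}(\sigma_*)$ to the same value; this simultaneously yields the closed form for $\mathrm{ge}(\sigma_*)$ and the identity $\mathrm{ge}(\sigma_*)=\min_{j\ge1}\mathrm{ie}(\sigma_*^j)$.

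To prove the third observation I would use a large-exponent asymptotic. In the non-even case, symmetrize: $\mathbb{E}[z\,\sigma_*(z)^j]=\int_0^\infty z\big(\sigma_*(z)^j-\sigma_*(-z)^j\big)\gamma(z)\,dz$, and write $\sigma_*=A+B$ with $A,B$ the even and odd parts, so $B\not\equiv0$ and $\sigma_*(z)^j-\sigma_*(-z)^j=2\sum_{l\ge0}\binom{j}{2l+1}A(z)^{j-2l-1}B(z)^{2l+1}$. As $z\to\infty$ this is dominated by a single monomial $c_jz^{d_j}$: the $l=0$ term $jA^{j-1}B$ if $\deg A>\deg B$, and the top term $B^j$ (taking $j$ odd) if $\deg A<\deg B$ (equality cannot occur, by parity). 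In both cases $c_j\neq0$ has a fixed sign (a power of a leading coefficient) and $d_j$ grows linearly in $j$. Splitting $\int_0^\infty=\int_0^R+\int_R^\infty$ for a large fixed $R$, the first piece is $O(M^j)$ with $M=\max_{[-R,R]}|\sigma_*|$, while the second is, by a Laplace-method (Watson's lemma) estimate, $\sim c_j\int_R^\infty z^{d_j+1}\gamma(z)\,dz$, of magnitude $\sim|c_j|\,d_j^{\,d_j/2}\gg M^j$ and of sign $\sign(c_j)$; hence $\mathbb{E}[z\,\sigma_*(z)^j]\neq0$ for all large $j$ of the required parity. The even case is analogous but easier: writing $\sigma_*(z)=P(z^2)$ with $P$ non-constant and $W=z^2\sim\chi^2_1$, we have $\mathbb{E}[(z^2-1)\sigma_*(z)^j]=\mathbb{E}[WP(W)^j]-\mathbb{E}[P(W)^j]$, and for large even $j$ the probability measure proportional to $P(W)^j\,d\chi^2_1$ escapes to $W=\infty$, so the ratio $\mathbb{E}[WP(W)^j]/\mathbb{E}[P(W)^j]$ diverges and the difference is nonzero.

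The main obstacle is exactly the asymptotic estimate in the third step: one must rule out a conspiracy in which the subleading monomials of $\sigma_*(z)^j-\sigma_*(-z)^j$, together with the region near $z=0$ (and, in the even case, any zeros of $P$ on $[0,\infty)$), cancel the leading contribution for every $j$. This is controlled by the standard Laplace-method bookkeeping — the peak of $z^{d_j+1}\gamma(z)$ sits at $z\sim\sqrt{d_j}\to\infty$, where the leading monomial strictly dominates all others — and it is the only place genuine analysis enters. It is worth noting that this route deliberately avoids invoking moment-determinacy of the law of $\sigma_*(z)$ (which would let one pass from ``all polynomial moments of the conditional expectation vanish'' to ``the conditional expectation is a.s.\ constant''), since that property can fail once $\deg\sigma_*\ge 3$.
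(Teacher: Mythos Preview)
The paper does not prove this statement; it is quoted as Proposition~6 of the cited reference with no argument supplied, so there is no in-paper proof to compare against. Your overall architecture is sound: observations 1 and 2 are immediate, the even case is handled correctly by your escape-to-infinity argument for the tilted $\chi^2_1$ law, and everything else reduces to exhibiting a power $j$ with $\mathbb{E}[z\,\sigma_*(z)^j]\neq0$ in the non-even case.

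There is, however, a real gap in that last step. You assert that at the peak $z\sim\sqrt{d_j}$ of $z^{d_j+1}\gamma(z)$ ``the leading monomial strictly dominates all others'', and hence $\int_R^\infty zG_j\gamma\sim c_j\int_R^\infty z^{d_j+1}\gamma$ with $G_j(z)=\sigma_*(z)^j-\sigma_*(-z)^j$. This is false: the subleading coefficients of $G_j$ grow with $j$. For $\sigma_*(z)=z^2+z$ one has $G_j(z)=2\sum_{k\ \mathrm{odd}}\binom{j}{k}z^{2j-k}$, and at $z=\sqrt{2j}$ the ratio $G_j(z)/(c_jz^{d_j})$ equals $\sinh(\sqrt{j/2})\cdot 2/\sqrt{2j}\sim e^{\sqrt{j/2}}/\sqrt{2j}\to\infty$, so the claimed asymptotic cannot hold.

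The repair is to drop the asymptotic and keep only a sign-plus-lower-bound. For $z>R$ the values $\sigma_*(z),\sigma_*(-z)$ have definite signs, and in either parity case $G_j(z)$ has the sign of $c_j$ with $|G_j(z)|\ge 2^{-j}|c_j|\,z^{d_j}$ on $(R,\infty)$: when $\deg\sigma_*$ is even use $G_j=2B\cdot\sum_{k=0}^{j-1}\sigma_*(z)^k\sigma_*(-z)^{j-1-k}$ and bound each factor below; when $\deg\sigma_*$ is odd take $j$ odd so that $|G_j|=|\sigma_*(z)|^j+|\sigma_*(-z)|^j$. This gives $\bigl|\int_R^\infty zG_j\gamma\,\bigr|\ge 2^{-j}|c_j|\int_R^\infty z^{d_j+1}\gamma$, which still grows like $j^{\Theta(j)}$ and overwhelms the $O(M^j)$ contribution from $[0,R]$. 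With this adjustment your argument is complete.
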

This implies that, for any polynomial $\sigma_*$, $\mathrm{ge}(\sigma_*)\leq2$ is a small constant, regardless of $\mathrm{deg}(\sigma_*)$.
\end{itemize}
 Our goal is to achieve the test-time sample complexity of $N_{test} = r^{\Theta(\mathrm{ge}(\sigma_*))}$, which is independent of the entire dimension $d$ and surpasses the CSQ limit.

\subsection{Student model}
To facilitate our theoretical analysis, we need to establish a concrete architecture for the student model.  
We employ a single-layer transformer model, formulated as follows.  In pretraining, we use the same model as \citet{nishikawa2025nonlinear}.  We first construct the embedding $\mE$ as
    $$\mE = \begin{bmatrix}
     \vx_1 & \cdots & \vx_N & \vx\\
     y_1 & \cdots & y_N & 1
    \end{bmatrix}    
    \in \mathbb{R}^{(d+1)\times(N+1)}.$$
    
Then, we apply the softmax attention layer as 
\begin{align*}
    &\mathrm{Attn}(\mE) \\
    =&\mW^V\mE\cdot \mathrm{softmax}(\mathrm{Mask}(\rho^{-1}\cdot (\mW^K\mE)^\top\mW^Q\mE)),
\end{align*}

where $\rho$ is temperature and $\mW^V,\mW^K,\mW^Q \in \mathbb{R}^{(d+1)\times(d+1)}$ are the parameters for attention layer.  The softmax is applied to each column, while the Mask function converts all the elements in the final row into $-\infty$ to prevent the model from focusing on the uninformative final row.  We further apply a multi-layer perceptron (MLP) layer.  For the activation function, we use ReLU function $\sigma(z)=\max{\{0,z\}}$ throughout the paper. With the parameters $\mW^F \in \mathbb{R}^{m \times(d+1)}, \vb \in \mathbb{R}^{m}$, and $\va \in \mathbb{R}^{m}$ where $m$ is the network width, the model's output is 
\begin{align*}
     f_{\mathrm{IC}}(\Gamma,\mX_N,\vy_N,\vx) &=\mathrm{MLP} \circ \mathrm{Attn}(\mE)_{:,N+1}\\
  &  = \va^{\top} \sigma(\mW^F\mathrm{Attn}(\mE)_{:,N+1}+\vb),
\end{align*}
where $\sigma$ is applied entry-wise.
Finally, we adopt some simplification.  Let $\mW^{KQ}=(\mW^K)^{\top}\mW^Q \in \mathbb{R}^{(d+1)\times(d+1)}$ and $\mW^{FV}=\mW^F\mW^V \in \mathbb{R}^{(m+1)\times(d+1)}$.  We use the following parametrization:
\begin{equation*}
    \mW^{KQ} = \begin{bmatrix}
        \Gamma & \mathbf{0}_{d\times1}\\
        \mathbf{0}_{1\times d} & 1
    \end{bmatrix}
    ,\, \mW^{FV} = \begin{bmatrix}
        \mathbf{O}_{(m+1)\times d} & \vv
    \end{bmatrix}, 
\end{equation*}
for $\Gamma \in \mathbb{R}^{d\times d}$ and $\vv \in \mathbb{R}^{(d+1) \times 1}$.  This kind of simplification, specifying some of the parameters as zero, is often adopted in many theoretical works on transformers \citep{zhang2023trainedtransformerslearnlinear,huang2023incontextconvergencetransformers,kim2025transformersprovablysolveparity}.  Overall, the model's output is written as
\begin{align*}
    &f_{\mathrm{IC}}(\Gamma,\mX_N,\vy_N,\vx) \\
    =& \sum_{j=1}^{m}a_j\sigma\left(v_j \frac{\sum_{i=1}^{N}y_i\mathrm{e}^{y_i/\rho}\mathrm{e}^{\vx_i^{\top}\Gamma\vx/\rho}}{\sum_{i=1}^{N}\mathrm{e}^{y_i/\rho}\mathrm{e}^{\vx_i^{\top}\Gamma\vx/\rho}}+b_j \right).
\end{align*}

See Appendix A in \citet{nishikawa2025nonlinear} for how to derive this equation.  At test-time, we adopt low-rank adaptation (LoRA).  Specifically, we change the attention matrix $\Gamma^*$ as $\Gamma_u = \Gamma^* + \vu^{\top}\vu$, where $\Gamma^*$ is fixed during test-time and $\vu \in \mathbb{R}^{d}$ is a trainable parameter vector.  Our goal is to find $\hat{\vu} \approx \beta$ using test-time context data.  See section~\ref{sec:algorithm} for how test data is used to find $\hat{\vu}$.  The final prediction of the model is as follows:
\begin{equation*}
    f_\mathrm{TF}(\vx,\hat{\vu},\vv,\va,\vb) = \sum_{j=1}^{m}a_j\sigma(v_j\inner{\hat{\vu}}{\vx}+b_j).
\end{equation*}

\begin{remark}\label{rem:inevitableloss}
Incorporating in-context data into the final prediction—a standard practice in existing TTT frameworks \citep{akyürek2025surprisingeffectivenesstesttimetraining}—presents a fundamental trade-off between finite-sample efficiency and asymptotic consistency.  In the small-data regime (small $N$), retaining in-context examples is beneficial, as it compensates for the estimation error of the updated parameters.
However, this benefit comes at the cost of an architectural bottleneck in the large-data regime ($N \to \infty$): the use of the attention mechanism adds unavoidable noise.  As \citet{nishikawa2025nonlinear} explains, when the context length $N$ is sufficiently large, the denominator of the attention output $N^{-1}\sum_{i=1}^{N}\mathrm{e}^{y_i/\rho}\mathrm{e}^{\vx_i^{\top}\Gamma\vx/\rho}$ approximates $\mathbb{E}[\exp\sigma_*({\inner{\beta}{\vx_1}/\rho})\exp(\inner{\Gamma_*\vx}{\vx_1}/\rho)]$. The key of their work is, since $\exp(\inner{\Gamma_*\vx}{\vx_1}/\rho)$ contains all the Hermite coefficients and $\exp\sigma_*({\inner{\beta}{\vx_1}/\rho})$ has nonzero coefficient for $\mathrm{He}_{\mathrm{ge}(\sigma_*)}$, this attention layer can compute $\inner{\beta}{\vx}^{\mathrm{ge}(\sigma_*)}$.  However, this mechanism inevitably retains $\inner{\beta}{\vx}^{k}$ for degrees $k > \mathrm{ge}(\sigma_*)$, causing an unavoidable prediction error (bias) that persists even as $N\rightarrow \infty$.  By contrast, our final predictor omits in-context data and relies solely on the learned parameter $\hat{\vu}$, thereby avoiding this asymptotic bias.  We adopt this design to demonstrate that TTT can surpass the standard ICL limit and attain exact recovery.
\end{remark}

\subsection{Training algorithm}\label{sec:algorithm}
We employ a gradient-based training algorithm, as specified in Algorithm~\ref{alg:pretraining}.  The training procedure consists of pretraining and test-time training, with the latter divided into three stages.
\begin{itemize}
    \item Pretraining:  We optimize $\Gamma$ via one-step gradient descent over $T_{pt}$ prompts.  This scheme is originally taken from \citet{nishikawa2025nonlinear}.  The effectiveness of one-step gradient descent is confirmed by works such as \citet{ba2022highdimensionalasymptoticsfeaturelearning} and \citet{damian2022neural}, which demonstrate that one-step gradient already captures the key feature.   
    \item TTT stage I:  We apply a single gradient descent step to $\vu^{(0)}$ with $L_2$-regularization.  This stage prevents catastrophic forgetting, a phenomenon where the parameter change caused by TTT deprives the model of the fundamental ability to adapt to the original task.  The goal of this stage is to find a good initial value of $\vu^{(1)}$ that satisfies $\inner{\beta}{\vu^{(1)}} \geq 1/\mathrm{polylog}(d)$. For this stage, we use the output from the attention layer of the original model as a teacher signal, which is defined as $g(\Gamma_*,\mX_{N1},\vy_{N_1},\vx) = \frac{N_1^{-1}\sum_{i=1}^{N_1}y_i\mathrm{e}^{y_i/\rho}\mathrm{e}^{\vx_i^\top\Gamma_*\vx}}{N_1^{-1}\sum_{i=1}^{N_1}\mathrm{e}^{y_i/\rho}\mathrm{e}^{\vx_i^\top\Gamma_*\vx}}$.  In this stage, the query $\vx$ do not require ground-truth label $y$.  Therefore, we use $N_{new}$ newly generated vectors $w_1,\dots,w_{N_{new}}\overset{\text{i.i.d.}}{\sim} \mathcal {N}(0,\mI_d)$ for the query.  The necessity of this stage is further discussed in subsection \ref{section:weakrecovery}.
    \item TTT stage II:  We continue to optimize $\vu$ by applying multi-step online SGD scheme that originates from \citet{lee2024neuralnetworklearnslowdimensional}, using $(\mX_{N_2},\vy_{N_2})$ as contexts.  In this stage, we use the ground truth $(\mX_{N_3},\vy_{N_3})$ as the teacher signal.  This stage aims to align $\vu$ more closely to the target $\beta$.  As we will see, increasing the number of the SGD steps leads to the convergence of $\vu$ to $\beta$.
    \item TTT stage III:   Finally, we train the MLP layer to fit to the nonlinear link function $ \sigma_*^{\mathrm{test}}$.   Specifically, we randomize $\vv$ and $\vb$ and optimize $\va$ with ridge regression, following the recipe in \citet{nishikawa2025nonlinear}.  This problem is convex with respect to $\va$, ensuring that the global optimum can be readily found. 
\end{itemize}

\begin{algorithm}[t!]

\caption{Pretraining and test-time training of transformer} 
 \label{alg:pretraining}

 \begin{algorithmic}
     \STATE {\bfseries Input:} Learning rate $\eta_{pt},\eta_1,\eta_2$, regularization rate $\lambda_{pt},\lambda_1$, initialization scale $\alpha_{pt},\alpha_1,\alpha_2$, dimensions $d,r$, temperature $\rho$.
     \STATE $\Gamma(0)\sim \mI_d/\sqrt
{d},~\vv(0)\sim\mathrm{Unif}(\{\pm 1\}^m),~\vb(0)=\bm{0}_m,~\va(0)=\alpha_{pt}\bm{1}_m$.
     \STATE {\bfseries  \COMMENT {Pretraining}}
     \STATE $\Gamma^*\leftarrow \Gamma{(0)}-\eta_{pt}\frac{1}{2T_{pt}}\sum_{t=1}^{T_{pt}}\nabla_{\Gamma}(( f_{\mathrm{IC}}(\Gamma,\mX_{N_{pt}},\vy_{N_{pt}},\vx^t)-y^t)^2+\lambda_{pt}\|\Gamma\|_F^2)$.
     \STATE \COMMENT{Test-time Training}
     \STATE $\va(0)=\alpha_{1}\bm{1}_m$.
     \FOR{$i=1$ {\bfseries to} $N_1$}
       \STATE $\vx_i \leftarrow \sqrt{r}\Gamma^*\vx_i$.
     \ENDFOR
     \STATE $\vu^{(0)}\sim \mathcal{N}(0,\mI_d)$, $\vu^{(0)} \leftarrow \sqrt{r}\Gamma_*\vu^{(0)}$,  $\vu^{(0)} \leftarrow \frac{\vu^{(0)}}{\sqrt{r}\|\vu^{(0)}\|}$.
     \STATE {\bfseries \COMMENT{Stage I: Weak Recovery}}
     \STATE Draw $\vw_1,\cdots,\vw_{N_{new}} \overset{\text{i.i.d.}}{\sim} \mathcal {N}(0,\mI_d)$.
     \STATE $b = \frac{1}{N_{new}}\sum_{i=1}^{N_{new}}g(\Gamma^*,\mX_{N_1},\vy_{N_1},\vw_i)$, 
     \STATE $\vu^{(1)} = \vu^{(0)} - \eta_1\{ \sqrt{r} \Gamma^* \nabla_{\vu}\frac{1}{2N_{new}}\sum_{i=1}^{N_{new}}$
     \STATE $\quad (f_{\mathrm{IC}}(\Gamma_\vu,\mX_{N_1},\vy_{N_1},\vw_i)$
     \STATE $\quad- (g(\Gamma_*,\mX_{N_1},\vy_{N_1},\vw_i)-b))^2+\frac{\lambda_1}{2}\nabla_{\vu}\|\vu\|^2\}$,
     \STATE $\vu^{(1)} \leftarrow \frac{\vu^{(1)}}{\|\vu^{(1)}\|}$. 
     \STATE $\va(0)=\alpha_{2}\bm{1}_m$.
     \FOR{$i=N_1+1$ {\bfseries to} $N_1+N_2$}
       \STATE $\vx_i \leftarrow \sqrt{r}\Gamma^*\vx_i$.
     \ENDFOR
     \STATE {\bfseries \COMMENT{Stage II: Strong Recovery}}
     \FOR{$t=1$ {\bfseries to} $N_3$}
       \STATE $\vu^{(t+1)} = \vu^{(t)} - \eta_2\sqrt{r}\Gamma^*\nabla_u$
       \STATE $\quad(\frac{1}{2}(f_{\mathrm{IC}}(\Gamma_u,\mX_{N_2},\vy_{N_2},\vx_{N_1+N_2+t})-y_{N_1+N_2+t})^2)$.
       \STATE $\vu^{(t+1)} \leftarrow \frac{\vu^{(t+1)}}{\|\vu^{(t+1)}\|}$.
     \ENDFOR
     \STATE $b_j^*\sim \mathrm{Unif}([-\log^2{d},\log^2{d}])$ , $\vv^*=\vv(0)$.
     \STATE {\bfseries \COMMENT{Stage III: Training of MLP Layer}}
     \STATE $\va^*\leftarrow \arg\min_{\va} \frac{1}{2N_4}\sum_{t=N_1+N_2+N_3+1}^{N_1+N_2+N_3+N_4}$
     \STATE $\quad(f_{\mathrm{TF}}(\vx_{t},\vu^{(N_3+1)},\vv^*,\va,\vb^*)-y_{t})^2+\frac{\lambda_2}{2}\|\va\|^2$. 
     \STATE {\bfseries Output:} Prediction $f_{\mathrm{TF}}(\vx, \vu^{(N_3+1)}, \vv^*, \va^*, \vb^*)$.
 \end{algorithmic}

  

\end{algorithm}

\begin{remark}
While Algorithm 1 involves multiple stages, this structure reflects distinct practical objectives rather than arbitrary complexity. In real-world implementations, pretraining and test-time training naturally operate as separate phases. Furthermore, Stage I utilizes a self-supervised objective for initialization, which fundamentally differs from the supervised objectives in the later stages; thus, it must be treated as a distinct step even in practice. The only structural deviation from a typical implementation is the separation of Stage II (Attention learning) and Stage III (MLP learning). This layer-wise decoupling is primarily introduced for theoretical analysis: it is a standard technique for obtaining rigorous guarantees and has been widely adopted in prior works \citep{damian2022neural,pmlr-v195-abbe23a,lee2024neuralnetworklearnslowdimensional,oko2024pretrainedtransformerefficientlylearns}.
\end{remark}

\section{Main result}
Based on the problem settings above, we are now ready to present our main result.

\begin{reptheorem}{theo:main}{Formal}
  We denote the link function drawn in inference-time as $
  \sigma_*^{\mathrm{test}}$.  Suppose that $T_{pt},N_{pt}=\tilde{\Omega}(r^2d^{Q+2})$, $N_1 = \tilde{\Omega}(r^{\mathrm{ge}( \sigma_*^{\mathrm{test}})+2})$, $N_{new} = \tilde{\Omega}(r^{\mathrm{ge}( \sigma_*^{\mathrm{test}})+2})$,  $N_2=\tilde{\Theta}(r^2)$.  Moreover, we assume that $m, N_{pt}, T_{pt}, N_1, N_2, N_3, N_4=O(\mathrm{poly}(d))$ and there exists $\alpha_r > 0$ that satisfies $r=\Omega(d^{\alpha_r})$.  When we fix $d$ large enough,  then there exists $\lambda_{pt},\lambda_1,\eta_{pt}, \eta_1, \eta_2$ such that the prediction risk is low with probability at least 0.99 over the training data and random initialization.  Concretely, for the model trained via Algorithm~\ref{alg:pretraining}, we have that 
  \begin{align*}
       &|\mathcal{R}_{f_{\mathrm{TF}}}(\vu,\vv^*,\va^*,\vb^*)-\tau|\\
       =&\tilde{O}(N_4^{-1/2})+\tilde{O}(m^{-1/2})+\tilde{O}\Bigg(\sqrt{\frac{r\sqrt{r}}{N_3}}\Bigg), 
  \end{align*}
with probability at least 0.99. 
\end{reptheorem}
The proof is given in Appendix \ref{sec:ProofOfMainTheorem}. 
Our result has the following advantages compared to previous works.

\paragraph{(i) Nonlinearity}
While \citet{gozeten2025testtimetrainingprovablyimproves} also investigates test-time training combined with ICL, their analysis is restricted to linear datasets and a transformer with linear attention.  By contrast, we consider a more complex and general problem of prediction for a single-index model with a nonlinear polynomial.  In addition, the student model in our work utilizes nonlinear softmax attention, thereby extending the analysis beyond linear transformers.

\paragraph{(ii) The adaptability to link function}
Our framework has the flexibility of using a task-specific link function $\sigma_*^t$.  In the previous work by \citet{nishikawa2025nonlinear}, the link function is fixed throughout all the tasks (only $\beta$ varies across different tasks) because the training of the relevant layer is done only in pretraining.  In contrast, the use of test-time training allows the model to adapt to the characteristics of each task.  Therefore, our algorithm is effective even when the underlying link function varies from one task to another.

\paragraph{(iii) The convergence rate of predictive loss with respect to n}
The analysis by \citet{nishikawa2025nonlinear} guarantees that the ICL risk is $o_{d}(1)$, but it provides no guarantee that the risk diminishes as $n \rightarrow \infty$.  In fact, this is an inherent consequence of adopting softmax attention, as we discussed in Remark~\ref{rem:inevitableloss}.  In contrast, our result overcomes that limitation. When $d$ is a sufficiently large constant, our theory ensures that the prediction risk can be made arbitrarily close to the inevitable noise $\tau$ by increasing the context size $N_{test}$ and the network width $m$.  This is because the increase in the test-time context length $N_3$ allows the vector $\vu$ to converge to the ground truth $\beta$.  

In addition, it is worth noting that we achieve efficient sample complexity.  Theorem~\ref{theo:main} implies that $N_{test} = \tilde{\Omega}(r^{2+ \mathrm{ge}( \sigma_*^{\mathrm{test}})})$ to ensure low predictive loss $\tilde{O}(\sqrt{\frac{r\sqrt{r}}{N_{test}}})$, yielding two key benefits.  First, our sample complexity does not depend on the entire dimension $d$, which means this model adapts to the low-dimensionality of $\beta$.  Second, our sample complexity is independent of either $\mathrm{deg}(\sigma_*^{\mathrm{test}})$ or $\mathrm{ie}(\sigma_*^{\mathrm{test}})$, which breaks the CSQ upper bound.  Moreover, for polynomials where $\mathrm{ge}(\sigma_*) \leq 2$ holds, the required number of samples is small.  In conclusion, our sample complexity is nearly optimal.
It is instructive to compare this with \citet{nishikawa2025nonlinear}. Although they also achieve sample complexity with these two features ($N_{test}=\tilde{\Omega}(r^{3\mathrm{ge}(\sigma_*)/2})$), their result guarantees convergence with respect to $d$, not $N_{test}$. This implies that their result is valid only in the asymptotic limit where $d \to \infty$. Moreover, they do not provide the convergence rate with respect to $d$. In contrast, our result holds for any sufficiently large fixed $d$.

These advantages are further supported by numerical experiments in section ~\ref{sec:experiment}.

\subsection{Proof sketch}
The outline of the proof of our main theorem proceeds in three stages. First, we achieve weak recovery—i.e., a nontrivial overlap between $\beta$ and $\vu$—by leveraging the output of the pretrained attention layer as a teacher signal. The attention mechanism reduces the information exponent of the link function $\mathrm{ie}(\sigma_*)$  to its general exponent $\mathrm{ge}(\sigma_*)$. We exploit this property to significantly accelerate the initial alignment of $\vu$ compared to vanilla SGD. Once weak recovery is established, we show that subsequent training on the ground-truth signal $y$ drives the model toward strong recovery ($\inner{\beta}{\vu} \geq 1-\varepsilon$). Finally, the optimization of the MLP layer allows the model to adapt to the task-specific nonlinearity of the target link function.

\subsubsection{Exploiting the pretrained attention matrix}
As shown in \citet{nishikawa2025nonlinear}, the attention matrix $\Gamma^*$ after pretraining captures the $r$-dimensional Span of $\beta$:
\begin{lemma}[Informal, Correspond to Proposition 22 in \citet{nishikawa2025nonlinear}]\label{lemm:pretraining2}
     After running the pretraining in Algorithm \ref{alg:pretraining} with $T_{pt},N_{pt} = \tilde{\Omega}(r^2d^{Q+2})$, it holds that
    \begin{equation*}
        \Gamma^*\approx c_r\mathbb{E}_{\beta}[\beta\beta^\top] 
    \end{equation*}
    with high probability, where $c_r=\tilde{\Theta}(\sqrt{r})$.

\end{lemma}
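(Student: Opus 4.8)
\textbf{Proof plan for Lemma \ref{lemm:pretraining2} (the pretraining claim $\Gamma^*\approx c_r\mathbb{E}_\beta[\beta\beta^\top]$).}

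The plan is to analyze the single gradient step taken on $\Gamma$ in the pretraining block of Algorithm~\ref{alg:pretraining}, following the one-step feature-learning strategy of \citet{nishikawa2025nonlinear} and \citet{damian2022neural}. Since $\Gamma(0)\propto \mI_d/\sqrt{d}$ and $\va(0)=\alpha_{pt}\vone_m$, $\vb(0)=\vzero_m$, the update is $\Gamma^*=\Gamma(0)-\eta_{pt}\,\widehat{G}-\eta_{pt}\lambda_{pt}\Gamma(0)$ where $\widehat{G}=\frac{1}{2T_{pt}}\sum_t\nabla_\Gamma(f_{\mathrm{IC}}(\vx^t)-y^t)^2$ is the empirical gradient. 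The first step is to compute the \emph{population} gradient $G_\infty:=\E[\nabla_\Gamma(f_{\mathrm{IC}}(\vx)-y)^2]$ at initialization, where the expectation is over a fresh prompt (both the $N_{pt}$ in-context pairs and the query) and the task $(\beta,\sigma_*)$. The key structural fact is that at $\Gamma(0)\propto\mI_d$ the attention weights $\mathrm{e}^{\vx_i^\top\Gamma\vx/\rho}$ are essentially isotropic, so the attention output concentrates (as $N_{pt}\to\infty$) around a quantity of the form $\E_{\vx_1}[\sigma_*(\inner{\beta}{\vx_1})\exp(\inner{\Gamma(0)\vx}{\vx_1}/\rho)]/Z$; expanding the exponential in Hermite polynomials and using $\E[\mathrm{He}_k(\inner{\beta}{\vx_1})\mathrm{He}_\ell(\inner{\vx}{\vx_1})]$ picks out, to leading order, the $\mathrm{He}_Q$ component of $\sigma_*$, producing a term proportional to $c_Q\inner{\beta}{\vx}^{Q}$ (the information-exponent-$Q$ behavior, which is why the $d^{Q+2}$ appears). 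Differentiating the squared loss in $\Gamma$ brings down one more factor of $\vx\vx^\top$-type structure; carefully carrying out the Hermite bookkeeping (and using that $\va(0)$ is a constant vector so the MLP acts linearly on average through $\sigma'$) should yield $G_\infty = -c\,\E_\beta[c_Q\,\beta\beta^\top]+(\text{lower-order isotropic terms})$ for an explicit constant $c>0$; since $\E[c_Q^t]=\Theta(1)$ this is $-c'\,\E_\beta[\beta\beta^\top]$ up to scaling. Choosing $\eta_{pt}=\tilde\Theta(\sqrt r\,/c')$ and $\lambda_{pt}$ to cancel the isotropic $\Gamma(0)$ piece then gives $-\eta_{pt}G_\infty\approx c_r\,\E_\beta[\beta\beta^\top]$ with $c_r=\tilde\Theta(\sqrt r)$, exactly as claimed. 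This entire computation is essentially quoted from Proposition 22 of \citet{nishikawa2025nonlinear}, so in the paper I would state the population-gradient identity and cite their derivation rather than reprove it.

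The second step is concentration: bounding $\|\widehat{G}-G_\infty\|_2$ (or the relevant Frobenius/operator norm of the error) to show the empirical one-step update is close to the population one. There are two sources of error. (a) For a \emph{fixed} prompt, the attention output with $N_{pt}$ contexts deviates from its $N_{pt}\to\infty$ limit; this is a self-normalized sum of i.i.d.\ terms $y_i\mathrm{e}^{y_i/\rho}\mathrm{e}^{\vx_i^\top\Gamma(0)\vx/\rho}$ over the denominator, and standard sub-exponential/Bernstein bounds (plus control of the denominator away from zero) give a deviation of order $\tilde O(N_{pt}^{-1/2})$ times polynomial factors in $d$. (b) Averaging the per-prompt gradients over $T_{pt}$ tasks: another i.i.d.\ concentration, error $\tilde O(T_{pt}^{-1/2}\cdot\mathrm{poly}(d))$. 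The polynomial-in-$d$ factors here are what force the requirement $T_{pt},N_{pt}=\tilde\Omega(r^2 d^{Q+2})$: the signal strength in $G_\infty$ is $\sim c_Q d^{-\Theta(Q)}$-scale in the relevant directions (because extracting the $\mathrm{He}_Q$ component of a function of $\inner{\beta}{\vx}$ against $\inner{\vx}{\vx_i}$ costs factors of $d^{-1/2}$ per Hermite degree when $\beta$ lives in $\R^d$), so to make the noise smaller than the signal one needs sample sizes growing like $d^{Q}$ times the extra $r^2$ slack coming from the $\E_\beta$ averaging over the $r$-dimensional sphere and the desired accuracy of $\Gamma^*$ in operator norm. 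I would organize (a) and (b) as a single lemma: with probability $\ge 1-d^{-\omega(1)}$, $\|\widehat G-G_\infty\|_2=\tilde O\big(\sqrt{d^{Q+2}/N_{pt}}+\sqrt{d^{Q+2}/T_{pt}}\big)\cdot(\text{signal scale})$, which under the stated sample-size assumption is a lower-order fraction of $\|G_\infty\|$.

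The third step is to combine: $\Gamma^*=\Gamma(0)(1-\eta_{pt}\lambda_{pt})-\eta_{pt}G_\infty-\eta_{pt}(\widehat G-G_\infty)$. Tuning $\lambda_{pt}$ so that $(1-\eta_{pt}\lambda_{pt})\Gamma(0)$ exactly kills the isotropic remainder in $-\eta_{pt}G_\infty$ (both are multiples of $\mI_d/\sqrt d$), we get $\Gamma^*=c_r\E_\beta[\beta\beta^\top]+\eta_{pt}\cdot(\text{error})$, and by the previous step $\|\eta_{pt}(\widehat G-G_\infty)\|_2$ is $o(1)$ relative to $c_r$ — indeed small enough that the ``$\approx$'' in the lemma (which the rest of the paper uses at the granularity of ``$\inner{\vu^{(0)}}{\beta}$ starts at $\Theta(1/\sqrt r)$ scale'', ``the span is captured up to $\mathrm{polylog}$ errors'') is satisfied. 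The main obstacle is \textbf{Step 1}, the exact population-gradient computation: keeping track of all Hermite cross-terms in $\E_{\vx_1}[\sigma_*(\inner{\beta}{\vx_1})\mathrm{e}^{\vx_1^\top\Gamma(0)\vx/\rho+y_1/\rho}]$, differentiating through the softmax normalization in $\Gamma$, and verifying that the leading term is genuinely $\propto\E_\beta[\beta\beta^\top]$ rather than, say, a higher power of the projector — together with the delicate $d$-scaling that produces the $d^{Q+2}$ threshold. Since this is precisely Proposition 22 of \citet{nishikawa2025nonlinear}, the cleanest route is to invoke their result directly for the pretraining output and devote our own effort to the test-time stages; I would present Step 1 as a cited computation, give the concentration argument of Step 2 in full (it is short and standard), and make Step 3 a one-line assembly.
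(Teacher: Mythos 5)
Your proposal follows essentially the same route as the paper: the paper likewise invokes Proposition 22 (together with Lemmas 19--21) of \citet{nishikawa2025nonlinear} instead of re-deriving the one-step gradient analysis, and the only point it actually verifies is the one you mention in passing --- that with a task-dependent link function the task-averaged gradient still has leading term proportional to $\mathbb{E}_{\beta}[\beta\beta^\top]$, because $\mathbb{E}_t[c_Q^t]=\Theta(1)\neq 0$ makes the degree-$(Q-1)$ Hermite contribution survive the averaging, so the argument goes through with $\mathrm{ie}(\sigma_*)=Q$. In other words, the paper compresses your Steps 1--3 into that single check plus citations, so your plan is correct and matches the paper's proof.
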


    This means that $\Gamma^*$ can project vectors into the $r$-dimensional subspace $\mathrm{Supp}(\beta)$.  Therefore, by multiplying $\sqrt{r}\Gamma^*$ to the context $\mX_{N_1},\mX_{N_2}$ and the gradient (the coefficient $\sqrt{r}$ is just for adjusting the scale), we can make the problem virtually $r$-dimensional, even though the entire dimension is $d$.  This leads to the sample complexity only scaling up with $r$, not the whole dimension $d$.    
    
\subsubsection{Weak recovery}\label{section:weakrecovery}
First, we initialize $\vu$ by one step gradient descent using the output from the original model $g(\Gamma_*, \mX_{N_1},\vy_{N_1},w_i)$.  We do not use the signal $y$ in this process.  Intuitively, this self-distillation can be seen as a prevention of catastrophic forgetting.  Taking the information from the original model makes the LoRA model similar to the original model, thereby preserving the desired features of the original model.  \\ 
To clarify why we need to use the original model as teacher, consider training the LoRA model with the true signal $y$.  Then, by calculating the gradient, we can get the following:
\begin{lemma}[Informal]\label{lemm:gradresult}
The following holds with high probability:
\begin{align*}
    &\frac{1}{2}\sqrt{r}\Gamma^* \nabla_\vu{( f_{\mathrm{IC}}(\Gamma,\mX_N,\vy_N,\vx)-y)^2} \\
    &\approx \tilde{\Theta}(\alpha m) \inner{\beta}{\vu}  \{(\sqrt{r})^{-(\mathrm{ie}(\sigma_*)-1)}+\inner{\beta}{\vu}^{2\mathrm{ie}(\sigma_*)-2}\} \beta. 
\end{align*}
\end{lemma}
See Appendix~\ref{appx:calc} for details.  At initialization $\inner{\beta}{\vu}=\tilde{O}(1/r)$ holds, so the signal strength is $r^{\Theta(\mathrm{ie}(\sigma_*^{\mathrm{test}}))}$.  Therefore, when we train $\vu$ from the signal $y$, we need at least $r^{\Theta(\mathrm{ie}(\sigma_*^{\mathrm{test}}))}$ data.\\
However, using the original model as teacher signal reduces this data length to $r^{\Theta(\mathrm{ge}(\sigma_*^{\mathrm{test}}))}$.  As \citet{nishikawa2025nonlinear} showed, the attention layer after pretraining can compute $\inner{\beta}{\vx}^{\mathrm{ge}( \sigma_*^{\mathrm{test}})}$ in-context.  Then, noting that $\mathrm{ie}(\mathrm{He}_{\mathrm{ge}(\sigma_*)})=\mathrm{ge}(\sigma_*)$, when we learn from the original model, the signal strength becomes $r^{\Theta(\mathrm{ge}(\sigma_*^{\mathrm{test}}))}$.  This improved signal strength results in the required data length $N_{test}=r^{\Theta(\mathrm{ge}(\sigma_*^{\mathrm{test}}))}$, which surpasses CSQ limit.\\

\subsubsection{Strong recovery}
Weak recovery is insufficient for reliably predicting $\inner{\beta}{\vx}$ using only $\vu$.  Therefore, we further optimize $\vu$ until we achieve strong recovery, defined as $\inner{\beta}{\vu} \geq 1-\varepsilon$ for a small error tolerance $\varepsilon>0$.  The attainment of weak recovery is a crucial turning point; it allows the model to extract a meaningful signal of $\beta$ from $y$, significantly reducing the data requirements for the subsequent phase. Specifically, once $\inner{\beta}{\vu^{(1)}} > 1/\mathrm{polylog}(d)$ is established, the signal strength becomes $O(1/\mathrm{polylog}(d))$, which is independent of $\mathrm{ge}( \sigma_*^{\mathrm{test}})$. This independence ensures that the sample size needed for strong recovery remains unaffected by $\mathrm{ge}( \sigma_*^{\mathrm{test}})$.
Moreover, our approach achieves superior sample complexity compared to the prior work by \citet{lee2024neuralnetworklearnslowdimensional}.  While their work suggested a linear increase in $\inner{\beta}{\vu^{(n)}}$, which resulted in the required data length $N=\Theta_{\varepsilon}(\varepsilon^{-2})$, we find that beyond a certain point, the error $1-\inner{\beta}{\vu^{(n)}}$ converges to $0$ geometrically.  This acceleration yields an improved sample complexity of  $N=\Theta_{\varepsilon}(\frac{1}{\varepsilon}\log{\frac{1}{\varepsilon}})$.  See Appendix~\ref{appx:strong} for further details.

\subsubsection{Estimation of the link function}
Finally, we train MLP layer to predict the link function $ \sigma_*^{\mathrm{test}}$.  Following prior work \citet{nishikawa2025nonlinear}, we show that training MLP layer leads to small empirical loss.  Moreover, we derive the upper bound of the predictive risk using Rademacher complexity.  See Appendix~\ref{appx:mlp} for details.

\section{Synthetic experiment}\label{sec:experiment}
We conducted a numerical experiment to examine the effectiveness of TTT compared to ICL.  To align the experimental setup with our theoretical framework, we employed a 2-layer GPT-2 model \citep{Radford2019LanguageMA} to learn Gaussian single-index functions.  See Appendix~\ref{appx:experiment} for further experimental details.

The ambient and intrinsic dimensions were set to $d=r=4$.  For each task $t$, the data was generated as follows: $\vx_1^t,\cdots,\vx_N^t,\vx^t \overset{\text{i.i.d.}}{\sim} \mathcal {N}(0,\mI_d) $, $\beta^t \sim \mathrm{Unif}(\mathbb{S}^{r-1})$ and $y_i^t = \sigma_*^t(\inner{\beta}{\vx_i}) = \frac{1}{\sqrt{3!}}\mathrm{He}_3(\inner{\beta}{\vx_i}) + \frac{c^t}{\sqrt{4!}}\mathrm{He}_4(\inner{\beta}{\vx_i})$, where $c^t \sim \mathrm{Unif}(-0.5,0.5)$.  We compared the following two settings:
\begin{enumerate}
    \item In-context learning, configurated as \citet{garg2023transformerslearnincontextcase}, \citet{oko2024pretrainedtransformerefficientlylearns} and \citet{nishikawa2025nonlinear}, where the model makes predictions solely based on the prompt without weight updates.  The model was pretrained on a standard task where the link function $\sigma_*^t$ varied across all prompts.
    \item Test-time training:
    We introduced a strategic pretraining phase: to facilitate the convergence of the attention mechanism on the target subspace, we fixed the link function as $\sigma_*^t = \frac{1}{\sqrt{3!}}\mathrm{He}_3$ during the pretraining.  Once pretrained, the model was evaluated on the same challenging task as ICL, where the link function varies and requires test-time adaptation via LoRA.
\end{enumerate}
Figure ~\ref{fig:main} highlights the result.  The predictive error of vanilla ICL remains high and fails to decrease with increasing context length, suggesting that ICL alone cannot adapt to shifts in the link function.  In contrast, TTT exhibits steady convergence, with the error diminishing significantly as more context examples are provided.  While TTT shows initial instability at very short context lengths due to high-learning-rate updates, it ultimately demonstrates a superior ability to fit task-specific non-linearities.  Overall, this result demonstrates a great advantage of TTT: TTT allows the model to adapt to task-specific link functions. 

\begin{figure}[tb]
    \centering 
    \includegraphics[width=0.9\linewidth]{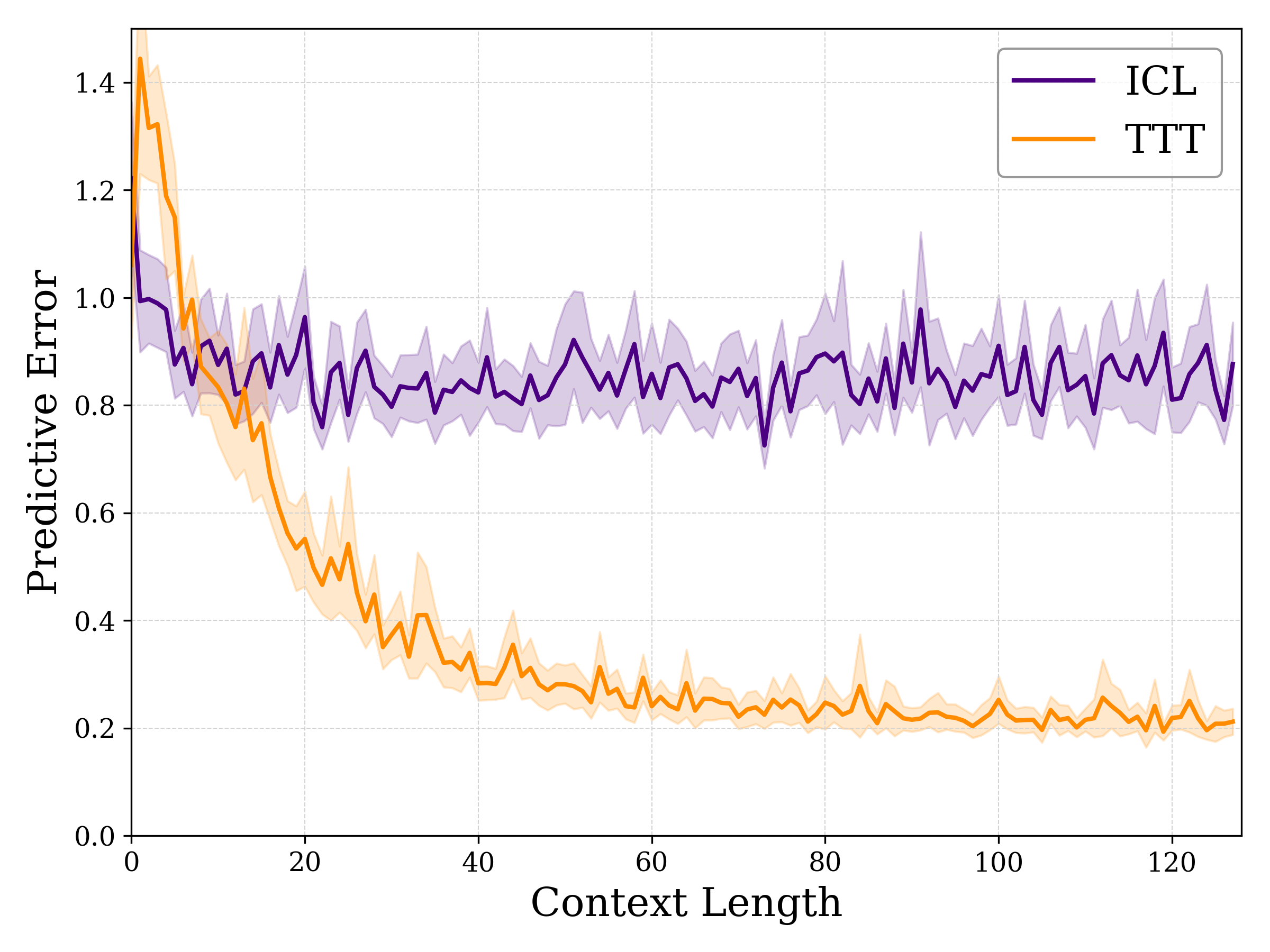}
    \caption{The predictive error of In-context learning (ICL) and Test-time training (TTT) for a pretrained GPT-2 model on single-index polynomials (see Section~\ref{sec:experiment} for details).  While vanilla ICL fails to adapt to task-specific link functions, the error for TTT steadily decreases as context length grows. This highlights TTT's ability to fit the model's nonlinearity at inference time.}
    \label{fig:main}
\end{figure}

To further assess the capabilities of TTT, we examined its ability to leverage low-dimensional task structures.  We fixed the intrinsic dimension at $r=4$ and compared the predictive error across varying ambient dimensions $d \in \{4,8,16\}$. As illustrated in Figure~\ref{fig:comparison_dr}, the model performance remains virtually unchanged despite the fourfold increase in the ambient dimension. This empirical result provides strong evidence for our main theorem: the sample complexity of TTT-equipped transformers is governed by the intrinsic dimension $r$ rather than the ambient dimension $d$, showcasing the model's ability to exploit low-dimensional task structures effectively.

\begin{figure}[tb]
    \centering 
    \includegraphics[width=0.9\linewidth]{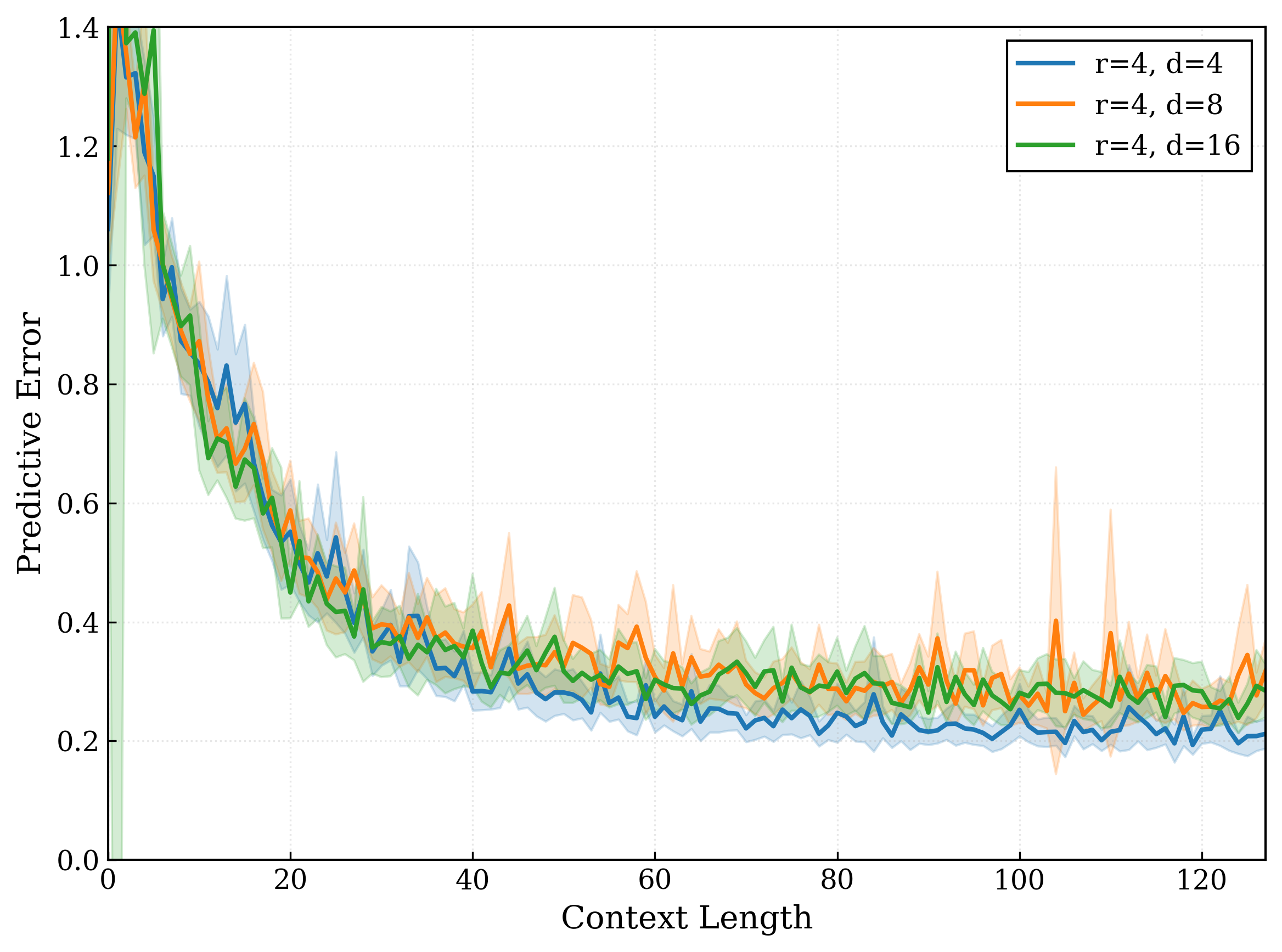}
    \caption{Predictive error of TTT across different ambient dimensions $d \in \{4,8,16\}$ with a fixed intrinsic dimension $r=4$. The overlap of the convergence curves demonstrates that TTT is robust to increases in the ambient dimension, empirically verifying that sample complexity scales with the intrinsic dimensionality $r$.}
    \label{fig:comparison_dr}
\end{figure}

\section{Conclusion}
We have investigated test-time training combined with in-context learning.  We provided an upper bound of the predictive loss in terms of $m$ and $N$.  Our result shows that for a large test-time sample size $N_{test}$, the predictive loss approaches the inevitable noise $\tau$, even when $d$ remains finite.

\paragraph{Future work and limitation}
We outline some limitations and future research directions.
\begin{itemize}
    \item We only considered the single-index model where the link function $\sigma_*$ is a polynomial.  Investigating a more general class of input-output relationships is a possible extension of this work.
    \item In this work, we assumed that the test-time distribution of the feature vector $\beta$ is the same as in the pretraining.  Considering a distribution shift, such as investigating the situation where $\mathrm{Supp}(\beta)_{test}$ is slightly different from $\mathrm{Supp}(\beta)_{pt}$ is another interesting direction.
    \item Algorithm \ref{alg:pretraining} divides the test-time training into 3 stages, training different layers sequentially.  This differs from the typical situation where the entire model is trained at once, as was done in the experiment.  Whether a similar upper bound of the predictive risk can be established in such settings remains to be examined.
\end{itemize}
    

\section*{Impact Statement}
This paper presents work whose goal is to advance the field of machine learning. There are many potential societal consequences of our work, none of which we feel must be specifically highlighted here.

\section*{Acknowledgement}
We thank Yujin Song for his insightful comment and offering us the baseline code for the numerical experiment. KK was partially supported by JST CREST (JPMJCR2015). TS was partially supported by JSPS KAKENHI (24K02905) and JST CREST (JPMJCR2115). This research is supported by the National Research Foundation, Singapore, Infocomm Media Development Authority under its Trust Tech Funding Initiative, and the Ministry of Digital Development and Information under the AI Visiting Professorship Programme (award number AIVP-2024-004). Any opinions, findings and conclusions or recommendations expressed in this material are those of the author(s) and do not reflect the views of National Research Foundation, Singapore, Infocomm Media Development Authority, and the Ministry of Digital Development and Information.

\bibliography{references_icml2026}

@inproceedings{
nishikawa2025nonlinear,
title={Nonlinear transformers can perform inference-time feature learning},
author={Naoki Nishikawa and Yujin Song and Kazusato Oko and Denny Wu and Taiji Suzuki},
booktitle={Forty-second International Conference on Machine Learning},
year={2025},
url={https://openreview.net/forum?id=xQTSvP57C3}
}

@inproceedings{
oko2024pretrainedtransformerefficientlylearns,
title={Pretrained Transformer Efficiently Learns Low-Dimensional Target Functions In-Context},
author={Kazusato Oko and Yujin Song and Taiji Suzuki and Denny Wu},
booktitle={The Thirty-eighth Annual Conference on Neural Information Processing Systems},
year={2024},
url={https://openreview.net/forum?id=uHcG5Y6fdB}
}

@article{lee2024neuralnetworklearnslowdimensional,
      title={Neural network learns low-dimensional polynomials with {SGD} near the information-theoretic limit}, 
      author={Jason D. Lee and Kazusato Oko and Taiji Suzuki and Denny Wu},
      year={2024},
      journal={arXiv preprint arXiv:2406.01581},
      eprint={2406.01581},
      archivePrefix={arXiv},
      primaryClass={cs.LG},
      url={https://arxiv.org/abs/2406.01581}, 
}

@article{garg2023transformerslearnincontextcase,
      title={What Can Transformers Learn In-Context? A Case Study of Simple Function Classes}, 
      author={Shivam Garg and Dimitris Tsipras and Percy Liang and Gregory Valiant},
      year={2023},
      journal={arXiv preprint arXiv:2208.01066},
      eprint={2208.01066},
      archivePrefix={arXiv},
      primaryClass={cs.CL},
      url={https://arxiv.org/abs/2208.01066}, 
}

@inproceedings{
gozeten2025testtimetrainingprovablyimproves,
title={Test-Time Training Provably Improves Transformers as In-context Learners},
author={Halil Alperen Gozeten and Muhammed Emrullah Ildiz and Xuechen Zhang and Mahdi Soltanolkotabi and Marco Mondelli and Samet Oymak},
booktitle={Forty-second International Conference on Machine Learning},
year={2025},
url={https://openreview.net/forum?id=bma2FB5MNs}
}

@article{damian2022neural,
      title={Neural Networks can Learn Representations with Gradient Descent}, 
      author={Alex Damian and Jason D. Lee and Mahdi Soltanolkotabi},
      year={2022},
      journal={arXiv preprint arXiv:2206.15144},
      eprint={2206.15144},
      archivePrefix={arXiv},
      primaryClass={cs.LG},
      url={https://arxiv.org/abs/2206.15144}, 
}

@article{sun2020testtimetrainingselfsupervisiongeneralization,
      title={Test-Time Training with Self-Supervision for Generalization under Distribution Shifts}, 
      author={Yu Sun and Xiaolong Wang and Zhuang Liu and John Miller and Alexei A. Efros and Moritz Hardt},
      year={2020},
      journal={arXiv preprint arXiv:1909.13231},
      eprint={1909.13231},
      archivePrefix={arXiv},
      primaryClass={cs.LG},
      url={https://arxiv.org/abs/1909.13231}, 
}

@inproceedings{
akyürek2025surprisingeffectivenesstesttimetraining,
title={The Surprising Effectiveness of Test-Time Training for Few-Shot Learning},
author={Ekin Aky{\"u}rek and Mehul Damani and Adam Zweiger and Linlu Qiu and Han Guo and Jyothish Pari and Yoon Kim and Jacob Andreas},
booktitle={Forty-second International Conference on Machine Learning},
year={2025},
url={https://openreview.net/forum?id=asgBo3FNdg}
}

@inproceedings{Liu2021TTTWD,
  title={TTT++: When Does Self-Supervised Test-Time Training Fail or Thrive?},
  author={Yuejiang Liu and Parth Kothari and Bastien van Delft and Baptiste Bellot-Gurlet and Taylor Mordan and Alexandre Alahi},
  booktitle={Neural Information Processing Systems},
  year={2021},
  url={https://api.semanticscholar.org/CorpusID:247410579}
}

@inproceedings{
NEURIPS2023_4267d84c,
title={Test-time Training for Matching-based Video Object Segmentation},
author={Juliette Bertrand and Giorgos Kordopatis-Zilos and Yannis Kalantidis and Giorgos Tolias},
booktitle={Thirty-seventh Conference on Neural Information Processing Systems},
year={2023},
url={https://openreview.net/forum?id=9QsdPQlWiE}
}

@inproceedings{Radford2019LanguageMA,
  title={Language Models are Unsupervised Multitask Learners},
  author={Alec Radford and Jeff Wu and Rewon Child and David Luan and Dario Amodei and Ilya Sutskever},
  year={2019},
  url={https://api.semanticscholar.org/CorpusID:160025533}
}

@inproceedings{brown2020languagemodelsfewshotlearners,
 author = {Brown, Tom and Mann, Benjamin and Ryder, Nick and Subbiah, Melanie and Kaplan, Jared D and Dhariwal, Prafulla and Neelakantan, Arvind and Shyam, Pranav and Sastry, Girish and Askell, Amanda and Agarwal, Sandhini and Herbert-Voss, Ariel and Krueger, Gretchen and Henighan, Tom and Child, Rewon and Ramesh, Aditya and Ziegler, Daniel and Wu, Jeffrey and Winter, Clemens and Hesse, Chris and Chen, Mark and Sigler, Eric and Litwin, Mateusz and Gray, Scott and Chess, Benjamin and Clark, Jack and Berner, Christopher and McCandlish, Sam and Radford, Alec and Sutskever, Ilya and Amodei, Dario},
 booktitle = {Advances in Neural Information Processing Systems},
 editor = {H. Larochelle and M. Ranzato and R. Hadsell and M.F. Balcan and H. Lin},
 pages = {1877--1901},
 publisher = {Curran Associates, Inc.},
 title = {Language Models are Few-Shot Learners},
 url = {https://proceedings.neurips.cc/paper_files/paper/2020/file/1457c0d6bfcb4967418bfb8ac142f64a-Paper.pdf},
 volume = {33},
 year = {2020}
}

@article{vonoswald2023transformerslearnincontextgradient,
      title={Transformers learn in-context by gradient descent}, 
      author={Johannes von Oswald and Eyvind Niklasson and Ettore Randazzo and João Sacramento and Alexander Mordvintsev and Andrey Zhmoginov and Max Vladymyrov},
      year={2023},
      journal={arXiv preprint arXiv:2212.07677},
      eprint={2212.07677},
      archivePrefix={arXiv},
      primaryClass={cs.LG},
      url={https://arxiv.org/abs/2212.07677}, 
}

@article{zhang2024incontextlearninglineartransformer,
      title={In-Context Learning of a Linear Transformer Block: Benefits of the {MLP} Component and One-Step {GD} Initialization}, 
      author={Ruiqi Zhang and Jingfeng Wu and Peter L. Bartlett},
      year={2024},
      journal={arXiv preprint arXiv:2402.14951},
      eprint={2402.14951},
      archivePrefix={arXiv},
      primaryClass={stat.ML},
      url={https://arxiv.org/abs/2402.14951}, 
}

@article{gatmiry2024loopedtransformerslearnimplement,
      title={Can Looped Transformers Learn to Implement Multi-step Gradient Descent for In-context Learning?}, 
      author={Khashayar Gatmiry and Nikunj Saunshi and Sashank J. Reddi and Stefanie Jegelka and Sanjiv Kumar},
      year={2024},
      journal={arXiv preprint arXiv:2410.08292},
      eprint={2410.08292},
      archivePrefix={arXiv},
      primaryClass={cs.LG},
      url={https://arxiv.org/abs/2410.08292}, 
}

@article{bai2020linearizationquadratichigherorderapproximation,
      title={Beyond Linearization: On Quadratic and Higher-Order Approximation of Wide Neural Networks}, 
      author={Yu Bai and Jason D. Lee},
      year={2020},
      journal={arXiv preprint arXiv:1910.01619},
      eprint={1910.01619},
      archivePrefix={arXiv},
      primaryClass={cs.LG},
      url={https://arxiv.org/abs/1910.01619}, 
}

@article{ba2022highdimensionalasymptoticsfeaturelearning,
      title={High-dimensional Asymptotics of Feature Learning: How One Gradient Step Improves the Representation}, 
      author={Jimmy Ba and Murat A. Erdogdu and Taiji Suzuki and Zhichao Wang and Denny Wu and Greg Yang},
      year={2022},
      journal={arXiv preprint arXiv:2205.01445},
      eprint={2205.01445},
      archivePrefix={arXiv},
      primaryClass={stat.ML},
      url={https://arxiv.org/abs/2205.01445},      
}

@article{bietti2022learningsingleindexmodelsshallow,
      title={Learning Single-Index Models with Shallow Neural Networks}, 
      author={Alberto Bietti and Joan Bruna and Clayton Sanford and Min Jae Song},
      year={2022},
      journal={arXiv preprint arXiv:2210.15651},
      eprint={2210.15651},
      archivePrefix={arXiv},
      primaryClass={cs.LG},
      url={https://arxiv.org/abs/2210.15651}, 
}

@inproceedings{
mousavihosseini2023neuralnetworksefficientlylearn,
title={Neural Networks Efficiently Learn Low-Dimensional Representations with {SGD}},
author={Alireza Mousavi-Hosseini and Sejun Park and Manuela Girotti and Ioannis Mitliagkas and Murat A Erdogdu},
booktitle={The Eleventh International Conference on Learning Representations },
year={2023},
url={https://openreview.net/forum?id=6taykzqcPD}
}

@article{Berthier_2024,
   title={Learning Time-Scales in Two-Layers Neural Networks},
   ISSN={1615-3383},
   url={http://dx.doi.org/10.1007/s10208-024-09664-9},
   DOI={10.1007/s10208-024-09664-9},
   journal={Foundations of Computational Mathematics},
   publisher={Springer Science and Business Media LLC},
   author={Berthier, Raphaël and Montanari, Andrea and Zhou, Kangjie},
   year={2024},
   month=aug }

@article{dherin2025learningtrainingimplicitdynamics,
      title={Learning without training: The implicit dynamics of in-context learning}, 
      author={Benoit Dherin and Michael Munn and Hanna Mazzawi and Michael Wunder and Javier Gonzalvo},
      year={2025},
      journal={arXiv preprint arXiv:2507.16003},
      eprint={2507.16003},
      archivePrefix={arXiv},
      primaryClass={cs.CL},
      url={https://arxiv.org/abs/2507.16003}, 
}

@article{cheng2024transformersimplementfunctionalgradient,
      title={Transformers Implement Functional Gradient Descent to Learn Non-Linear Functions In Context}, 
      author={Xiang Cheng and Yuxin Chen and Suvrit Sra},
      year={2024},
      journal={arXiv preprint arXiv:2312.06528},
      eprint={2312.06528},
      archivePrefix={arXiv},
      primaryClass={cs.LG},
      url={https://arxiv.org/abs/2312.06528}, 
}

@article{nichani2024transformerslearncausalstructure,
      title={How Transformers Learn Causal Structure with Gradient Descent}, 
      author={Eshaan Nichani and Alex Damian and Jason D. Lee},
      year={2024},
      journal={arXiv preprint arXiv:2402.14735},
      eprint={2402.14735},
      archivePrefix={arXiv},
      primaryClass={cs.LG},
      url={https://arxiv.org/abs/2402.14735}, 
}

@article{JMLR:v22:20-1288,
  author  = {Gerard Ben Arous and Reza Gheissari and Aukosh Jagannath},
  title   = {Online stochastic gradient descent on non-convex losses from high-dimensional inference},
  journal = {Journal of Machine Learning Research},
  year    = {2021},
  volume  = {22},
  number  = {106},
  pages   = {1--51},
  url     = {http://jmlr.org/papers/v22/20-1288.html}
}

@InProceedings{pmlr-v75-dudeja18a,
  title = 	 {Learning Single-Index Models in {G}aussian Space},
  author =       {Dudeja, Rishabh and Hsu, Daniel},
  booktitle = 	 {Proceedings of the 31st  Conference On Learning Theory},
  pages = 	 {1887--1930},
  year = 	 {2018},
  editor = 	 {Bubeck, Sébastien and Perchet, Vianney and Rigollet, Philippe},
  volume = 	 {75},
  series = 	 {Proceedings of Machine Learning Research},
  month = 	 {06--09 Jul},
  publisher =    {PMLR},
  pdf = 	 {http://proceedings.mlr.press/v75/dudeja18a/dudeja18a.pdf},
  url = 	 {https://proceedings.mlr.press/v75/dudeja18a.html},
  abstract = 	 {We consider regression problems where the response is a smooth but non-linear function of a $k$-dimensional projection of $p$ normally-distributed covariates, contaminated with additive Gaussian noise. The goal is to recover the range of the $k$-dimensional projection, i.e., the index space. This model is called the multi-index model, and the $k=1$ case is called the single-index model. For the single-index model, we characterize the population landscape of a natural semi-parametric maximum likelihood objective in terms of the link function and prove that it has no spurious local minima. We also propose and analyze an efficient iterative procedure that recovers the index space up to error $\epsilon$ using a sample size $\tilde{O}(p^{O(R^2/\mu)} + p/\epsilon^2)$, where $R$ and $\mu$, respectively, parameterize the smoothness of the link function and the signal strength. When a multi-index model is incorrectly specified as a single-index model, we prove that essentially the same procedure, with sample size $\tilde{O}(p^{O(kR^2/\mu)} + p/\epsilon^2)$, returns a vector that is $\epsilon$-close to being completely in the index space.}
}

@article{damian2024computationalstatisticalgapsgaussiansingleindex,
      title={Computational-Statistical Gaps in {G}aussian Single-Index Models}, 
      author={Alex Damian and Loucas Pillaud-Vivien and Jason D. Lee and Joan Bruna},
      year={2024},
      journal={arXiv preprint arXiv:2403.05529},
      eprint={2403.05529},
      archivePrefix={arXiv},
      primaryClass={cs.LG},
      url={https://arxiv.org/abs/2403.05529}, 
}

@article{ghorbani2020linearizedtwolayersneuralnetworks,
      title={Linearized two-layers neural networks in high dimension}, 
      author={Behrooz Ghorbani and Song Mei and Theodor Misiakiewicz and Andrea Montanari},
      year={2020},
      journal={arXiv preprint arXiv:1904.12191},
      eprint={1904.12191},
      archivePrefix={arXiv},
      primaryClass={math.ST},
      url={https://arxiv.org/abs/1904.12191}, 
}

@InProceedings{pmlr-v139-donhauser21a,
  title = 	 {How rotational invariance of common kernels prevents generalization in high dimensions},
  author =       {Donhauser, Konstantin and Wu, Mingqi and Yang, Fanny},
  booktitle = 	 {Proceedings of the 38th International Conference on Machine Learning},
  pages = 	 {2804--2814},
  year = 	 {2021},
  editor = 	 {Meila, Marina and Zhang, Tong},
  volume = 	 {139},
  series = 	 {Proceedings of Machine Learning Research},
  month = 	 {18--24 Jul},
  publisher =    {PMLR},
  pdf = 	 {http://proceedings.mlr.press/v139/donhauser21a/donhauser21a.pdf},
  url = 	 {https://proceedings.mlr.press/v139/donhauser21a.html},
  abstract = 	 {Kernel ridge regression is well-known to achieve minimax optimal rates in low-dimensional settings. However, its behavior in high dimensions is much less understood. Recent work establishes consistency for high-dimensional kernel regression for a number of specific assumptions on the data distribution. In this paper, we show that in high dimensions, the rotational invariance property of commonly studied kernels (such as RBF, inner product kernels and fully-connected NTK of any depth) leads to inconsistent estimation unless the ground truth is a low-degree polynomial. Our lower bound on the generalization error holds for a wide range of distributions and kernels with different eigenvalue decays. This lower bound suggests that consistency results for kernel ridge regression in high dimensions generally require a more refined analysis that depends on the structure of the kernel beyond its eigenvalue decay.}
}

@InProceedings{pmlr-v195-abbe23a,
  title = 	 {{SGD} learning on neural networks: leap complexity and saddle-to-saddle dynamics},
  author =       {Abbe, Emmanuel and Adser{\`a}, Enric Boix and Misiakiewicz, Theodor},
  booktitle = 	 {Proceedings of Thirty Sixth Conference on Learning Theory},
  pages = 	 {2552--2623},
  year = 	 {2023},
  editor = 	 {Neu, Gergely and Rosasco, Lorenzo},
  volume = 	 {195},
  series = 	 {Proceedings of Machine Learning Research},
  month = 	 {12--15 Jul},
  publisher =    {PMLR},
  pdf = 	 {https://proceedings.mlr.press/v195/abbe23a/abbe23a.pdf},
  url = 	 {https://proceedings.mlr.press/v195/abbe23a.html},
  abstract = 	 {   We investigate the time complexity of SGD learning on fully-connected neural networks with isotropic data. We put forward a complexity measure,{\it the leap}, which measures how “hierarchical” target functions are. For $d$-dimensional uniform Boolean or isotropic Gaussian data, our main conjecture states that the time complexity to learn a function $f$ with low-dimensional support is $$\Tilde \Theta (d^{\max(\mathrm{Leap}(f),2)}) \,\,.$$    We prove a version of this conjecture for a class of functions on Gaussian isotropic data and 2-layer neural networks, under additional technical assumptions on how SGD is run. We show that the training  sequentially learns the function support with a saddle-to-saddle dynamic. Our result departs from Abbe et al.’22 by going beyond leap 1 (merged-staircase functions), and by going beyond the mean-field and gradient flow approximations that prohibit the full complexity control obtained here.Finally, we note that this gives an SGD complexity for the full training trajectory that matches that of Correlational Statistical Query (CSQ) lower-bounds. }
}

@inproceedings{
damian2023smoothinglandscapeboostssignal,
title={Smoothing the Landscape Boosts the Signal for {SGD}: Optimal Sample Complexity for Learning Single Index Models},
author={Alex Damian and Eshaan Nichani and Rong Ge and Jason D. Lee},
booktitle={Thirty-seventh Conference on Neural Information Processing Systems},
year={2023},
url={https://openreview.net/forum?id=73XPopmbXH}
}

@article{dandi2025twolayerneuralnetworkslearn,
      title={How Two-Layer Neural Networks Learn, One (Giant) Step at a Time}, 
      author={Yatin Dandi and Florent Krzakala and Bruno Loureiro and Luca Pesce and Ludovic Stephan},
      journal={arXiv preprint arXiv:2305.18270},
      year={2025},
      eprint={2305.18270},
      archivePrefix={arXiv},
      primaryClass={stat.ML},
      url={https://arxiv.org/abs/2305.18270}, 
}

@article{arnaboldi2025repetitaiuvantdatarepetition,
      title={Repetita Iuvant: Data Repetition Allows {SGD} to Learn High-Dimensional Multi-Index Functions}, 
      author={Luca Arnaboldi and Yatin Dandi and Florent Krzakala and Luca Pesce and Ludovic Stephan},
      year={2025},
      journal={arXiv preprint arXiv:2405.15459},
      eprint={2405.15459},
      archivePrefix={arXiv},
      primaryClass={stat.ML},
      url={https://arxiv.org/abs/2405.15459}, 
}

@article{joshi2024complexitylearningsparsefunctions,
      title={On the Complexity of Learning Sparse Functions with Statistical and Gradient Queries}, 
      author={Nirmit Joshi and Theodor Misiakiewicz and Nathan Srebro},
      year={2024},
      journal={arXiv preprint arXiv:2407.05622},
      eprint={2407.05622},
      archivePrefix={arXiv},
      primaryClass={cs.LG},
      url={https://arxiv.org/abs/2407.05622}, 
}

@inproceedings{
kim2025transformersprovablysolveparity,
title={Transformers Provably Solve Parity Efficiently with Chain of Thought},
author={Juno Kim and Taiji Suzuki},
booktitle={The Thirteenth International Conference on Learning Representations},
year={2025},
url={https://openreview.net/forum?id=n2NidsYDop}
}

@article{zhang2023trainedtransformerslearnlinear,
  author  = {Ruiqi Zhang and Spencer Frei and Peter L. Bartlett},
  title   = {Trained Transformers Learn Linear Models In-Context},
  journal = {Journal of Machine Learning Research},
  year    = {2024},
  volume  = {25},
  number  = {49},
  pages   = {1--55},
  url     = {http://jmlr.org/papers/v25/23-1042.html}
}

@article{huang2023incontextconvergencetransformers,
      title={In-Context Convergence of Transformers}, 
      author={Yu Huang and Yuan Cheng and Yingbin Liang},
      year={2023},
      journal={arXiv preprint arXiv:2310.05249},
      eprint={2310.05249},
      archivePrefix={arXiv},
      primaryClass={cs.LG},
      url={https://arxiv.org/abs/2310.05249}, 
}

@article{zhang2025testtimetrainingright,
      title={Test-Time Training Done Right}, 
      author={Tianyuan Zhang and Sai Bi and Yicong Hong and Kai Zhang and Fujun Luan and Songlin Yang and Kalyan Sunkavalli and William T. Freeman and Hao Tan},
      year={2025},
      journal={arXiv preprint arXiv:2505.23884},
      eprint={2505.23884},
      archivePrefix={arXiv},
      primaryClass={cs.LG},
      url={https://arxiv.org/abs/2505.23884}, 
}

@inproceedings{
hu2025testtimelearninglargelanguage,
title={Test-Time Learning for Large Language Models},
author={Jinwu Hu and Zitian Zhang and Guohao Chen and Xutao Wen and Chao Shuai and Wei Luo and Bin Xiao and Yuanqing Li and Mingkui Tan},
booktitle={Forty-second International Conference on Machine Learning},
year={2025},
url={https://openreview.net/forum?id=iCYbIaGKSR}
}

@article{bigoulaeva2025inherentlimitspretrainedllms,
      title={The Inherent Limits of Pretrained {LLM}s: The Unexpected Convergence of Instruction Tuning and In-Context Learning Capabilities}, 
      author={Irina Bigoulaeva and Harish Tayyar Madabushi and Iryna Gurevych},
      year={2025},
      journal={arXiv preprint arXiv:2501.08716},
      eprint={2501.08716},
      archivePrefix={arXiv},
      primaryClass={cs.CL},
      url={https://arxiv.org/abs/2501.08716}, 
}

@article{naim2025analyzinglimitsincontextlearning,
      title={Analyzing limits for in-context learning}, 
      author={Omar Naim and Nicholas Asher},
      year={2025},
      journal={arXiv preprint arXiv:2502.03503},
      eprint={2502.03503},
      archivePrefix={arXiv},
      primaryClass={stat.ML},
      url={https://arxiv.org/abs/2502.03503}, 
}

@inproceedings{
agarwal2024manyshotincontextlearning,
title={Many-Shot In-Context Learning},
author={Rishabh Agarwal and Avi Singh and Lei M Zhang and Bernd Bohnet and Luis Rosias and Stephanie C.Y. Chan and Biao Zhang and Ankesh Anand and Zaheer Abbas and Azade Nova and John D Co-Reyes and Eric Chu and Feryal Behbahani and Aleksandra Faust and Hugo Larochelle},
booktitle={The Thirty-eighth Annual Conference on Neural Information Processing Systems},
year={2024},
url={https://openreview.net/forum?id=AB6XpMzvqH}
}

@article{jiang2024manyshotincontextlearningmultimodal,
      title={Many-Shot In-Context Learning in Multimodal Foundation Models}, 
      author={Yixing Jiang and Jeremy Irvin and Ji Hun Wang and Muhammad Ahmed Chaudhry and Jonathan H. Chen and Andrew Y. Ng},
      year={2024},
      journal={arXiv preprint arXiv:2405.09798},
      eprint={2405.09798},
      archivePrefix={arXiv},
      primaryClass={cs.LG},
      url={https://arxiv.org/abs/2405.09798}, 
}

@article{kingma2017adammethodstochasticoptimization,
      title={Adam: A Method for Stochastic Optimization}, 
      author={Diederik P. Kingma and Jimmy Ba},
      year={2017},
      journal={arXiv preprint arXiv:1412.6980},
      eprint={1412.6980},
      archivePrefix={arXiv},
      primaryClass={cs.LG},
      url={https://arxiv.org/abs/1412.6980}, 
}
\bibliographystyle{icml2026}

\newpage

\onecolumn

\tableofcontents

\appendix
\section{Definition of high probability events}
\begin{definition}
    We say that an event $A$ occurs with high probability when the following holds:
    \begin{equation*}
        1-P(A) \leq O(d^{-C_*}), 
    \end{equation*}
    where $C_*$ is a sufficiently large constant that is independent of $d$ and $r$.
\end{definition}

Note that we can redefine $C_*$ to be sufficiently large if needed.
A basic example is the Gaussian tail bound.  When $x \sim \mathcal{N}(0,1)$ and $t>0$, we have
\begin{equation*}
    P(|x|>t)\leq 2\exp(-t^2/2). 
\end{equation*}
Thus, by letting $t=\sqrt{2C_*\log{d}}$, we see that 
\begin{equation*}
    P(|x|>\sqrt{2C_*\log{d}})\leq 2d \exp(-C_*) = O(d^{-C_*}). 
\end{equation*}
In such a situation, we say that $|x|=O(\sqrt{\log{d}})$ with high probability.

When $A_1,\dots,A_M$ occurs with high probability where $M = \mathrm{poly}(d)$, then the intersection $A_1 \cap A_2\cap \dots \cap A_M$ occurs with high probability.  In this paper, we assume that $N_{pt}, T_{pt}, N_1, N_2, N_3, N_4, m = O(\mathrm{poly(d)})$.  Because of this assumption, when we take the intersection of high probability events $A_1,\dots,A_M$, $M = \mathrm{poly}(d)$ is satisfied. 
 
\section{Pretraining}
We follow the pretraining algorithm that was considered in  \citet{nishikawa2025nonlinear}.  
In the original paper, the link function $\sigma_*$ is fixed across all tasks, whereas in this paper, the link function varies depending on the tasks.  
We should take this difference into account and ensure that the pretraining algorithm works properly even in our setting.  

\begin{replemma2}{lemm:pretraining2}{Formal}
    After running the pretraining in Algorithm \ref{alg:pretraining} with $T_{pt},N_{pt} = \tilde{\Omega}(r^2d^{Q+2})$, it holds that
    \begin{equation*}
        \Gamma^*=\frac{1}{r^{1/2}\log^{C_{\kappa}}d}\left(r\mathbb{E}_{\beta}[\beta\beta^\top]+\mN \right)
    \end{equation*}
    with high probability over the data distribution, where $\|\mN\|_F=O_d(1/\sqrt{d})$ holds, where $C_\kappa$ can be taken to be sufficiently large.
\end{replemma2}

\begin{proof}
    We only consider the difference between our settings and \citet{nishikawa2025nonlinear}.  
    See Section C of \citet{nishikawa2025nonlinear} for the original argument.

    If we fix $t$, we may apply Lemma 19 and Lemma 20 in \citet{nishikawa2025nonlinear}.  
    In addition to that, since the norm upper bound remains unchanged, we can bound the difference between the expectation and the actual value just like Eq.(C.3) and Eq.(C.4) in Lemma 21 of \citet{nishikawa2025nonlinear}.  What remains is to calculate $\mathbb{E}_{\vx,\beta,t}[yz^k\beta\vx^{\mathrm{T}}]=(\rho \sqrt{d})^{-k}\mathbb{E}_{\beta}[\mathbb{E}_{\vx,t}[\sigma_*'^t(\inner{\beta}{\vx})(\inner{\beta}{\vx})^k]\beta\beta^\top]+(\rho \sqrt{d})^{-k}k\mathbb{E}_{\beta}[\mathbb{E}_{\vx,t}[\sigma_*^t(\inner{\beta}{\vx})(\inner{\beta}{\vx})^{k-1}]\beta\beta^\top]$.  Because of the definition of $\mathrm{ie}(\sigma_*)$, $\mathbb{E}_{\vx,\beta,t}[yz^k\beta\vx^{\mathrm{T}}]=0$ when $k<Q-1$.  As we assume that $\mathbb{E}_{t}[c_Q]=\Theta(1)$, when $k = Q-1$, we see that 
\begin{align*}
    \mathbb{E}_{\beta}[\mathbb{E}_{\vx,t}[\sigma_*'^t(\inner{\beta}{\vx})(\inner{\beta}{\vx})^k]\beta\beta^\top]\asymp&\mathbb{E}_{\beta}[\mathbb{E}_{\vx}[\mathbb{E}_t[c_Q]\mathrm{He}_{Q-1}(\inner{\beta}{\vx})\inner{\beta}{\vx}^{Q-1}]\beta\beta^{\top}]\\
    \asymp& (\rho \sqrt{d})^{-(Q-1)}\mathbb{E}_{\beta}[\beta\beta^\top],  
\end{align*}
    and this is the main term that is proportional to $\mathbb{E}_{\beta}[\beta\beta^\top]$.
    This means that we can use the same argument as \citet{nishikawa2025nonlinear} with $\mathrm{ie}(\sigma_*)=Q$.
 \end{proof}

 Let $\kappa = \log^{-C_\kappa}{d}$.  In other words, $\kappa$ satisfies $\kappa = \Theta(\log^{-C_\kappa}{d})$ where $C_\kappa$ can be taken sufficiently large.

 \begin{lemma}\label{lemm:tightresultforpt}
 Suppose that the context length satisfies $N_1 = \tilde{\Omega}(r^{\mathrm{ge}(\sigma_*)+1})$.  Then, it holds with high probability that 
 \begin{equation*}
        g(\Gamma^*,\mX_{N_1},\vy_{N_1},\vx)=P_1'+P_2'\left(\frac{\inner{\vx}{\beta}}{\sqrt{r}}\right)^{\mathrm{ge}(\sigma_*)}+n_3, 
    \end{equation*}
    where $P_1' = o_d(1)$, $P_2'=\Theta_d((\log{d})^{-C_{P_2}})$ and $n_3 = o_d(P_2'r^{-\mathrm{ge}(\sigma_*)/2-1/2}\log^{-2\deg(\sigma_*)+2}d)$.
     
 \end{lemma}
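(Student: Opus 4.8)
The plan is to write $g(\Gamma^*,\vx)$ out explicitly, pass to its population version, read off the leading terms, and then control the empirical fluctuation. Recall that $g(\Gamma^*,\cdot)$ is the attention‑layer scalar output of the pretrained network on the length‑$N_1$ Stage~I context $(\vx_i,y_i)_{i\le N_1}$, with $y_i=\sigma_*^{\mathrm{test}}(\inner{\vx_i}{\beta})+\zeta_i$ and $\zeta_i\sim\mathrm{Unif}\{-\tau,\tau\}$; under the parametrization of Section~2.3 and the Stage~I rescaling $\vx_i\mapsto\sqrt r\,\Gamma^*\vx_i$ it equals
\[
g(\Gamma^*,\vx)=\frac{\frac1{N_1}\sum_{i=1}^{N_1}y_i\,\mathrm{e}^{y_i/\rho}\,\mathrm{e}^{\xi_i}}{\frac1{N_1}\sum_{i=1}^{N_1}\mathrm{e}^{y_i/\rho}\,\mathrm{e}^{\xi_i}},\qquad \xi_i:=\frac{\sqrt r}{\rho}\,\vx_i^{\top}(\Gamma^*)^{2}\vx
\]
(if instead $g$ uses the unrescaled context, replace $\sqrt r(\Gamma^*)^{2}$ by $\Gamma^*$; nothing below changes). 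I will (i) replace both empirical averages by their expectations over $(\vx_i,\zeta_i)$, (ii) show the resulting ratio equals $P_1'+P_2'(\inner{\vx}{\beta}/\sqrt r)^{\mathrm{ge}(\sigma_*)}$ plus a higher‑order remainder with the claimed $P_1',P_2'$, and (iii) bound the empirical--population gap; step (iii) consumes $N_1=\tilde\Omega(r^{\mathrm{ge}(\sigma_*)+1})$ and yields the sharp error $n_3$.

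For (i)--(ii): the Formal pretraining lemma gives $\Gamma^*=\kappa r^{-1/2}(\Pi_r+\mN)$ with $\Pi_r:=r\,\mathbb{E}_\beta[\beta\beta^{\top}]$ the orthogonal projection onto $S_r$, $\kappa=\log^{-C_\kappa}d$ and $\|\mN\|_F=O_d(d^{-1/2})$, so $\xi_i=\tfrac{\kappa^2}{\sqrt r\,\rho}\inner{\vx_i}{\Pi_r\vx}$ up to an error which the pretraining bounds make negligible and which I carry along into $n_3$. Writing $\Pi_r\vx=\inner{\vx}{\beta}\beta+\vr$ with $\vr\perp\beta$ inside $S_r$ and splitting $\inner{\vx_i}{\Pi_r\vx}=\inner{\vx}{\beta}\inner{\vx_i}{\beta}+\inner{\vx_i}{\vr}$ into two independent Gaussian components, the factor depending on $\inner{\vx_i}{\vr}$ has an expectation independent of $(\inner{\vx_i}{\beta},\zeta_i)$, hence it cancels between numerator and denominator, and the population version reduces to
\[
\frac{\mathbb{E}_{z,\zeta}\big[(\sigma_*(z)+\zeta)\,\mathrm{e}^{(\sigma_*(z)+\zeta)/\rho}\,\mathrm{e}^{az}\big]}{\mathbb{E}_{z,\zeta}\big[\mathrm{e}^{(\sigma_*(z)+\zeta)/\rho}\,\mathrm{e}^{az}\big]},\qquad z\sim\mathcal N(0,1),\quad a:=\frac{\kappa^2}{\sqrt r\,\rho}\inner{\vx}{\beta}.
\]
Expanding $\mathrm{e}^{az}=\mathrm{e}^{a^2/2}\sum_{k\ge0}\frac{a^k}{k!}\mathrm{He}_k(z)$ and cancelling $\mathrm{e}^{a^2/2}$ turns this into $\big(\sum_k a^k\widehat{\Psi}^{(1)}_k\big)/\big(\sum_k a^k\widehat{\Psi}^{(0)}_k\big)$, where $\widehat{\,\cdot\,}_k$ is the $k$‑th Hermite coefficient, $\Psi^{(0)}(z)=\mathbb{E}_\zeta[\mathrm{e}^{(\sigma_*(z)+\zeta)/\rho}]$ and $\Psi^{(1)}(z)=\mathbb{E}_\zeta[(\sigma_*(z)+\zeta)\mathrm{e}^{(\sigma_*(z)+\zeta)/\rho}]$.

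Next I pin down the leading correction. Since $\zeta\sim\mathrm{Unif}\{-\tau,\tau\}$, both $\Psi^{(0)}$ and $\Psi^{(1)}$ are fixed linear combinations (with $z$‑independent coefficients) of $\mathrm{e}^{\sigma_*/\rho}$ and $\sigma_*\mathrm{e}^{\sigma_*/\rho}$, both of which are transformations $\phi\circ\sigma_*$ of $\sigma_*$ and therefore have information exponent $\ge\mathrm{ge}(\sigma_*)$; hence $\widehat{\Psi}^{(0)}_k=\widehat{\Psi}^{(1)}_k=0$ for $1\le k<\mathrm{ge}(\sigma_*)$, so the ratio equals $\widehat{\Psi}^{(1)}_0/\widehat{\Psi}^{(0)}_0+\big(\widehat{\Psi}^{(1)}_{\mathrm{ge}(\sigma_*)}/\widehat{\Psi}^{(0)}_0-\widehat{\Psi}^{(1)}_0\widehat{\Psi}^{(0)}_{\mathrm{ge}(\sigma_*)}/(\widehat{\Psi}^{(0)}_0)^2\big)a^{\mathrm{ge}(\sigma_*)}+O(a^{\mathrm{ge}(\sigma_*)+1})$. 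As $a^{\mathrm{ge}(\sigma_*)}=(\kappa^2/\rho)^{\mathrm{ge}(\sigma_*)}(\inner{\vx}{\beta}/\sqrt r)^{\mathrm{ge}(\sigma_*)}$, this identifies $P_1'=\widehat{\Psi}^{(1)}_0/\widehat{\Psi}^{(0)}_0$ and $P_2'=(\kappa^2/\rho)^{\mathrm{ge}(\sigma_*)}$ times the bracketed constant. That $P_1'=o_d(1)$ follows because $\sigma_*$ has no constant term, so $\mathbb{E}[\Psi^{(1)}]=\rho^{-1}(\mathbb{E}[\sigma_*(z)^2]+\tau^2)+O(\rho^{-2})$ while $\mathbb{E}[\Psi^{(0)}]=\Theta(1)$ and $\rho$ is chosen to grow with $d$. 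That $P_2'=\Theta_d(\log^{-C_{P_2}}d)$ requires, beyond $(\kappa^2/\rho)^{\mathrm{ge}(\sigma_*)}=\Theta_d(\log^{-\mathrm{const}}d)$, a lower bound on the bracketed constant; here I invoke the deterministic fact — established for a fixed link in \citet{nishikawa2025nonlinear} and \citet{lee2024neuralnetworklearnslowdimensional} — that the degree‑$\mathrm{ge}(\sigma_*)$ Hermite coefficient of $\mathrm{e}^{\sigma_*/\rho}$ (and of $\sigma_*\mathrm{e}^{\sigma_*/\rho}$) is nonzero for every nonzero polynomial $\sigma_*$, applied to the a.s.\ nonzero $\sigma_*^{\mathrm{test}}$; the tail $O(a^{\mathrm{ge}(\sigma_*)+1})$ is of smaller order than $n_3$ on the event $|\inner{\vx}{\beta}|=O(\sqrt{\log d})$.

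For (iii): condition on the high‑probability event that $|\inner{\vx_i}{\beta}|$, $r^{-1/2}|\inner{\vx_i}{\vr}|$ and $|\inner{\vx}{\beta}|$ are $O(\sqrt{\log d})$ for every $i\le N_1$, and choose $\rho$ to grow as a polylog so that $|\sigma_*(\inner{\vx_i}{\beta})|/\rho=O(1)$ there; then each summand of the numerator and denominator is $\tilde O(1)$ with variance $\tilde O(1)$, and Bernstein's inequality shows both empirical averages deviate from their means by $\tilde O(N_1^{-1/2})$ with high probability, and since the denominator's mean is $\Theta(1)$, so does their ratio. The exponent $\mathrm{ge}(\sigma_*)+1$ appears because the signal $P_2'(\inner{\vx}{\beta}/\sqrt r)^{\mathrm{ge}(\sigma_*)}$ has size $r^{-\mathrm{ge}(\sigma_*)/2}$ up to polylogs while $n_3$ asks for the error to be a $\Theta(r^{-1/2})$‑fraction of it, forcing $N_1^{-1/2}\lesssim r^{-\mathrm{ge}(\sigma_*)/2-1/2}/\mathrm{polylog}(d)$, i.e.\ $N_1=\tilde\Omega(r^{\mathrm{ge}(\sigma_*)+1})$. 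I expect the main obstacle to be the bookkeeping needed to reach the \emph{exact} remainder $n_3=o_d\!\big(P_2'\,r^{-\mathrm{ge}(\sigma_*)/2-1/2}\log^{-2\deg(\sigma_*)+2}d\big)$: truncating the Hermite expansion of $\mathrm{e}^{az}$ at a well‑chosen order, bounding moments of $\mathrm{He}_k(z_i)$ and of powers of $\sigma_*(z_i)$ and $\mathrm{e}^{y_i/\rho}$ on the conditioning event (where the $\deg(\sigma_*)$‑dependence enters), tracking every power of $\log d$, and — the fiddliest point — propagating the $\mN$‑error through $\xi_i$, for which a crude Frobenius bound on $\mN$ does not suffice and one must use the finer structure of $\Gamma^*$ coming from the pretraining analysis.
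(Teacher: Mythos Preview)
Your proposal is correct and follows the same approach as the paper's proof, which simply defers to Proposition~11 of \citet{nishikawa2025nonlinear} and asserts that a more careful tracking of the higher-order terms there yields the stated $n_3$; you have essentially written out what that deferred argument does (population ratio via Hermite expansion, identification of the degree-$\mathrm{ge}(\sigma_*)$ term, and a Bernstein/Hoeffding fluctuation bound of size $\tilde O(N_1^{-1/2})$). The bookkeeping you flag as the main obstacle---truncating the expansion, tracking the $\log d$ powers from moments of $\sigma_*$, and propagating the $\mN$-error---is exactly what the paper means by ``carefully examining the term,'' and the paper handles it (here and in the gradient lemmas) via the truncation $\bar\sigma_*$ rather than conditioning, but this is a cosmetic difference.
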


 \begin{proof}
    When $N_1 = \tilde{\Omega}(r^{\mathrm{ge}(\sigma_*)+1})$, the (h.o.t.) in the proof of Proposition 11 in \citet{nishikawa2025nonlinear} can be evaluated as $o(\rho^{-1}P_2r^{-\mathrm{ge}(\sigma_*)/2-1/2}\log^{-2\deg(\sigma_*)+2}d)$ by carefully examining the term. 
    Using this fact, the same argument as the proof of Proposition 11 in \citet{nishikawa2025nonlinear} yields the result.
 \end{proof}

\section{Gradient calculation}\label{appx:calc}
For the sake of simplicity, we write $\sigma_*^{\mathrm{test}}$ as $\sigma_*$, and $f_{\mathrm{IC}}(\Gamma,\mX_N,\vy_N,\vx)=\sum_{j=1}^{m}a_j\sigma\left(v_j\frac{\sum_{i=1}^{N_1}y_i \mathrm{e}^{y_i/\rho} \mathrm{e}^{\vx_i^{*\top} \Gamma_u \vx /\rho}}{\sum_{i=1}^{N_1} \mathrm{e}^{y_i/\rho} \mathrm{e}^{\vx_i^{*\top} \Gamma_u \vx /\rho}}+b_j\right)$ as $f_{\mathrm{IC}}(\vx)$.
Let $\Gamma_u = \Gamma^* + uu^{\mathrm{T}}$, $\vx_i^* = \sqrt{r}\Gamma^*\vx_i$, $\beta^* = \sqrt{r}\Gamma^*\beta$. 
We define $L_m$ as $|\{j\in\{1,2,\dots,m\}\mid v_j(0)=1\}|$.
Moreover, let us define $A_i, B_i$ as $\bar\sigma_*(z)\exp((\bar\sigma_*(z)) = \sum_{i\geq0} \frac{A_i}{i!}z^i$, and $\exp((\bar\sigma_*(z)) = \sum_{i\geq0} \frac{B_i}{i!}z^i$, where 

$\bar{\sigma_*}(z):=\begin{cases}
    \frac{\sigma_*(z)}{\rho}~\left(\mathrm{if}~\left|\frac{\sigma_*(z)}{\rho}\right|\leq \frac{1}{\log d}\right)\\
    0~(\mathrm{otherwise})
\end{cases}$.

\begin{lemma}\label{lemm:gradientcalc}
Take $\rho = \Theta((\log{d})^{C_\rho})$ where $C_\rho$ is a constant sufficiently large.  Let $\alpha$ be the initial scale of the MLP parameters $a_j$.  Suppose that $\vu$ satisfies $\|u\|=C_u\leq1$ and $\inner{\vu}{\vx_i^*}=C_u\tilde{O}_{d}(1)$.  Then 
\begin{equation*}
    \sqrt{r}\Gamma^*\frac{\partial f_{\mathrm{IC}}}{\partial \vu} = \alpha L_m \left\{\sqrt{r}\Gamma^*\vx \cdot (\gamma(\vx,y) \inner{\beta^*}{\vu}+\sqrt{r}\Gamma^*\gamma(\vx,y) \inner{\vx}{\vu} \beta^*+\sqrt{r}\Gamma^*\vx \cdot n_1+ \inner{\vx}{\vu} \vn_2\right\}, 
\end{equation*}
where 
\begin{equation*}
   \gamma(\vx,y) = P_0+P_1z+\dots+P_{\mathrm{ie}(\sigma)-1}z^{\mathrm{ie}(\sigma)-1}+\dots
\end{equation*}
is satisfied with $z=\inner{\beta}{\sqrt{r}\Gamma^{*\top}\Gamma_u\vx}/\rho$, 
and $n_1=\tilde{O}\left(C_u\sqrt{\tfrac{1}{N_1}}\right)$ and $\vn_2=\tilde{O}\left(\sqrt{\tfrac{r}{N_1}}\right)$ hold.
\end{lemma}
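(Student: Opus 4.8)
The plan is to reduce the derivative to a softmax-weighted covariance, linearize that covariance by Gaussian integration by parts, and extract the signal direction $\beta^*$ from the Hermite structure of $\sigma_*$, using the pretraining control on $\Gamma^*$ to discard the off-signal pieces. First I would apply the chain rule: writing the attention output as $A(\vu)=\big(\sum_{i=1}^{N_1}y_iw_i\big)\big/\big(\sum_{i=1}^{N_1}w_i\big)$ with $w_i=\exp(y_i/\rho)\exp(\vx_i^{*\top}\Gamma_u\vx/\rho)$, one has $\tfrac{\partial f_{\mathrm{IC}}}{\partial\vu}=\big(\sum_j a_jv_j\sigma'(v_jA+b_j)\big)\tfrac{\partial A}{\partial\vu}$, and the scalar prefactor is what we call $\alpha L_m$: since $a_j=\alpha$, $v_j\in\{\pm1\}$, $b_j=0$ at this stage and $\sigma'$ is the ReLU indicator, it equals $\alpha\,\sign(A)\,|\{j:v_j=\sign(A)\}|$, which concentrates so that $|L_m|=\Theta(m)$ with high probability. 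Thus everything reduces to analyzing $\sqrt r\Gamma^*\tfrac{\partial A}{\partial\vu}$.

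Next I would differentiate $A$. Since $\vx_i^{*\top}\Gamma_u\vx=\vx_i^{*\top}\Gamma^*\vx+\inner{\vx_i^*}{\vu}\inner{\vu}{\vx}$, the logit $\ell_i=\vx_i^{*\top}\Gamma_u\vx/\rho+y_i/\rho$ has $\partial_\vu\ell_i=\rho^{-1}(\inner{\vu}{\vx}\vx_i^*+\inner{\vx_i^*}{\vu}\vx)$, and the softmax identity $\partial_\vu A=\sum_i p_i(y_i-A)\partial_\vu\ell_i$ (with $p_i=w_i/\sum_k w_k$) collapses to
\begin{equation*}
\tfrac{\partial A}{\partial\vu}=\rho^{-1}\big(\inner{\vu}{\vx}\,\bm{C}+\inner{\bm{C}}{\vu}\,\vx\big),\qquad \bm{C}:=\textstyle\sum_i p_i(y_i-A)\vx_i^*,
\end{equation*}
so $\bm C$ is the softmax-weighted covariance of the labels with the projected inputs and the second scalar slot is automatically $\inner{\bm C}{\vu}$. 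Hence $\sqrt r\Gamma^*\tfrac{\partial A}{\partial\vu}=\rho^{-1}(\inner{\vu}{\vx}\,\sqrt r\Gamma^*\bm C+\inner{\bm C}{\vu}\,\sqrt r\Gamma^*\vx)$, and it remains to understand the vector $\bm C$ and its inner product with $\vu$.

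Then I would pass to the population level. Under $\|\vu\|=C_u\le1$ and $\inner{\vu}{\vx_i^*}=C_u\tilde{O}_d(1)$, and using that $\bar\sigma_*$ is the truncation of $\sigma_*/\rho$ at $1/\log d$ — which the true $\sigma_*(\inner{\beta}{\vx_i})/\rho$ also respects with high probability because $\rho=\Theta((\log d)^{C_\rho})$ with $C_\rho$ large and $|\inner{\beta}{\vx_i}|=O(\sqrt{\log d})$ — the weights $w_i$ are $\tilde{\Theta}(1)$, so the denominator is $N_1\bar D(1+\tilde{O}(N_1^{-1/2}))$ with $\bar D=\Theta(1)$ and each numerator concentrates coordinatewise at rate $\tilde{O}(N_1^{-1/2})$ by Bernstein; carried through the $\inner{\vu}{\vx}$ and $\inner{\bm C}{\vu}$ slots these fluctuations become exactly $\sqrt r\Gamma^*\vx\cdot n_1$ with $n_1=\tilde{O}(C_u/\sqrt{N_1})$ (a scalar slot, hence the extra $C_u$) and $\inner{\vx}{\vu}\,\vn_2$ with $\vn_2=\tilde{O}(\sqrt{r/N_1})$ (a full vector slot, hence the $\sqrt r$ from the $r$ surviving directions of $\sqrt r\Gamma^*$). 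For the population part of $\bm C$ I would apply Stein's lemma $\mathbb{E}_{\vx_1}[\phi(\vx_1)\vx_1^*]=\sqrt r\Gamma^*\mathbb{E}[\nabla_{\vx_1}\phi(\vx_1)]$ to the integrands ($\phi\propto w_1$ and $\phi\propto y_1w_1$): their $\vx_1$-gradients split into (i) a $\beta$-direction, coming from $\nabla\sigma_*(\inner{\beta}{\vx_1})$ and $\nabla e^{\bar\sigma_*(\inner{\beta}{\vx_1}/\rho)}$, which becomes $\beta^*$ after $\sqrt r\Gamma^*$ — performing the remaining one-dimensional Gaussian integral in $z_1=\inner{\beta}{\vx_1}$ against $e^{zz_1}$ with $z=\inner{\beta}{\sqrt r\Gamma^{*\top}\Gamma_u\vx}/\rho$ and expanding the generating functions $\bar\sigma_*e^{\bar\sigma_*}=\sum_iA_iz^i/i!$, $e^{\bar\sigma_*}=\sum_iB_iz^i/i!$ turns the scalar into the polynomial $\gamma(\vx,y)=P_0+P_1z+\dots$, whose low-degree profile up to $z^{\mathrm{ie}(\sigma)-1}$ is fixed by the information exponent since the lowest nonzero Hermite component of $\sigma_*$ has degree $\mathrm{ie}(\sigma)$; and (ii) an $\vx$-direction from $\nabla e^{\vx_1^{*\top}\Gamma_u\vx/\rho}$, giving a term along $\sqrt r\Gamma^*\sqrt r\Gamma^{*\top}\Gamma_u\vx$, which after $\Gamma_u=\Gamma^*+\vu\vu^\top$ splits into a genuine $\sqrt r\Gamma^*\vx$-piece (merging with the $\sqrt r\Gamma^*\vx$ main term) and an $\inner{\vu}{\vx}$-piece along $\sqrt r\Gamma^*\sqrt r\Gamma^{*\top}\vu$ whose norm is bounded via $\sqrt r\Gamma^*=\log^{-C_\kappa}d\,(r\,\mathbb{E}_{\beta}[\beta\beta^\top]+\mN)$ with $\|\mN\|_F=O(1/\sqrt d)$ and $\|\vu\|\le1$, hence absorbed into $n_1,\vn_2$. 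Collecting the $\beta^*$-direction scaled by $\inner{\vu}{\vx}$, the $\sqrt r\Gamma^*\vx$-direction scaled by $\inner{\beta^*}{\vu}$, and the errors, and multiplying by $\alpha L_m$, yields the stated identity (the polylog factors produced by $\sqrt r\Gamma^*$ being absorbed into the coefficients of $\gamma$).

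The main obstacle is the population computation in the last step: one must make the Hermite/Gaussian-moment bookkeeping in the $z_1$-integral precise enough that the $\beta$-correlated contributions from the label $y_1$, from $e^{\bar\sigma_*}$, and from the cross term with $e^{\vx_1^{*\top}\Gamma_u\vx/\rho}$ all collapse into the single polynomial $\gamma(\vx,y)$ with the asserted coefficient profile, while simultaneously driving every residual — the $\mN$ defect of $\Gamma^*$, the cubic-in-projection term $\sqrt r\Gamma^*\sqrt r\Gamma^{*\top}\Gamma^*\vx$, and the truncation defect $\sigma_*-\rho\bar\sigma_*$ — below the thresholds $\tilde{O}(\sqrt{r/N_1})$ and $\tilde{O}(C_u/\sqrt{N_1})$. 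The hypotheses $\rho=\Theta((\log d)^{C_\rho})$ and $C_\kappa$ large are precisely what make these residuals genuinely lower order.
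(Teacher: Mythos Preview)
Your scaffolding is essentially the paper's route: chain rule to isolate the scalar prefactor $\alpha L_m$, the softmax--covariance identity $\partial_\vu A=\rho^{-1}(\inner{\vu}{\vx}\,\bm C+\inner{\bm C}{\vu}\,\vx)$ with $\bm C=\sum_ip_i(y_i-A)\vx_i^*$, Stein on the population limit, and sample fluctuations for $n_1,\vn_2$. Your $\bm C$ is exactly the paper's $\rho\,(F_1\pi_2-F_2\pi_1)/\pi_2^2$, and your accounting for the $C_u$ and $\sqrt r$ factors in the errors is correct.

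The genuine gap is your treatment of the $\Gamma_u\vx$-direction produced by $\nabla_{\vx_1}e^{\vx_1^{*\top}\Gamma_u\vx/\rho}$ in Stein's lemma. You propose to split it via $\Gamma_u=\Gamma^*+\vu\vu^\top$, ``merge'' the $\Gamma^*\vx$-piece into the main $\sqrt r\Gamma^*\vx$ term, and ``absorb'' the $\vu\vu^\top\vx$-piece into $n_1,\vn_2$; in your closing paragraph you explicitly list the ``cubic-in-projection term $\sqrt r\Gamma^*\sqrt r\Gamma^{*\top}\Gamma^*\vx$'' among the residuals to be driven below $\tilde O(\sqrt{r/N_1})$. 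This cannot work. That piece is a \emph{population} quantity of the same $1/\rho$ order as the $\beta$-direction signal and is independent of $N_1$, so it can never be pushed below an $N_1^{-1/2}$ threshold; nor does its scalar coefficient carry the factor $\inner{\beta^*}{\vu}$ required to merge with the stated main term. What actually happens is that this direction \emph{vanishes identically}: in the paper's decomposition, the $\Gamma_u\vx$-pieces of $\mathbb{E}[F_1]$ and $\mathbb{E}[F_2]$ carry coefficients proportional to $\mathbb{E}[\pi_1]$ and $\mathbb{E}[\pi_2]$ respectively, so in $\mathbb{E}[F_1]\mathbb{E}[\pi_2]-\mathbb{E}[F_2]\mathbb{E}[\pi_1]$ they cancel term by term. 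Your own covariance formulation makes this even cleaner: if you apply Stein to $\phi(\vx_1)=(y_1-\bar A)w_1$ directly (instead of to $w_1$ and $y_1w_1$ separately, as you propose), the $\Gamma_u\vx$-direction coefficient is $\rho^{-1}\mathbb{E}[(y_1-\bar A)w_1]=0$ by the very definition of $\bar A=\mathbb{E}[y_1w_1]/\mathbb{E}[w_1]$. Once this cancellation is recorded, only the $\beta^*$-direction survives in the population limit and the rest of your outline goes through.
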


Note that from Lemma~\ref{lemm:Gammaxandu}, the condition $\inner{\vu}{\vx_i^*}=C_u\tilde{O}_{d}(1)$ is satisfied at the initialization.

\begin{proof}

Let
\begin{align*}
    \pi_1(\vx,y) & =\frac{1}{N_1}\sum_{i=1}^{N_1}\frac{y_i}{\rho}\mathrm{e}^{y_i/\rho}\mathrm{e}^{\vx_i^{*\top}\Gamma_u\vx/\rho},  \\
    \pi_2(\vx,y)&  =\frac{1}{N_1}\sum_{i=1}^{N_1}\mathrm{e}^{y_i/\rho}\mathrm{e}^{\vx_i^{*\top}\Gamma_u\vx/\rho}, \\
    \xi_1(\vx,y)& =\frac{1}{N_1}\sum_{i=1}^{N_1}\frac{y_i}{\rho}\mathrm{e}^{y_i/\rho}\mathrm{e}^{\vx_i^{*\top}\Gamma_u\vx/\rho}\{\inner{\vx_i^*}{\vu}\vx+\inner{\vu}{\vx}\vx_i^*\}, \\
    \xi_2(\vx,y)&  =\frac{1}{N_1}\sum_{i=1}^{N_1}\mathrm{e}^{y_i/\rho}\mathrm{e}^{\vx_i^{*\top}\Gamma_u\vx/\rho}\{\inner{\vx_i^*}{\vu}\vx+\inner{\vu}{\vx}\vx_i^*\}. 
\end{align*}
Then, we can write the gradient of $f_{\mathrm{IC}}$ with respect to $\vu$ as 
\begin{equation*}
    \frac{\partial f_{\mathrm{IC}}}{\partial \vu} = \sum_{j=1}^{m}a_j\sigma'\left(v_j\frac{\sum_{i=1}^{N_1}y_i \mathrm{e}^{y_i/\rho} \mathrm{e}^{\vx_i^* \Gamma_u \vx /\rho}}{\sum_{i=1}^{N_1} \mathrm{e}^{y_i/\rho} \mathrm{e}^{\vx_i^* \Gamma_u \vx /\rho}}+b_j\right)\cdot\frac{\xi_1\pi_2-\xi_2\pi_1}{\pi_2^2}. 
\end{equation*}
First, from the assumption $\inner{\vu}{\vx_i^*}=C_u\tilde{O}_{d}(1)$ with $C_u\leq 1$ and Lemma~\ref{lemm:gaussianinner}, we can see that 
\begin{equation*}
    \vx_i^*uu^\top\vx=\inner{\vx_i^*}{\vu}\inner{\vu}{\vx}=\tilde{O}_{d}(1). 
\end{equation*}
Therefore, by taking $C_\rho$ sufficiently large, we can say
    $\vx_i^{*\top}\Gamma_u\vx/\rho=o_d(1)$
with high probability.  Hence, following the same argument as Proposition 11 in \citet{nishikawa2025nonlinear}, we can say that
the content of $\rho'(\cdot)$ is $P_1(1+o(1))$ using the positive constant $P_1$.  Therefore, $\rho'(\cdot)=1$ if and only $v_j=1$, which means that
\begin{equation*}
    \frac{\partial f_{\mathrm{IC}}}{\partial \vu} = \alpha L_m\frac{\xi_1\pi_2-\xi_2\pi_1}{\pi_2^2}.
\end{equation*}
Next, we evaluate the term $\frac{\xi_1\pi_2-\xi_2\pi_1}{\pi_2^2}$.  
For that purpose, we first derive the expectation of each component of this term, and then bound the deviation of its actual value from the expectation.
Let $k_\rho = \frac{1}{2}(\exp(\tau/\rho)+\exp(-\tau/\rho))$, then the following holds:
\begin{align}
\mathbb{E}_{\vx_1,\zeta_1}[\mathrm{e}^{\bar\sigma_*(\inner{\beta}{\vx_1})/\rho+\zeta_1/\rho}\mathrm{e}^{\vx_1^{*\top}\Gamma_u\vx/\rho}\vx_1^*]=&k_{\rho}\mathbb{E}_{\vx_1}[\mathrm{e}^{\sigma_*(\inner{\beta}{\vx_1})}\mathrm{e}^{\vx_1^{*\top}\Gamma_u\vx/\rho}\vx_1^*]  \\
=&k_{\rho}\mathbb{E}_{\vx_1}\left[\sum_{i\geq0}\frac{B_{i}}{i!}\mathrm{He}_i(\inner{\beta}{\vx_1})\mathrm{e}^{\vx_1^{*\top}\Gamma_u\vx/\rho}\vx_1^*\right]  \\
=&\sqrt{r}\Gamma^*k_{\rho}\beta\mathbb{E}_{\vx_1}\left[\sum_{i\geq0}\frac{B_{i+1}}{i!}\mathrm{He}_i(\inner{\beta}{\vx_1})\mathrm{e}^{\vx_1^{*\top}\Gamma_u\vx/\rho}\right]  \\
&\label{eq:Exeyattention}
+\sqrt{r}\Gamma^*k_{\rho}\frac{\Gamma_u\vx}{\rho}\mathbb{E}_{\vx_1}\left[\sum_{i\geq0}\frac{B_{i}}{i!}\mathrm{He}_i(\inner{\beta}{\vx_1})\mathrm{e}^{\vx_1^{*\top}\Gamma_u\vx/\rho}\right],
\end{align}
(recall that $z=\inner{\beta}{\sqrt{r}\Gamma^{*\top}\Gamma_u\vx}/\rho$).  Here, we used Stein's lemma to derive the final equality.
The expectation in the second term of the RHS can be calculated as 
\begin{align*}
    &\mathbb{E}_{\vx_1}\left[\sum_{i\geq0}\frac{B_{i}}{i!}\mathrm{He}_i(\inner{\beta}{\vx_1})\mathrm{e}^{\vx_1^{*\top}\Gamma_u\vx/\rho}\right]\\
    =& \mathbb{E}_{\vx_1}\left[\sum_{i\geq0}\frac{B_i}{i!}\mathrm{He}_i(\inner{\beta}{\vx_1})\sum_{j\geq0}\frac{1}{j!}\left(\frac{\|\sqrt{r}\Gamma^{*\top}\Gamma_u\vx\|}{\rho}\right)^j \exp\left(\frac{\|\sqrt{r}\Gamma^{*\top}\Gamma_u\vx\|^2}{2\rho^2}\right)\mathrm{He}_j\left(\inner{\vx_1}{\frac{\sqrt{r}\Gamma^{*\top}\Gamma_u\vx}{\|\sqrt{r}\Gamma^{*\top}\Gamma_u\vx\|}}\right)\right]\\
    =&\sum_{i\geq0}\frac{B_{i}}{i!}\left(\frac{\|\sqrt{r}\Gamma^{*\top}\Gamma_u\vx\|}{\rho}\right)^i \exp\left(\frac{\|\sqrt{r}\Gamma^{*\top}\Gamma_u\vx\|^2}{2\rho^2}\right)\inner{\beta}{\frac{\sqrt{r}\Gamma^{*\top}\Gamma_u\vx}{\|\sqrt{r}\Gamma^{*\top}\Gamma_u\vx\|}}^i\\
    =& \exp\left(\frac{\|\sqrt{r}\Gamma^{*\top}\Gamma_u\vx\|^2}{2\rho^2}\right)\left(\sum_{i\geq0}\frac{B_{i}}{i!}\inner{\beta}{\sqrt{r}\Gamma^{*\top}\Gamma_u\vx/\rho}^i\right).
\end{align*}
The expectation in the first term of Eq.~\eqref{eq:Exeyattention} can be evaluated in a similar way, and we obtain: 
\begin{align}
    &\mathbb{E}_{\vx_1,\zeta_1}[\mathrm{e}^{\bar\sigma_*(\inner{\vx_1}{\beta})/\rho+\zeta_1/\rho}\mathrm{e}^{\vx_1^{*\top}\Gamma_u\vx/\rho}\vx_1^*]\notag   \\
    &=\sqrt{r}\Gamma^*k_{\rho}\beta \exp\left(\frac{\|\sqrt{r}\Gamma^{*\top}\Gamma_u\vx\|^2}{2\rho^2}\right)\left(\sum_{i\geq0}\frac{B_{i+1}}{i!}z^i\right) \notag \\
    &~~~~~+ 
    \sqrt{r}\Gamma^*\frac{k_\rho}{\rho}\Gamma_u\vx \exp\left(\frac{\|\sqrt{r}\Gamma^{*\top}\Gamma_u\vx\|^2}{2\rho^2}\right)\left(\sum_{i\geq0}\frac{B_{i}}{i!}z^i\right).\label{eq:gradtechnique}
\end{align}
Similarly, using $k_\rho' := \frac{\tau}{2\rho}(\exp(\tau/\rho)-\exp(-\tau/\rho))$, we have that 
\begin{align*}
    &\mathbb{E}_{\vx_1,\zeta_1}\left[\frac{\bar\sigma_*(\inner{\vx_1}{\beta})+\zeta_1}{\rho}\mathrm{e}^{\bar\sigma_*(\inner{\vx_1}{\beta})/\rho+\zeta_1/\rho}\mathrm{e}^{\vx_1^{*\top}\Gamma_u\vx/\rho}\vx_1^*\right]\\
    =&\sqrt{r}\Gamma^*k_{\rho}\exp\left(\frac{\|\sqrt{r}\Gamma^{*\top}\Gamma_u\vx\|^2}{2\rho^2}\right)\left(\sum_{i\geq0}\frac{A_{i+1}}{i!}z^i\right) \beta+\sqrt{r}\Gamma^*\frac{k_\rho}{\rho} \exp\left(\frac{\|\sqrt{r}\Gamma^{*\top}\Gamma_u\vx\|^2}{2\rho^2}\right)\left(\sum_{i\geq0}\frac{A_{i}}{i!}z^i\right)\Gamma_u\vx\\
    &
    +\sqrt{r}\Gamma^*k_\rho'\exp\left(\frac{\|\sqrt{r}\Gamma^{*\top}\Gamma_u\vx\|^2}{2\rho^2}\right)\left(\sum_{i\geq0}\frac{B_{i+1}}{i!}z^i\right)\beta
    +
    \sqrt{r}\Gamma^*\frac{k_\rho'}{\rho} \exp\left(\frac{\|\sqrt{r}\Gamma^{*\top}\Gamma_u\vx\|^2}{2\rho^2}\right)\left(\sum_{i\geq0}\frac{B_{i}}{i!}z^i\right)\Gamma_u\vx. 
\end{align*}

Let $F_1=\frac{1}{N_1}\sum_{i=1}^{N_1}\left[\frac{y_i}{\rho}\mathrm{e}^{y_i/\rho}\mathrm{e}^{\vx_i^{*\top}\Gamma_u\vx/\rho}\vx_i^*\right]$ and $F_2=\frac{1}{N_1}\sum_{i=1}^{N_1}[\mathrm{e}^{y_i/\rho}\mathrm{e}^{\vx_i^{*\top}\Gamma_u\vx/\rho}\vx_i^*]$.  Then, the expectation of $\xi_1$ and $\xi_2$ can be written as  $\mathbb{E}_{\vx_1,\zeta_1}[\xi_1]=\inner{[\mathbb{E}[F_1]]}{\vu}\vx+\inner{\vu}{\vx}\mathbb{E}[F_1]$ and $\mathbb{E}_{\vx_1,\zeta_1}[\xi_2]=\inner{\mathbb{E}[F_2]}{\vu}\vx+\inner{\vu}{\vx}\mathbb{E}[F_2]$ respectively.

Based on the argument above, the term $\frac{\xi_1\pi_2-\xi_2\pi_1}{\pi_2^2}$ is concentrated on
\begin{equation*}
    \left(\mathbb{E}[\pi_2]\right)^{-2} \left\{\inner{\mathbb{E}[F_1]\mathbb{E}[\pi_2]}{\vu}\vx+\inner{\vu}{\vx}\mathbb{E}[F_1]\mathbb{E}[\pi_2]-\inner{\mathbb{E}[F_2]\mathbb{E}[\pi_1]}{\vu}\vx-\inner{\vu}{\vx}\mathbb{E}[F_2]\mathbb{E}[\pi_1]\right\}.
\end{equation*}
Also, following the same argument to obtain Eq.~\eqref{eq:gradtechnique}, we have
\begin{equation*}
\mathbb{E}_{\vx_1,\zeta_1}\left[\mathrm{e}^{\bar\sigma_*(\inner{\vx_1}{\beta})/\rho+\zeta_1/\rho}\mathrm{e}^{\vx_1^{*\top}\Gamma_u\vx/\rho}\right]=k_\rho \exp\left(\frac{\|\sqrt{r}\Gamma^{*\top}\Gamma_u\vx\|^2}{2\rho^2}\right)\left(\sum_{i\geq0}\frac{B_{i}}{i!}z^i\right), 
\end{equation*}
and
\begin{align*}
     &\mathbb{E}_{\vx_1,\zeta_1}\left[\frac{\bar\sigma_*(\inner{\vx_1}{\beta})+\zeta_1}{\rho}\mathrm{e}^{\bar\sigma_*(\inner{\vx_1}{\beta})/\rho+\zeta_1/\rho}\mathrm{e}^{\vx_1^{*\top}\Gamma_u\vx/\rho}\right]\\
     =&k_\rho \exp\left(\frac{\|\sqrt{r}\Gamma^{*\top}\Gamma_u\vx\|^2}{2\rho^2}\right)\left(\sum_{i\geq0}\frac{A_{i}}{i!}z^i\right)
     +
     k_\rho'\exp\left(\frac{\|\sqrt{r}\Gamma^{*\top}\Gamma_u\vx\|^2}{2\rho^2}\right)\left(\sum_{i\geq0}\frac{B_{i}}{i!}z^i\right). 
\end{align*}

Using these result, we obtain that  
\begin{align*}
    &\left(\mathbb{E}_{\vx_1,\zeta_1}[\pi_2]\right)^{-2}(\mathbb{E}_{x_1,\zeta_1}[F_1]\mathbb{E}_{x_1,\zeta_1}[\pi_2]-\mathbb{E}_{x_1,\zeta_1}[F_2]\mathbb{E}_{x_1,\zeta_1}[\pi_1])\\
    =&\sqrt{r}\Gamma^*\left(k_\rho\sum_{i\geq0}\frac{B_{i}}{i!}z^i\right)^{-2} \times \\
    &  \textstyle 
    \left\{\left[k_{\rho}\!\left(\sum\limits_{i\geq0}\tfrac{A_{i+1}}{i!}z^i\!\right)\beta
    \!+\!\frac{k_\rho}{\rho}\left(\sum\limits_{i\geq0}\tfrac{A_{i}}{i!}z^i\right)\Gamma_u\vx
    +k_\rho'\!\left(\!\sum\limits_{i\geq0}\tfrac{B_{i+1}}{i!}z^i\!\right)\beta
    \!+\!\frac{k_\rho'}{\rho}\!\left(\!\sum\limits_{i\geq0}\frac{B_{i}}{i!}z^i\!\right)\Gamma_u\vx \right]\cdot
     k_\rho\!\left(\!\sum\limits_{i\geq0}\tfrac{B_{i}}{i!}z^i\!\right) \right. \\
    & \textstyle  \left. 
    -\left[k_{\rho}\beta \left(\sum\limits_{i\geq0}\frac{B_{i+1}}{i!}z^i \right)+\frac{k_\rho}{\rho}\Gamma_u\vx \left(\sum\limits_{i\geq0}\frac{B_{i}}{i!}z^i\right)\right]\cdot \left[k_\rho \left(\sum\limits_{i\geq0}\frac{A_{i}}{i!}z^i\right)+k_\rho'\left(\sum\limits_{i\geq0}\frac{B_{i}}{i!}z^i\right)\right]\right\}\\
    =&\textstyle \sqrt{r}\Gamma^*\left(k_\rho\sum\limits_{i\geq0}\frac{B_{i}}{i!}z^i\right)^{-2}\left\{\left[k_{\rho}\left(\sum\limits_{i\geq0}\frac{A_{i+1}}{i!}z^i\right) \beta+k_\rho'\left(\sum\limits_{i\geq0}\frac{B_{i+1}}{i!}z^i\right)\beta\right]\cdot k_\rho\left(\sum\limits_{i\geq0}\frac{B_{i}}{i!}z^i\right)\right.\\
    &\textstyle \left. -k_{\rho}\beta \left(\sum\limits_{i\geq0}\frac{B_{i+1}}{i!}z^i\right)\cdot \left[k_\rho \left(\sum\limits_{i\geq0}\frac{A_{i}}{i!}z^i\right)+k_\rho'\left(\sum\limits_{i\geq0}\frac{B_{i}}{i!}z^i \right) \right] \right\}.
\end{align*}
    Therefore, we can expand this value using $z$. Specifically,
\begin{equation*}
    \left(\mathbb{E}_{x_1,\zeta_1}[\pi_2]\right)^{-2}(\mathbb{E}_{x_1,\zeta_1}[F_1]\mathbb{E}_{x_1,\zeta_1}[\pi_2]-\mathbb{E}_{x_1,\zeta_1}[F_2]\mathbb{E}_{x_1,\zeta_1}[\pi_1])=\sqrt{r}\Gamma^*\left(P_0+P_1z+\dots\right)\beta
\end{equation*}
    holds.
    By letting $\gamma(\vx,y)=P_0+P_1z+\dots$, we obtain
\begin{equation*}
    \frac{\mathbb{E}[\xi_1]\mathbb{E}[\pi_2]-\mathbb{E}[\xi_2]\mathbb{E}[\pi_1]}{\mathbb{E}[\pi_2]^2}=\vx \cdot \left(\gamma(\vx,y) \inner{\sqrt{r}\Gamma^*\beta}{\vu} \right)+\gamma(\vx,y) \inner{\vx}{\vu} \sqrt{r}\Gamma^*\beta. 
\end{equation*}
    
Following the same argument as Lemma 20 in \citet{nishikawa2025nonlinear} yields the order of $P_i$.

Next we deal with the deviation from the expectation.
Using the same technique as Lemma 18 in \citet{nishikawa2025nonlinear}, we have
\begin{equation*}
    \frac{1}{N_1}\sum_{i=1}^{N_1}\frac{y_i}{\rho}\mathrm{e}^{y_i/\rho}\mathrm{e}^{\vx_i^{*\top}\Gamma_u\vx/\rho}=\mathbb{E}_{\vx_1,\zeta_1}\left[\frac{\bar\sigma_*(\inner{\vx_1}{\beta})+\zeta_1}{\rho}\mathrm{e}^{\bar\sigma_*(\inner{\vx_1}{\beta})/\rho+\zeta_1/\rho}\mathrm{e}^{\vx_1^{*\top}\Gamma_u\vx/\rho}\right]+\tilde{O}(N_1^{-1/2}), 
\end{equation*}
and
\begin{equation*}
    \frac{1}{N_1}\sum_{i=1}^{N_1}\mathrm{e}^{y_i/\rho}\mathrm{e}^{\vx_i^{*\top}\Gamma_u\vx/\rho}=\mathbb{E}_{\vx_1,\zeta_1}\left[\mathrm{e}^{\bar\sigma_*(\inner{\vx_1}{\beta})/\rho+\zeta_1/\rho}\mathrm{e}^{\vx_1^{*\top}\Gamma_u\vx/\rho}\right]+\tilde{O}(N_1^{-1/2}). 
\end{equation*}
Moreover, noting that $\sqrt{r}\Gamma^*\vx_i^*=\tilde{O}(\sqrt{r})$ 
(see Lemma~\ref{lemm:Gammax}), from Lemma~\ref{lemm:difvector2}, we have
\begin{align*}
    &\frac{1}{N_1}\sum_{i=1}^{N_1}\sqrt{r}\Gamma^*\frac{y_i}{\rho}\mathrm{e}^{y_i/\rho}\mathrm{e}^{\vx_i^{*\top}\Gamma_u\vx/\rho}\vx_i^*\\  &=\sqrt{r}\Gamma^*\mathbb{E}_{\vx_1,\zeta_1}\left[\frac{\bar\sigma_*(\inner{\vx_1}{\beta})+\zeta_1}{\rho}\mathrm{e}^{\bar\sigma_*(\inner{\vx_1}{\beta})/\rho+\zeta_1/\rho}\mathrm{e}^{\vx_1^{*\top}\Gamma_u\vx/\rho}\vx_1^*\right]+\tilde{O}(r^{1/2}N_1^{-1/2}), 
\end{align*}
and
\begin{equation*}
    \frac{1}{N_1}\sum_{i=1}^{N_1}\sqrt{r}\Gamma^*\mathrm{e}^{y_i/\rho}\mathrm{e}^{\vx_i^{*\top}\Gamma_u\vx/\rho}\vx_i^*=\sqrt{r}\Gamma^*\mathbb{E}_{\vx_1,\zeta_1}\left[\mathrm{e}^{\bar\sigma_*(\inner{\vx_1}{\beta})/\rho+\zeta_1/\rho}\mathrm{e}^{\vx_1^{*\top}\Gamma_u\vx/\rho}\vx_1^*\right]+\tilde{O}(r^{1/2}N_1^{-1/2}).
\end{equation*}
Hence, we obtain that 
\begin{equation*}
    \sqrt{r}\Gamma^*\pi_2^{-2}\left(F_1\pi_2-F_2\pi_1\right) = \frac{\poly(z)\beta+\delta_2}{\poly(z)+\delta_1}, 
\end{equation*}
for some $\delta_1$ and $\delta_2$ which satisfy $\delta_1 = \tilde{O}(1/\sqrt{N_1})$ and $\delta_2 = \tilde{O}\left(\sqrt{\frac{r}{N_1}}\right)$ with high probability.
Let $\vn_2=\sqrt{r}\Gamma^*\{\pi_2^{-2}\left(F_1\pi_2-F_2\pi_1\right)-\gamma(\vx,y)\beta\}$, then $\vn_2=\tilde{O}(\sqrt{\frac{r}{N_1}})$ holds.

Also, noting that $\inner{\vx_i^*}{\vu}=C_u\tilde{O}_{d}(1)$ with high probability,
it holds that 
\begin{align*}
    &\frac{1}{N_1}\sum_{i=1}^{N_1}\frac{y_i}{\rho}\mathrm{e}^{y_i/\rho}\mathrm{e}^{\vx_i^{*\top}\Gamma_u\vx/\rho}\inner{\vx_i^*}{\vu}\\
    =&\mathbb{E}_{\vx_1,\zeta_1}\left[\frac{\bar\sigma_*(\inner{\vx_1}{\beta})+\zeta_1}{\rho}\mathrm{e}^{\bar\sigma_*(\inner{\vx_1}{\beta})/\rho+\zeta_1/\rho}\mathrm{e}^{\vx_1^{*\top}\Gamma_u\vx/\rho}\inner{\vx_1^*}{\vu}\right]+\tilde{O}(C_uN_1^{-1/2}), 
\end{align*}
and
\begin{equation*}
    \frac{1}{N_1}\sum_{i=1}^{N_1}\mathrm{e}^{y_i/\rho}\mathrm{e}^{\vx_i^{*\top}\Gamma_u\vx/\rho}\inner{\vx_i^*}{\vu}=\mathbb{E}_{\vx_1,\zeta_1}\left[\mathrm{e}^{\bar\sigma_*(\inner{\vx_1}{\beta})/\rho+\zeta_1/\rho}\mathrm{e}^{\vx_1^{*\top}\Gamma_u\vx/\rho}\inner{\vx_1^*}{\vu}\right]+\tilde{O}(C_uN_1^{-1/2}). 
\end{equation*}
Therefore, letting $n_1 = \pi_2^{-2}\left\{\inner{F_1}{\vu}\pi_2-\inner{F_2}{\vu}\pi_1\right\} - \gamma(\vx,y)\inner{\beta^*}{\vu}$, we obtain that 
$n_1 = \tilde{O}(C_u\sqrt\frac{1}{N_1})$.

In summary, we obtain that 
\begin{align*}
    & \sqrt{r}\Gamma_*\frac{\xi_1\pi_2-\xi_2\pi_1}{\pi_2^2}=\sqrt{r}\Gamma_*[\vx \cdot (\gamma(\vx,y) \inner{\beta^*}{\vu})+\gamma(\vx,y) \inner{\vx}{\vu} \beta^*]+\sqrt{r}\Gamma^*\vx \cdot n_1+ \inner{\vx}{\vu} \vn_2, 
\end{align*}
where $n_1=\tilde{O}\left(C_u\sqrt\frac{1}{N_1}\right)$ and $\vn_2=\tilde{O}\left(\sqrt\frac{r}{N_1}\right)$. 
\end{proof}

\begin{lemma}\label{lemm:subweibull}
    The norm of the vector $\|\sqrt{r}\Gamma^*\frac{\partial f_{\mathrm{IC}}}{\partial \vu}\|$ has sub-Weibull tail with tail index $1/(P+2)$.
\end{lemma}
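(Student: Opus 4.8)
The plan is to bound $\bigl\|\sqrt{r}\Gamma^*\,\partial_\vu f_{\mathrm{IC}}\bigr\|$ pointwise by a product of at most $P+2$ random variables, each of which is sub-exponential (in fact sub-Gaussian), and then to invoke the Orlicz product rule $\bigl\|\prod_k X_k\bigr\|_{\psi_\theta}\le\prod_k\|X_k\|_{\psi_{\theta_k}}$ with $1/\theta=\sum_k 1/\theta_k$: a product of $P+2$ sub-exponential ($\psi_1$) variables is then sub-Weibull with tail index $1/(P+2)$, which is exactly the assertion (treating the sub-Gaussian factors merely as $\psi_1$ is wasteful but gives precisely the stated index).

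First I would return to the explicit expression for $\partial_\vu f_{\mathrm{IC}}$ established at the start of the proof of Lemma~\ref{lemm:gradientcalc}: since $\sigma'\in\{0,1\}$ and $a_j=\alpha,\ v_j\in\{\pm1\}$ in the relevant stage, $\partial_\vu f_{\mathrm{IC}}=\alpha L_m\,\nabla_\vu A$ with $|L_m|\le m$, where $A(\vx)=\sum_{i=1}^{N_1}p_i y_i$ is the attention output and $p_i\propto \mathrm{e}^{y_i/\rho}\mathrm{e}^{\vx_i^{*\top}\Gamma_u\vx/\rho}$ are the column-softmax weights ($p_i\ge 0,\ \sum_i p_i=1$). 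The softmax-derivative identity gives $\nabla_\vu A=\sum_i p_i(y_i-A)\nabla_\vu v_i$ with $v_i=\vx_i^{*\top}\Gamma_u\vx/\rho$ and $\nabla_\vu v_i=\rho^{-1}(\inner{\vu}{\vx}\vx_i^*+\inner{\vx_i^*}{\vu}\vx)$. The key gain of this form is that the only potentially very heavy factors — the exponentials $\mathrm{e}^{y_i/\rho}$, and the normaliser $\pi_2=\tfrac1{N_1}\sum_i \mathrm{e}^{y_i/\rho}\mathrm{e}^{v_i}$ which may be atypically small — enter only through the bounded weights $p_i\in[0,1]$, and hence do not affect the tail of $\|\nabla_\vu A\|$ at all. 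Using $|y_i-A|\le 2\max_j|y_j|$, $\sum_i p_i=1$ and the triangle inequality, I obtain
\[
\bigl\|\sqrt{r}\Gamma^*\,\partial_\vu f_{\mathrm{IC}}\bigr\|\ \le\ \frac{2|\alpha|m}{\rho}\Bigl(\max_j|y_j|\Bigr)\Bigl(|\inner{\vu}{\vx}|\max_i\|\sqrt{r}\Gamma^*\vx_i^*\|+\|\sqrt{r}\Gamma^*\vx\|\,\max_i|\inner{\vx_i^*}{\vu}|\Bigr).
\]

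Next I would identify each factor. Since $\sigma_*(z)=\sum_{i=Q}^P\frac{c_i}{i!}\mathrm{He}_i(z)$ with $\sum_i c_i^2\le R_c$ and $|\zeta_i|\le\tau$, one has $|y_i|\le C(1+|\inner{\beta}{\vx_i}|)^P$, so $\max_j|y_j|\le C(1+\max_j|\inner{\beta}{\vx_j}|)^P$ is a product of $P$ copies of $1+\max_j|\inner{\beta}{\vx_j}|$. Conditioning on the (high-probability) pretraining event of Lemma~\ref{lemm:pretraining2}, on which $\|\sqrt{r}\Gamma^*\|_2=\tilde O(1)$, and using $\|\vu\|\le1$, the scalars $\inner{\beta}{\vx_j}$, $\inner{\vu}{\vx}$, $\inner{\vx_i^*}{\vu}=\inner{\vx_i}{\sqrt{r}\Gamma^*\vu}$ are centred Gaussians and $\|\sqrt{r}\Gamma^*\vx\|$, $\|\sqrt{r}\Gamma^*\vx_i^*\|=\|r(\Gamma^*)^2\vx_i\|$ are $\tilde O(1)$-Lipschitz images of Gaussian vectors, hence all sub-Gaussian; since $N_1=\mathrm{poly}(d)$, their maxima over $i,j\le N_1$ remain sub-Gaussian with $\psi_2$-norm inflated only by $\sqrt{\log N_1}=\mathrm{polylog}(d)$. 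Thus the right-hand side above is a finite sum of products, each of at most $P+2$ sub-Gaussian (hence $\psi_1$) factors — the $P$ factors from $\max_j|y_j|$ plus the two linear/norm factors coming from $\nabla_\vu v_i$ — times the deterministic prefactor $2|\alpha|m/\rho=\mathrm{poly}(d)$. The product rule then makes each product sub-Weibull with index $1/(P+2)$, a finite sum of such is again sub-Weibull with index $1/(P+2)$, and multiplying by $\mathrm{poly}(d)$ only rescales the Orlicz constant.

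The hard part is exactly the first reduction: the raw gradient formula $\alpha L_m(\xi_1\pi_2-\xi_2\pi_1)/\pi_2^2$ displays $\pi_2^{-2}$ and $\mathrm{e}^{y_i/\rho}$ explicitly, and neither is sub-Weibull — composing $\exp$ with a degree-$P$ polynomial of a Gaussian produces a tail heavier than any stretched exponential, and $\pi_2$ can be small — so one must cancel these into the bounded softmax weights $p_i$ before attempting any tail estimate; this is what the $\sum_i p_i(y_i-A)\nabla_\vu v_i$ representation accomplishes. Once that step is in place, the rest is routine sub-Gaussian/sub-Weibull calculus: Gaussian concentration of Lipschitz functions, a union bound for the maxima over the $N_1$ contexts, and the Orlicz product rule.
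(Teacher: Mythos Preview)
Your proof is correct and rests on the same key observation as the paper: the only potentially heavy-tailed quantities, the exponentials $\mathrm{e}^{y_i/\rho}$ and the normaliser, cancel into the bounded softmax weights $p_i$, after which one is left with a polynomial of degree at most $P+2$ in sub-Gaussian scalars. The paper carries this out by writing $\frac{\xi_1\pi_2-\xi_2\pi_1}{\pi_2^2}=\frac{\xi_1}{\pi_2}-\frac{\xi_2}{\pi_2}\cdot\frac{\pi_1}{\pi_2}$, expressing each ratio as $\sum_i p_i(\cdots)$, and then applying Cauchy--Schwarz together with the tail of a degree-$(2P+4)$ polynomial of a Gaussian; you instead use the softmax-derivative identity $\nabla_\vu A=\sum_i p_i(y_i-A)\nabla_\vu v_i$ directly, pass to a $\max_i$ bound, and finish with the Orlicz product rule. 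Your packaging is a little cleaner (the identity avoids splitting into three ratios and the product rule makes the index $1/(P+2)$ appear transparently as ``$P$ factors from $y$ plus $2$ from $\nabla_\vu v_i$''), but the substance is the same.
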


\begin{proof}
    It suffices to show that  $\left\|\sqrt{r}\Gamma_*\frac{\xi_1\pi_2-\xi_2\pi_1}{\pi_2^2}\right\|$ has sub-Weibull tail.  Note that $\frac{\xi_1\pi_2-\xi_2\pi_1}{\pi_2^2} = \frac{\xi_1}{\pi_2}-\frac{\xi_2}{\pi_2}\frac{\pi_1}{\pi_2}$.  Therefore we examine $\frac{\xi_1}{\pi_2}, \frac{\xi_2}{\pi_2}$ and $\frac{\pi_1}{\pi_2}$.  Let us define
    \begin{equation*}
        p_i = \frac{\mathrm{e}^{\vx_i^{*\top}\Gamma_u\vx/\rho}}{\sum_{i=1}^{N_1}\mathrm{e}^{y_i/\rho}\mathrm{e}^{\vx_i^{*\top}\Gamma_u\vx/\rho}},
    \end{equation*}
    then the following holds:
    \begin{equation*}
        \frac{\xi_1}{\pi_2}=\sum_{i=1}^{N_1}p_i\frac{y_i}{\rho}\{\inner{\vx_i^*}{\vu}\vx+\inner{\vu}{\vx}\vx_i^*\}
    \end{equation*}
    \begin{equation*}
        \frac{\xi_2}{\pi_2}=\sum_{i=1}^{N_1}p_i\{\inner{\vx_i^*}{\vu}\vx+\inner{\vu}{\vx}\vx_i^*\}
    \end{equation*}
    \begin{equation*}
        \frac{\pi_1}{\pi_2}=\sum_{i=1}^{N_1}p_i\frac{y_i}{\rho}
    \end{equation*}
    By the triangle inequality, we have
    \begin{equation*}
        \left\|\frac{\xi_1}{\pi_2}\right\|\leq \sum_{i=1}^{N_1}p_i\frac{y_i}{\rho}\|\inner{\vx_i^*}{\vu}\vx+\inner{\vu}{\vx}\vx_i^*\|.
    \end{equation*}
    
    By further applying the Cauchy-Schwarz inequality, we have
    \begin{equation*}
        \left\|\frac{\xi_1}{\pi_2}\right\|\leq \left(\sum_{i=1}^{N_1}p_i^2\right)\left(\sum_{i=1}^{N_1}\frac{y_i^2}{\rho^2}\|\inner{\vx_i^*}{\vu}\vx+\inner{\vu}{\vx}\vx_i^*\|^2\right).
    \end{equation*}
    Note that $\sum_{i=1}^{N_1}p_i^2\leq1$.  Next, we can see that 
    \begin{equation*}
        y_i^2\|\inner{\vx_i^*}{\vu}\vx+\inner{\vu}{\vx}\vx_i^*\|^2\leq 2y_i^2 \|\inner{\vx_i^*}{\vu}\vx\|^2+2y_i^2\|\inner{\vu}{\vx}\vx_i^*\|^2
    \end{equation*}
    Since $y_i = \sigma_*(\inner{\beta}{\vx})+\zeta_i,$ the tail probability of each term is almost equal to that of $\mathrm{poly}(t)$ where $t\sim\mathcal{N}(0,1)$, which means $y_i^2 \|\inner{\vx_i^*}{\vu}\vx\|^2$ and $y_i^2\|\inner{\vu}{\vx}\vx_i^*\|^2$ have sub-Weibull tail with tail index $2/(2P+4)$.  This means that $\left\|\frac{\xi_1}{\pi_2}\right\|$ has sub-Weibull tail with tail index $1/(P+2)$.  By applying the same argument, you can see that $\frac{\xi_2}{\pi_2}\frac{\pi_1}{\pi_2}$ has sub-Weibull tail.  Therefore, considering that $\|\sqrt{r}\Gamma^*\|_2$ is a finite constant, $\|\sqrt{r}\Gamma^*\frac{\partial f_{\mathrm{IC}}}{\partial \vu}\|$ has sub-Weibull tail with tail index $1/(P+2)$.
\end{proof}

\begin{replemma}{lemm:gradresult}{formal}
It holds that 
\begin{align*}
    &\frac{1}{2}\sqrt{r}\Gamma^* \nabla_\vu{(f_{\mathrm{IC}}(\vx)-y)^2} \\
    =& \Theta(\alpha m) \inner{\beta}{\vu}  \frac{\beta}{\kappa^{\mathrm{ie}(\sigma_*)+1}\rho^{\mathrm{ie}(\sigma_*)}}\{(\kappa \sqrt{r})^{-(\mathrm{ie}(\sigma_*)-1)}(1+O(1/\sqrt{d}))+\inner{\beta}{\vu}^{2\mathrm{ie}(\sigma_*)-2}(1+O(1/\sqrt{d}))\}\\
    &+
    \tilde{O}(\alpha^2m^2C_u\sqrt{r})
    +
    \tilde{O}\left(\alpha m C_u\sqrt{\frac{r}{N_1}}\right)+\nu, 
\end{align*}
with high probability, where $\nu$ satisfies $\nu=\tilde{O}(\alpha m C_u\sqrt{r})$ with high probability and $\mathbb{E}_{\vx}[\nu]=0$.  Moreover, $\|\nu\|$ has sub-Weibull tail. 
\end{replemma}

\begin{proof}
Note that 
    \begin{equation}\label{eq:nablasqfICydecomp}
    \frac{1}{2}\sqrt{r}\Gamma^* \nabla_\vu{(f_{\mathrm{IC}}(\vx)-y)^2} =
    f_{\mathrm{IC}}(\vx)\cdot\sqrt{r}\Gamma^*\nabla_\vu{f_{\mathrm{IC}}}(\vx)-y\cdot\sqrt{r}\Gamma^*\nabla_\vu{f_{\mathrm{IC}}}(\vx). 
    \end{equation}
First, we analyze the first term of the RHS.
Since $y_i=\tilde{O}_p(1)$, we have that 
    \begin{equation*}
    |f_{\mathrm{IC}}(\vx)|\leq \alpha m\left|\frac{\sum_{i=1}^{N_1}y_i \mathrm{e}^{y_i/\rho} \mathrm{e}^{\vx_i^* \Gamma_u \vx /\rho}}{\sum_{i=1}^{N_1} \mathrm{e}^{y_i/\rho} \mathrm{e}^{\vx_i^* \Gamma_u \vx /\rho}}\right|=\tilde{O}_p(\alpha m). 
    \end{equation*}
Moreover, from Lemma~\ref{lemm:gradientcalc}, it holds that 
    \begin{equation*}
        \sqrt{r}\Gamma^*\frac{\partial f_{\mathrm{IC}}}{\partial \vu} = \alpha L_m \left\{\sqrt{r}\Gamma^*\vx \cdot (\gamma(\vx,y) \inner{\beta^*}{\vu}+\sqrt{r}\Gamma^*\gamma(\vx,y) \inner{\vx}{\vu} \beta^*+\sqrt{r}\Gamma^*\vx \cdot n_1+ \inner{\vx}{\vu} \vn_2\right\},
    \end{equation*}
where $\sqrt{r}\Gamma^*\vx = \tilde{O}(\sqrt{r})$ (See Lemma~\ref{lemm:Gammax})
    and $\gamma(\vx,y)=\tilde{O}(1)$ with high probability, which means that $ \sqrt{r}\Gamma^*\frac{\partial f_{\mathrm{IC}}}{\partial \vu} = \tilde{O}_p(\alpha m C_u\sqrt{r})$.
    Therefore, we obtain 
    \begin{equation*}
        f_{\mathrm{IC}}(\vx)\cdot\sqrt{r}\Gamma^*\nabla_\vu{f_{\mathrm{IC}}}(\vx) = \tilde{O}_p(\alpha^2 m^2 C_u\sqrt{r}). 
    \end{equation*}
    Next, we evaluate the second term of Eq.~\eqref{eq:nablasqfICydecomp}. 
    For that purpose, let 
     \begin{equation*}
        \nu=y\cdot\sqrt{r}\Gamma^*\nabla_\vu{f_{\mathrm{IC}}(\vx)}-\mathbb{E}_{\vx}[y\cdot\sqrt{r}\Gamma^*\nabla_\vu{f_{\mathrm{IC}}}(\vx)]. 
    \end{equation*}
    By noticing that $y=\tilde{O}_p(1)$ and $ \sqrt{r}\Gamma^*\nabla_{\vu} f_{\mathrm{IC}}(\vx) = \tilde{O}(\alpha m C_u\sqrt{r})$ with high probaibility,
    we have
    \begin{equation*}
        \nu=\tilde{O}_p(\alpha m C_u\sqrt{r}). 
    \end{equation*}
    Moreover, $\mathbb{E}_{\vx}[\nu]=0$ by definition, and $\|\nu\|$ has sub-Weibull tail from Lemma~\ref{lemm:subweibull}.  
    Thus, what remains is to calculate $\mathbb{E}_{\vx}[y\cdot\sqrt{r}\Gamma^*\nabla_\vu{f_{\mathrm{IC}}}(\vx)]$:
    \begin{align}
        &\mathbb{E}_{\vx}[y\alpha L_m \left\{\sqrt{r}\Gamma^*\vx \cdot \gamma(\vx,y) \inner{\beta^*}{\vu}+\sqrt{r}\Gamma^*\gamma(\vx,y) \inner{\vx}{\vu} \beta^*\right\}]\notag\\
        =&\alpha L_m \inner{\beta^*}{\vu}\mathbb{E}_{\vx}[\sqrt{r}\Gamma^*\vx\sigma_*(\inner{\beta}{\vx})(P_0+P_1z+\dots)] \notag \\ 
        & +\alpha L_m\sqrt{r}\Gamma^* \beta^* \mathbb{E}_{\vx}[\inner{\vu}{\vx\cdot\sigma_*(\inner{\beta}{\vx})(P_0+P_1z+\dots)}]. \label{eq:maingradient}
    \end{align}
    Now, we calculate $\mathbb{E}_{\vx}[\sqrt{r}\Gamma^*\vx\sigma_*(\inner{\beta}{\vx})z^k]$ in order to investigate $\mathbb{E}_{\vx}[\sqrt{r}\Gamma^*\vx\sigma_*(\inner{\beta}{\vx})(P_0+P_1z+\dots)]$:
    \begin{align}
        &\mathbb{E}_{\vx}[\sqrt{r}\Gamma^*\vx\sigma_*(\inner{\beta}{\vx})z^k] \notag \\
        =& \sqrt{r}\Gamma^* \left\{ \mathbb{E}_{\vx}[\sigma_*'(\inner{\beta}{\vx})z^k]\beta+\mathbb{E}_{\vx}[\sigma_*(\inner{\beta}{\vx}kz^{k-1})\Gamma_u^\top\beta]
        \right\} \label{eq:gradientzetak}
    \end{align} 
    holds from Stein's lemma, and when $k<\mathrm{ie}(\sigma_*)-1$, $\mathbb{E}_{\vx}[\sigma_*'(\inner{\beta}{\vx})z^k] = 0$ from the definition of the information exponent.  
    When $k=\mathrm{ie}(\sigma_*)-1$, we have that 
    \begin{align*}
        \sqrt{r}\Gamma^* \mathbb{E}_{\vx}[\sigma_*'(\inner{\beta}{\vx})z^k]
        \asymp&\sqrt{r}\Gamma^*\beta \left(\frac{\inner{\beta}{\sqrt{r}\Gamma_u^{\top}\Gamma^*\beta}}{\rho}\right)^{\mathrm{ie}(\sigma_*)-1},
    \end{align*} 
    and from Lemma~\ref{lemm:Gammabeta}, it holds that 
    \begin{equation*}
       \beta^*= \sqrt{r}\Gamma^*\beta=\kappa^{-1}(\beta+O(1/\sqrt{d})),
    \end{equation*}
    and
    \begin{align*}
        \inner{\beta}{\sqrt{r}\Gamma_u^{\top}\Gamma^*\beta}=&\inner{\Gamma_u\beta}{\sqrt{r}\Gamma_*\beta}\\
        =&\inner{\Gamma^*\beta}{\sqrt{r}\Gamma_*\beta}+\inner{\beta}{\vu}\inner{\vu}{\sqrt{r}\Gamma^*\beta}\\
        =&(\sqrt{r}\kappa^2)^{-1}(1+O(1/\sqrt{d}))+\kappa^{-1}\inner{\beta}{\vu}\{\inner{\beta}{\vu}+O(1/\sqrt{d})\}\\
        =&(\sqrt{r}\kappa^2)^{-1}(1+O(1/\sqrt{d}))+\kappa^{-1}\inner{\beta}{\vu}^2\{1+O(1/\sqrt{d})\} 
    \end{align*}
    with high probability.  By ignoring small terms, this yields that  
    \begin{align*}
        &\sqrt{r}\Gamma^*\beta \left(\frac{\inner{\beta}{\sqrt{r}\Gamma_u^{\top}\Gamma_*\beta}}{\rho}\right)^{\mathrm{ie}(\sigma_*)-1}\\
        &=\frac{P_{\mathrm{ie}(\sigma_*)-1}\beta}{\kappa^{\mathrm{ie}(\sigma_*)}\rho^{\mathrm{ie}(\sigma_*)-1}}\{(\kappa \sqrt{r})^{-(\mathrm{ie}(\sigma_*)-1)}(1+O(1/\sqrt{d}))+\inner{\beta}{\vu}^{2\mathrm{ie}(\sigma_*)-2}(1+O(1/\sqrt{d})\}.
    \end{align*}
    We can see that the other term in Eq.~\eqref{eq:gradientzetak} is negligible.  
    Using $P_{\mathrm{ie}(\sigma_*)-1}=\Theta(1/\rho)$, plugging this result into Eq.~\eqref{eq:maingradient} yields
    \begin{align*}
        &\mathbb{E}_{\vx}[y\alpha L_m \left\{\sqrt{r}\Gamma^*\vx \cdot \gamma(\vx,y) \inner{\beta^*}{\vu}+\sqrt{r}\Gamma^*\gamma(\vx,y) \inner{\vx}{\vu} \beta^*\right\}]\\
        =&\alpha L_m \inner{\beta}{\vu}  \frac{\beta}{\kappa^{\mathrm{ie}(\sigma_*)+1}\rho^{\mathrm{ie}(\sigma_*)}}\{(\kappa \sqrt{r})^{-(\mathrm{ie}(\sigma_*)-1)}(1+O(1/\sqrt{d}))+\inner{\beta}{\vu}^{2\mathrm{ie}(\sigma_*)-2}(1+O(1/\sqrt{d})\} 
    \end{align*}
    with high probability.  Finally, we evaluate the term $\mathbb{E}_{\vx}[\alpha L_m \sigma_*(\inner{\beta}{\vx})(\sqrt{r}\Gamma^*\vx \cdot n_1+ \inner{\vx}{\vu} \vn_2)]$.  From Corollary 17 of \citet{oko2024pretrainedtransformerefficientlylearns}, we have $\sigma_*(\inner{\beta}{\vx})=O_p(\log^{\deg(\sigma_*)/2}{d})$.  
    Moreover, from Lemma~\ref{lemm:gradientcalc}, it holds that 
    $n_1=\tilde{O}(C_u\sqrt\frac{1}{N_1})$ and $\vn_2=\tilde{O}(\sqrt\frac{r}{N_1})$ with high probability.
    Therefore, we arrive at 
    \begin{equation*}
        \mathbb{E}_{\vx}[\alpha L_m \sigma_*(\inner{\beta}{\vx})(\sqrt{r}\Gamma^*\vx \cdot n_1+ \inner{\vx}{\vu} \vn_2)] = \tilde{O}_p\left(\alpha m C_u\sqrt{\frac{r}{N_1}}\right).
    \end{equation*}
    Combining all results above completes the proof.
\end{proof}

The following lemma will be useful in the analysis of weak recovery (the next section).

\begin{lemma}\label{lemm:p0_small}
When $\mathrm{ge}(\sigma_*)=2$, $\gamma(\vx,y)=\tilde{O}(1/\sqrt{r})$ holds with high probability. 
\end{lemma}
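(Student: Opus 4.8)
The plan is to extract a closed form for $\gamma(\vx,y)$ from the algebra already carried out in the proof of Lemma~\ref{lemm:gradientcalc}, and then to turn the parity constraint that $\mathrm{ge}(\sigma_*)=2$ imposes on $\sigma_*$ into a pointwise bound on $\gamma$.

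First I would revisit the step of Lemma~\ref{lemm:gradientcalc} in which the $\beta$-direction part of $(\mathbb{E}[\pi_2])^{-2}\big(\mathbb{E}[F_1]\mathbb{E}[\pi_2]-\mathbb{E}[F_2]\mathbb{E}[\pi_1]\big)$ is reduced to $\sqrt{r}\Gamma^*\beta\,(k_\rho B(z))^{-2}\{\cdots\}$, writing $A(z)=\sum_i\frac{A_i}{i!}z^i=\bar{\sigma}_*(z)\mathrm{e}^{\bar{\sigma}_*(z)}$ and $B(z)=\sum_i\frac{B_i}{i!}z^i=\mathrm{e}^{\bar{\sigma}_*(z)}$. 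Collecting terms, the $k_\rho k_\rho'$ cross-terms (those carrying the label noise) cancel, the surviving numerator is $k_\rho^2\big(A'(z)B(z)-A(z)B'(z)\big)$, and after dividing by $(k_\rho B(z))^2$ this gives $\gamma(\vx,y)=\sum_iP_iz^i=(A/B)'(z)=\bar{\sigma}_*'(z)$. Since $z$ will be shown to be small, the cutoff in the definition of $\bar{\sigma}_*$ is inactive on the relevant range, so $A$ and $B$ are the Taylor series of the polynomials $\tfrac{\sigma_*(z)}{\rho}\mathrm{e}^{\sigma_*(z)/\rho}$ and $\mathrm{e}^{\sigma_*(z)/\rho}$; therefore $\gamma(\vx,y)=\sigma_*'(z)/\rho$ identically, with $z=\inner{\beta}{\sqrt{r}\Gamma^{*\top}\Gamma_u\vx}/\rho$.

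Next I would use the parity of $\sigma_*$. Because $\mathrm{ge}(\sigma_*)=2$, Proposition~6 of \citet{lee2024neuralnetworklearnslowdimensional} forces $\sigma_*$ to be an even polynomial, so $\sigma_*'$ is odd; in particular $\sigma_*'(0)=0$, and $\sigma_*'(z)=z\,h(z)$ for a polynomial $h$ of degree at most $P-2=O(1)$ whose coefficients are $O(1)$ (using $\sum_i (c_i^t)^2\le R_c=O(1)$ and that $\mathrm{He}_0,\dots,\mathrm{He}_{P-1}$ have bounded coefficients), whence $|\gamma(\vx,y)|\le\rho^{-1}|z|\sup_{|w|\le1}|h(w)|=O(|z|)$ as soon as $|z|\le1$. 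It then remains to bound $|z|$: expanding $\Gamma_u=\Gamma^*+\vu\vu^\top$ gives $\rho z=\inner{\beta}{\sqrt{r}\Gamma^{*\top}\Gamma^*\vx}+\inner{\beta^*}{\vu}\inner{\vu}{\vx}$. For the first term I would substitute the pretraining description $\Gamma^*=\tfrac{1}{\sqrt{r}\log^{C_\kappa}d}\big(r\mathbb{E}_\beta[\beta\beta^\top]+\mN\big)$ of Lemma~\ref{lemm:pretraining2} (with $r\mathbb{E}_\beta[\beta\beta^\top]$ the orthogonal projector onto $S_r$ and $\|\mN\|_F=O(1/\sqrt{d})$), use $\beta\in S_r$, Lemmas~\ref{lemm:Gammabeta} and \ref{lemm:Gammax}, and the Gaussian bound $\inner{\beta}{\vx}=\tilde{O}(1)$ w.h.p., which makes the first term $\tilde{O}(1/\sqrt{r})$. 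For the second term I would use that the lemma is applied at the weak-recovery initialization $\vu=\vu^{(0)}$, where $\|\vu^{(0)}\|=1/\sqrt{r}$, $\inner{\beta}{\vu^{(0)}}=\tilde{O}(1/r)$ (since $\vu^{(0)}$ is, up to $\tilde O(1/\sqrt r)$ relative perturbation, a uniformly random unit vector of $S_r$ rescaled to norm $r^{-1/2}$), and $\inner{\vu^{(0)}}{\vx}=\tilde{O}(1/\sqrt{r})$ w.h.p., so the second term is $\tilde{O}(1/r^{3/2})$. Dividing by $\rho$ leaves $|z|=\tilde{O}(1/\sqrt{r})\le1$ for $d$ large, and the bound $|\gamma(\vx,y)|=O(|z|)$ yields $\gamma(\vx,y)=\tilde{O}(1/\sqrt{r})$ with high probability.

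The hard part will be the first step: bookkeeping the many terms in the expansion inside Lemma~\ref{lemm:gradientcalc} so that the noise-dependent factors $k_\rho'$ drop out and the numerator collapses exactly to $\big(A/B\big)'$, and justifying that the truncation defining $\bar{\sigma}_*$ plays no role in identifying $A/B$ with the Taylor series of $\sigma_*/\rho$. The subsequent magnitude estimate on $z$ is routine once the pretraining lemma and standard Gaussian tail bounds are in hand; the only subtlety there is that it must be tied to the small-norm weak-recovery initialization of $\vu$, not to an arbitrary $\vu$ with $\|\vu\|\le1$.
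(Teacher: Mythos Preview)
Your plan and the paper's proof share the same backbone: use the parity of $\sigma_*$ (which $\mathrm{ge}(\sigma_*)=2$ forces) to make the constant term $P_0$ of $\gamma(\vx,y)=\sum_iP_iz^i$ vanish, then invoke $z=\tilde O(1/\sqrt r)$. Your algebraic repackaging $\gamma=(A/B)'$ is correct and a bit cleaner than the paper's direct extraction of $P_0=\tfrac{A_1B_0-A_0B_1}{B_0^2}$; your estimate on $z$ also makes explicit (and correctly ties to the weak-recovery scaling $\|\vu\|=1/\sqrt r$) what the paper simply asserts.

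One genuine point of care: the further step $(A/B)'=\bar\sigma_*'=\sigma_*'/\rho$ hinges on reading $A_i,B_i$ as \emph{Taylor} coefficients of $\bar\sigma_*e^{\bar\sigma_*}$ and $e^{\bar\sigma_*}$. The paper's displayed definition indeed says that, but the actual computation in Lemma~\ref{lemm:gradientcalc} (the expansion $e^{\bar\sigma_*(\langle\beta,\vx_1\rangle)}=\sum_i\tfrac{B_i}{i!}\mathrm{He}_i(\langle\beta,\vx_1\rangle)$) and the paper's own proof of this lemma treat $A_i,B_i$ as \emph{Hermite} coefficients. Under that reading, the formal series $A(z)=\sum_i\tfrac{A_i}{i!}z^i$ is \emph{not} $\bar\sigma_*e^{\bar\sigma_*}$, so your identity $(A/B)'=\sigma_*'/\rho$ is not justified. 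Your argument still goes through, however: since the Hermite expansion of an even function has only even-index terms, $A$ and $B$ remain even power series, $(A/B)'$ is odd, hence $P_0=0$; the remaining $P_i=\tilde O(1)$ then give $\gamma=\tilde O(|z|)=\tilde O(1/\sqrt r)$, exactly the paper's conclusion. So the outline is sound, but you should not lean on the closed form $\gamma=\sigma_*'(z)/\rho$ without resolving the coefficient convention.
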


\begin{proof}
Recall that
\begin{align*}
    \gamma(\vx,y)&=P_0+P_1z+\dots\\
    =&\left(k_\rho\sum_{i\geq0}\frac{B_{i}}{i!}z^i\right)^{-2}\left\{
    \left[k_{\rho}\left(\sum_{i\geq0}\frac{A_{i+1}}{i!}z^i\right)+k_\rho'\left(\sum_{i\geq0}\frac{B_{i+1}}{i!}z^i\right)\right]\cdot k_\rho\left(\sum_{i\geq0}\frac{B_{i}}{i!}z^i\right)\right.\\
    &\left. -k_{\rho} \left(\sum_{i\geq0}\frac{B_{i+1}}{i!}z^i\right)\cdot\left[k_\rho \left(\sum_{i\geq0}\frac{A_{i}}{i!}z^i\right)+k_\rho'\left(\sum_{i\geq0}\frac{B_{i}}{i!}z^i\right)\right] \right\}. 
\end{align*}
    By expanding the RHS in powers of $z$ and comparing their coefficients, we obtain 
    \begin{align*}
        P_0&=\frac{(k_{\rho}A_0+k_{\rho}'B_0)\cdot k_{\rho}B_1-k_{\rho}B_0(k_\rho A_1+k_\rho' B_1)}{(k_{\rho}B_0)^2}\\
        &=\frac{A_0B_1-B_0A_1}{B_0^2}.
    \end{align*}
    When $\mathrm{ge}(\sigma_*)$ is 2, $\sigma_*$ is even, and thus $\exp(\bar\sigma_*)$ and $\bar\sigma_*\exp(\bar\sigma_*)$ are also even.  This means that $\exp(\bar\sigma_*)$ and $\bar\sigma_* \exp(\bar\sigma_*)$ can be expanded in polynomial of $z^2$.  This means that the coefficients of $z$ in the hermite expansion of $\exp(\bar\sigma_*)$ and $\bar\sigma_* \exp(\bar\sigma_*)$ are $0$, namely $A_1 = B_1=0$, which yields $P_0=0$.
    Hence, we have  
    \begin{equation*}
        \gamma(\vx,y)=P_1z+P_2z^2+\dots, 
    \end{equation*}
    where $P_1 = \tilde{O}(1)$ and $z=\tilde{O}_p(1/\sqrt{r})$, which yields the assertion.
\end{proof}

\section{One step gradient descent for weak recovery}\label{appx:weak}

\begin{lemma}\label{lemm:onestepgdvector}
    Set $C_u = 1/\sqrt{r}$, $N_1 = \tilde{\Omega}(r^{\mathrm{ge}(\sigma_*)+1})$ and $N_{new} = \tilde{\Omega}(r^{\mathrm{ge}(\sigma_*)+2})$.  Let $w_i\overset{\text{i.i.d.}}{\sim} \mathcal {N}(0,\mI_d) $ for $i=1,2,\dots,N_{new}$ and $\vh=\frac{1}{2}\sqrt{r}\Gamma^*   \frac{1}{N_{new}}\sum_{i=1}^{N_{new}}\nabla_\vu{(f_{\mathrm{IC}}(\vw_i)-(g(\Gamma^*,\vw_i)-b))^2}$, then
    \begin{align*}
    \vh&= P_2'\Theta(\alpha_1 m) \inner{\beta}{\vu}  \frac{\beta}{\kappa^{2\mathrm{ge}(\sigma_*)}\rho^{\mathrm{ge}(\sigma_*)}}( \sqrt{r})^{-(2\mathrm{ge}(\sigma_*)-1)}(1+O(1/\sqrt{d}))\\
    &+\tilde{O}(\alpha_1^2 m^2 r^{-1/2}r^{-\frac{\mathrm{ge}(\sigma_*)-1}{2}})
    +\tilde{O}\left(\alpha_1 m \sqrt{\frac{r^{-\mathrm{ge}(\sigma_*)}}{N_1}}\right)
    +\tilde{O}\left(\alpha_1 m \sqrt{\frac{r^{-\mathrm{ge}(\sigma_*)}}{N_{new}}}\right)
    +\tilde{O}(\alpha_1 m r^{-\mathrm{ge}(\sigma_*)-1/2}) 
\end{align*}
holds with high probability.
\end{lemma}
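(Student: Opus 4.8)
I would follow the blueprint of Lemma~\ref{lemm:gradresult}, with the self-distillation target $g(\Gamma^*,\vw_i)-b$ replacing the true label $y$. Expanding the square gives $\vh = T_1 - T_2$, where
\begin{equation*}
T_1 = \sqrt r\Gamma^*\frac{1}{N_{new}}\sum_{i=1}^{N_{new}} f_{\mathrm{IC}}(\vw_i)\,\nabla_\vu f_{\mathrm{IC}}(\vw_i),\qquad
T_2 = \sqrt r\Gamma^*\frac{1}{N_{new}}\sum_{i=1}^{N_{new}}\bigl(g(\Gamma^*,\vw_i)-b\bigr)\nabla_\vu f_{\mathrm{IC}}(\vw_i),
\end{equation*}
all evaluated at $\vu=\vu^{(0)}$ with $C_u=\|\vu\|=1/\sqrt r$. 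The term $T_1$ should be negligible: the LoRA bump $\vu\vu^\top$ has operator norm $1/r$, so Lemma~\ref{lemm:tightresultforpt} still controls $f_{\mathrm{IC}}$ on the fresh Gaussian points up to a negligible correction, whence $f_{\mathrm{IC}}(\vw_i)$ is a deterministic offset plus a $\vw_i$-dependent part of size $\tilde O_p(\alpha_1 m\,r^{-\mathrm{ge}(\sigma_*)/2})$. Pairing the offset against the \emph{averaged} gradient---whose mean, by Lemma~\ref{lemm:gradientcalc}, is proportional to $\inner{\beta}{\vu}\,\beta^*$ and hence $\tilde O(\alpha_1 m\,r^{-\mathrm{ge}(\sigma_*)/2})$ at initialization (using Lemma~\ref{lemm:p0_small} when $\mathrm{ge}(\sigma_*)=2$)---and the varying part against the pointwise bound $\|\sqrt r\Gamma^*\nabla_\vu f_{\mathrm{IC}}\|=\tilde O_p(\alpha_1 m C_u\sqrt r)$, I would conclude $T_1=\tilde O(\alpha_1^2 m^2\,r^{-1/2}r^{-(\mathrm{ge}(\sigma_*)-1)/2})$ with high probability, accounting for the $\alpha_1^2m^2$ error term.

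\textbf{The teacher term.} For $T_2$ I would set $\bar g(\vw):=g(\Gamma^*,\vw)-\mathbb{E}_\vw[g(\Gamma^*,\vw)]$, which by Lemma~\ref{lemm:tightresultforpt} equals $P_2'\bigl[(\inner{\vw}{\beta}/\sqrt r)^{\mathrm{ge}(\sigma_*)}-\mathrm{const}\bigr]$ up to a $\tilde o(r^{-\mathrm{ge}(\sigma_*)/2})$ term; since $\mathrm{Var}_\vw(g)=\tilde O(r^{-\mathrm{ge}(\sigma_*)})$ the empirical average obeys $b=\mathbb{E}_\vw[g]+\epsilon_b$ with $|\epsilon_b|=\tilde O(\sqrt{r^{-\mathrm{ge}(\sigma_*)}/N_{new}})$. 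Thus $T_2 = \sqrt r\Gamma^*\,\overline{\bar g(\vw_i)\nabla_\vu f_{\mathrm{IC}}(\vw_i)} + \epsilon_b\,\sqrt r\Gamma^*\,\overline{\nabla_\vu f_{\mathrm{IC}}(\vw_i)}$, the second summand being $\tilde O(\alpha_1 m|\epsilon_b|)$, one of the $N_{new}$ error terms. For the first summand, each contribution $\bar g(\vw_i)\sqrt r\Gamma^*\nabla_\vu f_{\mathrm{IC}}(\vw_i)$ has norm $\tilde O_p(\alpha_1 m\,r^{-\mathrm{ge}(\sigma_*)/2})$ and, by Lemma~\ref{lemm:subweibull}, a sub-Weibull tail, so a Bernstein-type bound for sums of sub-Weibull vectors gives
\begin{equation*}
\sqrt r\Gamma^*\,\overline{\bar g(\vw_i)\nabla_\vu f_{\mathrm{IC}}(\vw_i)} = \mathbb{E}_\vw\bigl[\bar g(\vw)\,\sqrt r\Gamma^*\nabla_\vu f_{\mathrm{IC}}(\vw)\bigr] + \tilde O\Bigl(\alpha_1 m\sqrt{r^{-\mathrm{ge}(\sigma_*)}/N_{new}}\Bigr);
\end{equation*}
this is where $N_{new}=\tilde\Omega(r^{\mathrm{ge}(\sigma_*)+2})$ is used---it drives both this fluctuation and $\epsilon_b$ below the main signal, which at initialization is of order $\alpha_1 m\,r^{-\mathrm{ge}(\sigma_*)}$.

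\textbf{The main term.} It remains to compute $\mathbb{E}_\vw[\bar g(\vw)\sqrt r\Gamma^*\nabla_\vu f_{\mathrm{IC}}(\vw)]$. Inserting the expansion of $\sqrt r\Gamma^*\nabla_\vu f_{\mathrm{IC}}$ from Lemma~\ref{lemm:gradientcalc}, the pieces carrying the $N_1$-context noise $n_1,\vn_2$ contribute $\tilde O(\alpha_1 m\sqrt{r^{-\mathrm{ge}(\sigma_*)}/N_1})$---using $|\bar g(\vw)|=\tilde O(r^{-\mathrm{ge}(\sigma_*)/2})$, $\|\sqrt r\Gamma^*\vw\|=\tilde O(\sqrt r)$, $\inner{\vw}{\vu}=\tilde O(C_u)$, and the stated sizes of $n_1,\vn_2$---which is the $N_1$ error term and fixes $N_1=\tilde\Omega(r^{\mathrm{ge}(\sigma_*)+1})$. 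For the two leading pieces, $\alpha_1 L_m\inner{\beta^*}{\vu}\,\mathbb{E}_\vw[\bar g\,\gamma\,\sqrt r\Gamma^*\vw]$ and $\alpha_1 L_m\sqrt r\Gamma^*\beta^*\,\mathbb{E}_\vw[\bar g\,\gamma\,\inner{\vw}{\vu}]$, I would apply Stein's lemma in $\vw$: $\bar g$ depends on $\vw$ only through $\inner{\vw}{\beta}$ and $\gamma$ only through $z=\inner{\beta}{\sqrt r\Gamma^{*\top}\Gamma_u\vw}/\rho$, so everything collapses to one-dimensional Hermite integrals. Crucially, $\bar g$, as a function of $\inner{\vw}{\beta}$, is $P_2'r^{-\mathrm{ge}(\sigma_*)/2}\mathrm{He}_{\mathrm{ge}(\sigma_*)}(\cdot)$ up to lower-order and constant terms---information exponent exactly $\mathrm{ge}(\sigma_*)$---so it pairs non-trivially with the degree-$(\mathrm{ge}(\sigma_*)-1)$ monomial of $\gamma$, whose coefficient $P_{\mathrm{ge}(\sigma_*)-1}$ is $\tilde\Theta(\rho^{-1})$; combined with $\beta^*=\sqrt r\Gamma^*\beta=\kappa^{-1}(\beta+O(1/\sqrt d))$ and $\inner{\beta}{\sqrt r\Gamma_u^\top\Gamma^*\beta}=(\sqrt r\kappa^2)^{-1}(1+O(1/\sqrt d))+\kappa^{-1}\inner{\beta}{\vu}^2(1+O(1/\sqrt d))$, collecting powers of $r,\rho,\kappa$ shows both pieces equal $\Theta(\alpha_1 m)\,P_2'\,\kappa^{-2\mathrm{ge}(\sigma_*)}\rho^{-\mathrm{ge}(\sigma_*)}(\sqrt r)^{-(2\mathrm{ge}(\sigma_*)-1)}\inner{\beta}{\vu}\,\beta\,(1+O(1/\sqrt d))$. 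Since $\inner{\beta}{\vu}=\tilde O(1/\sqrt r)$ at initialization, the companion $\inner{\beta}{\vu}^{2\mathrm{ge}(\sigma_*)-2}$ term (from the $\inner{\beta}{\vu}^2$ part of $z$), the sub-leading Hermite components, and the $n_3$ contribution are all dominated by $\tilde O(\alpha_1 m\,r^{-\mathrm{ge}(\sigma_*)-1/2})$, the last error term, giving exactly the claimed identity.

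\textbf{Main obstacle.} The delicate step is the mean computation just described: one must verify that the two leading pieces do not cancel, so that the coefficient of $\inner{\beta}{\vu}\,\beta$ is genuinely $\Theta(\cdot)$ rather than $o(\cdot)$. This rests on the fine structure of $\gamma$ (in particular $P_0=0$ for $\mathrm{ge}(\sigma_*)=2$ via Lemma~\ref{lemm:p0_small}) and on $P_2'$ being bounded away from zero, which ultimately traces back to $\mathbb{E}[c_Q^t]\neq 0$ through Lemma~\ref{lemm:tightresultforpt}. A secondary technical burden is the sub-Weibull concentration in the $T_2$ step, which costs polylogarithmic factors and is what forces the extra factor of $r$ in $N_{new}$ over $N_1$.
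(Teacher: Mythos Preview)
Your plan matches the paper's proof in all essentials: the same $T_1/T_2$ decomposition, the same reduction of the teacher $g(\Gamma^*,\vw)-b$ to $P_2'r^{-\mathrm{ge}(\sigma_*)/2}\mathrm{He}_{\mathrm{ge}(\sigma_*)}(\inner{\beta}{\vw})+n_3$ via Lemma~\ref{lemm:tightresultforpt}, the same invocation of the Lemma~\ref{lemm:gradresult}-style calculation with $\sigma_*$ replaced by a function of information exponent $\mathrm{ge}(\sigma_*)$, and the same Hoeffding/sub-Weibull concentration of the centered $\nu_i$'s over $N_{new}$. The treatment of the $n_3$ residual, including the appeal to Lemma~\ref{lemm:p0_small} when $\mathrm{ge}(\sigma_*)=2$, is identical.

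Two small places where the paper is more direct than your write-up. First, for $T_1$ the paper does not split $f_{\mathrm{IC}}$ into offset-plus-fluctuation; it simply bounds $|f_{\mathrm{IC}}(\vw_i)|=\tilde O_p(\alpha_1 m)$ and uses the \emph{refined} pointwise bound $\|\sqrt r\Gamma^*\nabla_\vu f_{\mathrm{IC}}(\vw_i)\|=\tilde O(\alpha_1 m\,r^{-\mathrm{ge}(\sigma_*)/2})$ (which already absorbs the $\gamma=\tilde O(r^{-1/2})$ improvement when $\mathrm{ge}(\sigma_*)=2$), giving $T_1=\tilde O(\alpha_1^2 m^2 r^{-\mathrm{ge}(\sigma_*)/2})$ in one line. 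Your more elaborate averaging argument would also work, but it is not needed. Second, the paper handles $b$ by absorbing the empirical-mean error $\tilde O(1/\sqrt{N_{new}})$ directly into $n_3$ (this is where $N_{new}=\tilde\Omega(r^{\mathrm{ge}(\sigma_*)+2})$ enters), rather than carrying a separate $\epsilon_b$ term; the net effect is the same.
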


\begin{proof}
    From Lemma~\ref{lemm:tightresultforpt}, we have that 
    \begin{equation*}
        g(\Gamma^*,\vx)=P_1'+P_2'\left(\frac{\inner{\vx}{\beta}}{\sqrt{r}}\right)^{\mathrm{ge}(\sigma_*)}+n_3, 
    \end{equation*}
    where $P_1' = o_d(1)$, $P_2'=\Theta_d((\log{d})^{-C_{P_2}})$ and $n_3 = o_d(P_2'r^{-\mathrm{ge}(\sigma_*)/2-1/2}\log^{-2\deg(\sigma_*)+2}d)$.  
    Noticing that $g(\Gamma^*,\vx)=O(1)$ with high probability, 
    then, letting $b = \frac{1}{N_{new}}\sum_{i=1}^{N_{new}}g(\Gamma^*,\vw_i)$ and applying Lemma~\ref{lemm:hoeffding}, it holds that 
    \begin{equation*}
        g(\Gamma^*,\vw_i)-b=P_2'r^{-\frac{\mathrm{ge}(\sigma_*)}{2}}\mathrm{He}_{\mathrm{ge}(\sigma_*)}(\inner{\beta}{\vw_i})+n_3+\tilde{O}(1/\sqrt{N_{new}}). 
    \end{equation*}
    When $N_{new}=O(r^{\mathrm{ge}(\sigma_*)+2})$, the term $\tilde{O}(1/\sqrt{N_{new}})$ can be dominated by $n_3$, which means that 
    \begin{equation*}
        g(\Gamma^*,\vw_i)-b=P_2'r^{-\frac{\mathrm{ge}(\sigma_*)}{2}}\mathrm{He}_{\mathrm{ge}(\sigma_*)}(\inner{\beta}{\vw_i})+n_3, 
    \end{equation*}
    and $n_3 = o_d(P_2'r^{-\mathrm{ge}(\sigma_*)/2-1/2}\log^{-2\deg(\sigma_*)+2}d)$ with high probability.

    In this stage, we use $g(\Gamma^*,\vw_i)-b$ as a teacher signal. 
    Thus, as $C_u = 1/\sqrt{r}$ and $g(\Gamma^*,\vw_i)-b=\tilde{O}(r^{-\frac{\mathrm{ge}(\sigma_*)}{2}})$ , with high probability, we have
    \begin{align*}
        &\frac{1}{2}\sqrt{r}\Gamma^* \nabla_\vu{(f_{\mathrm{IC}}(\vw_i)-(g(\Gamma^*,\vw_i)-b))^2} \\
    =& \tilde{O}(\alpha_1^2 m^2 r^{-1/2} r^{-\frac{\mathrm{ge}(\sigma_*)-1}{2}})-\left\{P_2'r^{-\frac{\mathrm{ge}(\sigma_*)}{2}}\mathrm{He}_{\mathrm{ge}(\sigma_*)}(\inner{\beta}{\vw_i})+n_3\right\}\cdot\sqrt{r}\Gamma^*\nabla_\vu{f_{\mathrm{IC}}}(\vw_i). 
    \end{align*}
    Following the same argument as Lemma~\ref{lemm:gradresult} yields
    \begin{align*}
        &P_2'r^{-\frac{\mathrm{ge}(\sigma_*)}{2}}\mathrm{He}_{\mathrm{ge}(\sigma_*)}(\inner{\beta}{\vw_i})\cdot\sqrt{r}\Gamma^*\nabla_\vu{f_{\mathrm{IC}}}(\vw_i)\\
        =& P_2'\Theta(\alpha_1 m) \inner{\beta}{\vu}  \frac{r^{-\frac{\mathrm{ge}(\sigma_*)}{2}}\beta}{\kappa^{\mathrm{ge}(\sigma_*)+1}\rho^{\mathrm{ge}(\sigma_*)}}
        \{(\kappa \sqrt{r})^{-(\mathrm{ge}(\sigma_*)-1)}(1+O(1/\sqrt{d}))+\inner{\beta}{\vu}^{2\mathrm{ge}(\sigma_*)-2}(1+O(1/\sqrt{d}))\}\\
        & +\nu + \tilde{O}\left(\alpha_1 m \sqrt{\frac{r^{-\mathrm{ge}(\sigma_*)}}{N_1}}\right)
    \end{align*}    
    with high probability.  Since $\inner{\beta}{\vu}=\tilde{O}_p(1/r)$ at the initialization, the first term (regarding $(\kappa \sqrt{r})^{-(\mathrm{ge}(\sigma_*)-1)}$) dominates the second term (regarding $\inner{\beta}{\vu}^{2\mathrm{ge}(\sigma_*)-2}$).
    Taking the average over $i=1,\dots,N_{new}$ yields that 
    \begin{align*}
        &P_2'r^{-\frac{\mathrm{ge}(\sigma_*)}{2}}\frac{1}{N_{new}}\sum_{i=1}^{N_{new}}\mathrm{He}_{\mathrm{ge}(\sigma_*)}(\inner{\beta}{\vw_i})\cdot\sqrt{r}\Gamma^*\nabla_\vu{f_{\mathrm{IC}}}(\vw_i)\\
        =& P_2'\Theta(\alpha_1 m) \inner{\beta}{\vu}  \frac{\beta}{\kappa^{2\mathrm{ge}(\sigma_*)}\rho^{\mathrm{ge}(\sigma_*)}}( \sqrt{r})^{-(2\mathrm{ge}(\sigma_*)-1)}(1+O(1/\sqrt{d}))\\
         & + \frac{1}{N_{new}}\sum_{i=1}^{N_{new}}\nu_i + \tilde{O}\left(\alpha_1 m \sqrt{\frac{r^{-\mathrm{ge}(\sigma_*)}}{N_1}}\right), 
    \end{align*}
    where $\nu_i$ is a series of i.i.d. mean-zero random variable vectors which satisfy $\nu_i = \tilde{O}_p(\alpha_1 m \cdot r^{-\mathrm{ge}(\sigma_*)/2})$. 
    Then, from Hoeffding's inequality, we have
    \begin{equation*}
        \frac{1}{N_{new}}\sum_{i=1}^{N_{new}}\nu_i = \tilde{O}\left(\alpha_1 m \sqrt{\frac{r^{-\mathrm{ge}(\sigma_*)}}{N_{new}}}\right), 
    \end{equation*}
    with high probability.
    Next we investigate the effect of $n_3$.  From Lemma~\ref{lemm:gradientcalc}, $\sqrt{r}\Gamma^*\nabla_\vu{f_{\mathrm{IC}}}(\vw_i)=\tilde{O}(\alpha_1 mC_u)=\tilde{O}(\alpha_1 m r^{-1/2})$ holds.  This leads to
    $n_3\sqrt{r}\Gamma^*\nabla_\vu{f_{\mathrm{IC}}}(\vw_i)=\tilde{O}(\alpha_1 m r^{-\mathrm{ge}(\sigma_*)/2-1})$. Moreover, from Lemma~\ref{lemm:p0_small}, when $\mathrm{ge}(\sigma_*)=2$ we have $\gamma(\vx,y)=O(1/\sqrt{r})$.  Combining this fact with $N_1=\tilde{\Omega}(r^{\mathrm{ge}(\sigma_*)+1})$ yields $\sqrt{r}\Gamma^*\nabla_\vu{f_{\mathrm{IC}}}(\vw_i) = \tilde{O}(\alpha_1 m /r)$.  Therefore, whether $\mathrm{ge}(\sigma_*)=1$ or $\mathrm{ge}(\sigma_*)=2$, we obtain
    \begin{equation*}
        n_3\sqrt{r}\Gamma^*\nabla_\vu{f_{\mathrm{IC}}}(\vw_i)=\tilde{O}(\alpha_1 m r^{-\mathrm{ge}(\sigma_*)-1/2}). 
    \end{equation*}  
    Taking the average over $i=1,\dots,N_{new}$ completes the proof.
\end{proof}

\begin{lemma}\label{lemm:onestepgdinner}
    Under the condition of Lemma~\ref{lemm:onestepgdvector}, it holds that 
    \begin{align*}
    \inner{\beta}{\vh}=& P_2'\Theta(\alpha_1 m) \inner{\beta}{\vu}  \frac{1}{\kappa^{2\mathrm{ge}(\sigma_*)}\rho^{\mathrm{ge}(\sigma_*)}}( \sqrt{r})^{-(2\mathrm{ge}(\sigma_*)-1)}(1+O(1/\sqrt{d}))\\
    &+\tilde{O}\left(\alpha_1^2 m^2 r^{-\frac{\mathrm{ge}(\sigma_*)}{2}}\right)
    +\tilde{O}\left(\alpha_1 m \sqrt{\frac{r^{-\mathrm{ge}(\sigma_*)}}{N_1}}\right)
    +\tilde{O}\left(\alpha_1 m \sqrt{\frac{r^{-\mathrm{ge}(\sigma_*)}}{N_{new}}}\right)
    +o(\alpha_1 m r^{-\mathrm{ge}(\sigma_*)-1/2}), 
\end{align*}
 with high probability.
\end{lemma}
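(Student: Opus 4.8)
The plan is to obtain $\inner{\beta}{\vh}$ by simply pairing the vector-valued expansion of $\vh$ from Lemma~\ref{lemm:onestepgdvector} with $\beta$, term by term, exploiting that $\|\beta\|=1$. The leading term of $\vh$ is a scalar multiple of $\beta$, namely $P_2'\Theta(\alpha_1 m)\inner{\beta}{\vu}\kappa^{-2\mathrm{ge}(\sigma_*)}\rho^{-\mathrm{ge}(\sigma_*)}(\sqrt{r})^{-(2\mathrm{ge}(\sigma_*)-1)}(1+O(1/\sqrt{d}))$ times $\beta$, so pairing with $\beta$ multiplies it by $\inner{\beta}{\beta}=1$ and reproduces exactly the first line of the claimed identity. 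So the only work is to control the four remainder vectors.

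For the first three remainders I would just apply Cauchy--Schwarz, $|\inner{\beta}{\vn}|\le\|\vn\|$, so each norm bound transfers verbatim. The only nontrivial point is the exponent bookkeeping: $r^{-1/2}\cdot r^{-(\mathrm{ge}(\sigma_*)-1)/2}=r^{-\mathrm{ge}(\sigma_*)/2}$, which turns $\tilde O(\alpha_1^2 m^2 r^{-1/2}r^{-(\mathrm{ge}(\sigma_*)-1)/2})$ into the $\tilde O(\alpha_1^2 m^2 r^{-\mathrm{ge}(\sigma_*)/2})$ appearing in the statement; the two sampling-error terms $\tilde O(\alpha_1 m\sqrt{r^{-\mathrm{ge}(\sigma_*)}/N_1})$ and $\tilde O(\alpha_1 m\sqrt{r^{-\mathrm{ge}(\sigma_*)}/N_{new}})$ carry over with no change.

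The one step that needs a finer accounting, rather than a blunt Cauchy--Schwarz on the aggregate, is the last remainder — the contribution of the pretraining residual $n_3$ — which must be upgraded from $\tilde O(\alpha_1 m r^{-\mathrm{ge}(\sigma_*)-1/2})$ to $o(\alpha_1 m r^{-\mathrm{ge}(\sigma_*)-1/2})$. Here I would go back to the product $n_3\cdot\inner{\beta}{\sqrt{r}\Gamma^*\nabla_\vu f_{\mathrm{IC}}(\vw_i)}$: by Lemma~\ref{lemm:tightresultforpt}, $n_3=o_d\!\big(P_2' r^{-\mathrm{ge}(\sigma_*)/2-1/2}\log^{-2\deg(\sigma_*)+2}d\big)$, which carries both an $o_d$ and a polylogarithmic suppression, while Lemma~\ref{lemm:gradientcalc} gives $\sqrt{r}\Gamma^*\nabla_\vu f_{\mathrm{IC}}(\vw_i)=\tilde O(\alpha_1 m r^{-1/2})$ (improved to $\tilde O(\alpha_1 m/r)$ when $\mathrm{ge}(\sigma_*)=2$ using Lemma~\ref{lemm:p0_small}). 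Multiplying and averaging over $i=1,\dots,N_{new}$, in both the odd and even cases the product lands on $r^{-\mathrm{ge}(\sigma_*)-1/2}$, and the polylog suppression built into $n_3$ strictly dominates the polylog growth of the gradient bound, so the result is genuinely $o(\alpha_1 m r^{-\mathrm{ge}(\sigma_*)-1/2})$. Collecting the leading term and the four remainders then gives the stated identity.

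The main obstacle, such as it is, is precisely this upgrade from $\tilde O$ to $o$ in the $n_3$ term: it forces one to keep the polylog structure of the pretraining error explicit instead of absorbing it into a generic $\tilde O$. Everything else is a routine consequence of the vector statement, since $\vh$ is already written as a single $\beta$-aligned term plus norm-controlled fluctuations, so pairing with $\beta$ cannot create any new cross terms.
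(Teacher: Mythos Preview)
Your overall plan---pair the vector expansion of $\vh$ with $\beta$ and transfer the first three remainder bounds via $|\inner{\beta}{\vn}|\le\|\vn\|$---is exactly what the paper does, and that part is fine.

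The gap is in your treatment of the $n_3$ term. You propose to reuse the termwise norm bound $\|\sqrt{r}\Gamma^*\nabla_\vu f_{\mathrm{IC}}(\vw_i)\|=\tilde O(\alpha_1 m r^{-1/2})$ (or $\tilde O(\alpha_1 m/r)$ when $\mathrm{ge}(\sigma_*)=2$) and then assert that ``the polylog suppression built into $n_3$ strictly dominates the polylog growth of the gradient bound.'' This assertion is not verified, and it is fragile: the $\tilde O$ in the gradient bound hides factors such as $\kappa^{-1}=\log^{C_\kappa}d$ (from $\beta^*=\kappa^{-1}\beta+\dots$) where $C_\kappa$ can be chosen large, while the explicit suppression in $n_3$ is only $(\log d)^{-C_{P_2}-2\deg(\sigma_*)+2}$. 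Nothing in the setup forces the latter exponent to exceed the former, and your ``averaging over $i$'' contributes nothing here because you are bounding each summand by the same norm.

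The paper's argument is different and does not rely on polylog balancing. It expands $\inner{\beta}{\sqrt{r}\Gamma^*\nabla_\vu f_{\mathrm{IC}}(\vw_i)}$ directly via Lemma~\ref{lemm:gradientcalc} and identifies its leading contribution as $P_0\inner{\vu}{\vw_i}$ (when $\mathrm{ge}(\sigma_*)=1$) or $P_1\inner{\beta}{\sqrt{r}\Gamma_*^\top\Gamma_u\vw_i}/\rho\cdot\inner{\vw_i}{\vu}$ (when $\mathrm{ge}(\sigma_*)=2$). These have mean zero or mean $\tilde O(r^{-3/2})$ respectively, so Hoeffding over the $N_{new}=\tilde\Omega(r^{\mathrm{ge}(\sigma_*)+2})$ fresh samples yields an additional \emph{polynomial} factor of $r^{-1/2}$ or better. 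Multiplying by $n_3$ then gives a bound that is a genuine power of $r$ below $\alpha_1 m r^{-\mathrm{ge}(\sigma_*)-1/2}$, which immediately gives the claimed $o(\cdot)$ regardless of polylogs. In short, the paper gains an $r$-power via concentration after projecting onto $\beta$; you try to gain only polylogs by norm bounds, and that is not enough without a separate accounting you have not done.
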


\begin{proof}
    As $\|\beta\|=1$, using the result of Lemma~\ref{lemm:onestepgdvector} and taking the inner product of $\vh$ and $\beta$ yields
    \begin{align*}
    \inner{\beta}{\vh}=& P_2'\Theta(\alpha_1 m) \inner{\beta}{\vu}  \frac{1}{\kappa^{2\mathrm{ge}(\sigma_*)}\rho^{\mathrm{ge}(\sigma_*)}}( \sqrt{r})^{-(2\mathrm{ge}(\sigma_*)-1)}(1+O(1/\sqrt{d}))\\
    &+\tilde{O}(\alpha_1^2 m^2 r^{-\frac{\mathrm{ge}(\sigma_*)}{2}})+
    \tilde{O}\left(\alpha_1 m \sqrt{\frac{r^{-\mathrm{ge}(\sigma_*)}}{N_1}}\right)+
    \tilde{O}\left(\alpha_1 m \sqrt{\frac{r^{-\mathrm{ge}(\sigma_*)}}{N_{new}}})\right)
    +\tilde{O}\left(\alpha_1 m r^{-\mathrm{ge}(\sigma_*)-1/2}\right). 
\end{align*}
    However, by carefully investigating the last term ($\tilde{O}(\alpha_1 m r^{-\mathrm{ge}(\sigma_*)-1/2})$) in the RHS, we can obtain a tighter bound of that term.  
    Indeed, this term arises from
    \begin{align*}
        &n_3 \inner{\beta}{\frac{1}{N_{new}}\sum_{i=1}^{N_{new}}\sqrt{r}\Gamma^*\nabla_\vu{f_{\mathrm{IC}}}(\vw_i)}\\
        &= n_3 \alpha_1 L_m \frac{1}{N_{new}}\sum_{i=1}^{N_{new}}\left\{\inner{\beta}{\sqrt{r}\Gamma^*\vx} \cdot \gamma(\vx,y) \inner{\beta^*}{\vu}+\gamma(\vx,y) \inner{\vx}{\vu}\inner{\beta^*}{\sqrt{r}\Gamma^*\beta}  \right. \\
        & ~~~~~~~~~~~~~~~~~~~~~~~~~~~~~~~~~~~~~~~\left. + \inner{\beta}{\sqrt{r}\Gamma^*\vx} \cdot n_1+ \inner{\vx}{\vu} \inner{\beta}{\vn_2}\right\}.
    \end{align*}
    We notice that the leading term of the right hand side is $n_3 \alpha_1 L_m P_0\inner{\vu}{\vx}$ when $\mathrm{ge}(\sigma_*)=1$, and $n_3\alpha_1 m P_1\inner{\beta}{\sqrt{r}\Gamma_*^{\top}\Gamma_u\vx}/\rho\cdot\inner{\vx}{\vu}$ when $ge_(\sigma_*)=2$.
    Note that $\mathbb{E}_{\vx}[P_0\inner{\vu}{\vx}]=0$ and
    $\mathbb{E}_{\vx}[P_1\inner{\sqrt{r}\Gamma_u^{\top}\Gamma_*\beta}{\vx}/\rho\cdot\inner{\vx}{\vu}]=P_1\inner{\sqrt{r}\Gamma_u^{\top}\Gamma_*\beta}{\vu}/\rho=\tilde{O}(r^{-3/2})$ because of $\sqrt{r}\Gamma_u^{\top}\Gamma^*\beta=\tilde{O}(1/\sqrt{r})\beta+\tilde{O}(1/r)+\inner{\beta}{\vu}\vu$ (see Lemma~\ref{lemm:betaanddoublegamma}).  Moreover, we have $\mathbb{E}_{\vx}[P_0^2\inner{\vu}{\vx}^2]=\tilde{O}(1)$ and $\mathbb{E}_{\vx}[P_1^2\inner{\sqrt{r}\Gamma_u^{\top}\Gamma_*\beta}{\vx}^2/\rho^2\cdot\inner{\vx}{\vu}^2]\leq P_1^2/\rho^2\cdot\mathbb{E}[\inner{\sqrt{r}\Gamma_u^{\top}\Gamma_*\beta}{\vx}^4]^{1/2}\mathbb{E}[\inner{\vx}{\vu}^4]^{1/2} =\tilde{O}(1)$.  Therefore, Lemma~\ref{lemm:hoeffding} yields that 
    \begin{align*}
        & \frac{1}{N_{new}}\sum_{i=1}^{N_{new}}P_0\inner{\vu}{\vw_i} = \tilde{O}\left(\sqrt{\frac{1}{rN_{new}}}\right), \\
        & \frac{1}{N_{new}}\sum_{i=1}^{N_{new}}P_1\inner{\beta}{\sqrt{r}\Gamma_u\Gamma^*\vw_i}/\rho\inner{\vw_i}{\vu} = \tilde{O}(r^{-3/2}) + \tilde{O}(r^{-1}N_{new}^{-1/2}).
    \end{align*}
    Since $n_3 = o_d(P_2'r^{-\mathrm{ge}(\sigma_*)/2-1/2}\log^{-2\deg(\sigma_*)+2}d)$ and $N_{new}=\tilde{\Omega}(r^{\mathrm{ge}(\sigma_*)+2})$, this implies that 
    \begin{equation*}
        n_3 \inner{\beta}{\frac{1}{N_{new}}\sum_{i=1}^{N_{new}}\sqrt{r}\Gamma^*\nabla_\vu{f_{\mathrm{IC}}}(\vw_i)} = o(\alpha_1 m r^{-\mathrm{ge}(\sigma_*)-1/2}). 
    \end{equation*}
    This yields the assertion.
\end{proof}

Setting the parameters directly yields the following lemma:
\begin{lemma}
    Set $N_1 = \tilde{\Omega}(r^{\mathrm{ge}(\sigma_*)+2})$, $N_{new} = \tilde{\Omega}(r^{\mathrm{ge}(\sigma_*)+2})$, $\alpha_1 m = \tilde{\Theta}(r^{-\mathrm{ge}(\sigma_*)/2-1})$, $\eta_1 = \tilde{\Theta}(r^{3\mathrm{ge}(\sigma_*)/2+3/2})$ and $\lambda_1 = \eta_1^{-1}$, then we have 
    \begin{equation*}
        \vu^{1} = \tilde{\Theta}(1)\beta + \tilde{O}(1), 
    \end{equation*}
    and 
    \begin{equation*}
        \inner{\beta}{\vu^{1}} = \tilde{\Theta}(1) + o(1), 
    \end{equation*}
    with high probability. In particular, we have that 
    \begin{equation*}
        \inner{\beta}{\frac{\vu^{1}}{\|\vu^{1}\|}} \geq 1/\mathrm{polylog}(d). 
    \end{equation*}
\end{lemma}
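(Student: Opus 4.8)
The plan is to exploit the fact that the $L_2$-regularization with $\lambda_1=\eta_1^{-1}$ collapses the Stage~I update of Algorithm~\ref{alg:pretraining} to a clean scalar multiple of the averaged gradient, and then to substitute the gradient estimates of Lemmas~\ref{lemm:onestepgdvector} and~\ref{lemm:onestepgdinner}, tracking powers of $r$.

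\textbf{Reducing the update.} First I would unwind the Stage~I step. The gradient is evaluated at $\vu^{(0)}$, and the regularization gradient there is $\tfrac{\lambda_1}{2}\nabla_\vu\|\vu\|^2\big|_{\vu^{(0)}}=\lambda_1\vu^{(0)}$, so the pre-normalization iterate is $\vu^1=(1-\eta_1\lambda_1)\vu^{(0)}-\eta_1\vh$, where $\vh=\tfrac12\sqrt r\Gamma^*\tfrac1{N_{new}}\sum_i\nabla_\vu(f_{\mathrm{IC}}(\vw_i)-(g(\Gamma^*,\vw_i)-b))^2$ is exactly the quantity controlled in Lemma~\ref{lemm:onestepgdvector}. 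With $\lambda_1=\eta_1^{-1}$ the first term vanishes and $\vu^1=-\eta_1\vh$; this is the structural point, and it is why the initialization scale of $\vu^{(0)}$ does not enter the final iterate. I also need the hypotheses of Lemmas~\ref{lemm:onestepgdvector}–\ref{lemm:onestepgdinner} to hold at $\vu^{(0)}$: after the normalization step $\|\vu^{(0)}\|=C_u=1/\sqrt r\le1$ and $\inner{\vu^{(0)}}{\vx_i^*}=C_u\tilde O_d(1)$ by Lemma~\ref{lemm:Gammaxandu}, and $\inner{\beta}{\vu^{(0)}}=\tilde O(1/r)$ with high probability. Since $\sqrt r\Gamma^*$ is a mildly perturbed scaled projection onto $S_r$, after normalization $\vu^{(0)}$ is essentially uniform on the radius-$1/\sqrt r$ sphere of $S_r$, so anti-concentration of a one-dimensional projection also gives $|\inner{\beta}{\vu^{(0)}}|\ge c/r$ for a small constant $c$ with probability at least $0.995$; this is where the ``probability at least $0.99$'' of the statement enters.

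\textbf{Plugging in the gradient estimates.} Write $\mathrm{ge}=\mathrm{ge}(\sigma_*)$ and absorb the polylog factors $P_2',\kappa^{-2\mathrm{ge}},\rho^{-\mathrm{ge}}$ into $\tilde\Theta(\cdot)$. By Lemma~\ref{lemm:onestepgdvector} the leading term of $\vh$ is $\tilde\Theta(\alpha_1m)\,\inner{\beta}{\vu^{(0)}}\,r^{-(2\mathrm{ge}-1)/2}\beta$; with $\alpha_1m=\tilde\Theta(r^{-\mathrm{ge}/2-1})$ and $\inner{\beta}{\vu^{(0)}}=\tilde\Theta(1/r)$ this equals $\tilde\Theta(r^{-3\mathrm{ge}/2-3/2})\beta$, so $\vu^1=-\eta_1\vh$ has leading term $\tilde\Theta(1)\beta$ after multiplying by $\eta_1=\tilde\Theta(r^{3\mathrm{ge}/2+3/2})$ (up to a sign; see below). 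A direct power count on the four error terms of Lemma~\ref{lemm:onestepgdvector} shows that after $\times\eta_1$ the first three become $\tilde O(r^{-1/2})=o(1)$ — the $N_1$- and $N_{new}$-terms using $N_1,N_{new}=\tilde\Omega(r^{\mathrm{ge}+2})$ — while the last, $\tilde O(\alpha_1m\,r^{-\mathrm{ge}-1/2})$, contributes only $\tilde O(1)$; hence $\vu^1=\tilde\Theta(1)\beta+\tilde O(1)$. For the inner-product claim I repeat the count with Lemma~\ref{lemm:onestepgdinner} in place of Lemma~\ref{lemm:onestepgdvector}: its corresponding last error term is the sharper $o(\alpha_1m\,r^{-\mathrm{ge}-1/2})$ (proved there by isolating the mean-zero part), which is $o(1)$ after $\times\eta_1$, so $\inner{\beta}{\vu^1}=-\eta_1\inner{\beta}{\vh}=\tilde\Theta(1)+o(1)$.

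\textbf{Conclusion and the main difficulty.} From $\vu^1=\tilde\Theta(1)\beta+\tilde O(1)$ and $\|\beta\|=1$ we get $\|\vu^1\|\le\tilde\Theta(1)+\tilde O(1)=\tilde O(1)$, so $\inner{\beta}{\vu^1/\|\vu^1\|}=\inner{\beta}{\vu^1}/\|\vu^1\|\ge\tilde\Theta(1)/\tilde O(1)\ge1/\mathrm{polylog}(d)$. The only subtlety is the sign: $\inner{\beta}{\vu^{(0)}}$ is symmetric about $0$, so $\vu^1$ may instead align with $-\beta$; but the single-index model is invariant under $(\beta,\sigma_*)\mapsto(-\beta,\sigma_*(-\cdot))$ and the link function is fitted from data in Stage~III, so one may assume without loss of generality that the overlap is positive (equivalently, the conclusion holds for $|\inner{\beta}{\vu^1/\|\vu^1\|}|$). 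I do not expect a genuine obstacle here: the analytic content is fully contained in Lemmas~\ref{lemm:onestepgdvector}–\ref{lemm:onestepgdinner}, and what remains is the disciplined bookkeeping of $r$-exponents, verifying that $\eta_1=\tilde\Theta(r^{3\mathrm{ge}/2+3/2})$ is exactly the scaling that turns the signal into $\tilde\Theta(1)$ while keeping every error term at most $\tilde O(1)$, and $o(1)$ after projecting onto $\beta$.
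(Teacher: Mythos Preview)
Your proposal is correct and follows exactly the approach the paper intends: the paper introduces this lemma with the single line ``Setting the parameters directly yields the following lemma'' and gives no further proof, so the entire content is precisely the bookkeeping you carry out—collapsing the Stage~I update to $\vu^{1}=-\eta_1\vh$ via $\lambda_1=\eta_1^{-1}$, invoking Lemmas~\ref{lemm:onestepgdvector} and~\ref{lemm:onestepgdinner}, and checking the $r$-exponents term by term. Your observation that the sharpened $o(\cdot)$ last error term in Lemma~\ref{lemm:onestepgdinner} (versus the $\tilde O(\cdot)$ in Lemma~\ref{lemm:onestepgdvector}) is exactly what upgrades the inner-product error from $\tilde O(1)$ to $o(1)$ is the key structural point, and your handling of the sign ambiguity is more explicit than the paper's.
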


\section{Strong recovery}\label{appx:strong}
After achieving weak recovery, we train the vector $\vu$, regarding the in-context examples $(\vx_i,y_i)$ as training data.  The idea of proof in this section is taken from \citet{lee2024neuralnetworklearnslowdimensional}, but we achieve better sample complexity.   

\begin{lemma}\label{lemm:gdforstrongrecovery}
    Let $\vh = \frac{1}{2}\sqrt{r}\Gamma^* \nabla_\vu{(f_{\mathrm{IC}}(\vx)-y)^2}$.  Then, we have that 
    \begin{align*}
        \vh =& \Theta(\alpha_2 m) \inner{\beta}{\vu}^{2\mathrm{ie}(\sigma_*)-1}  \frac{\beta}{\kappa^{\mathrm{ie}(\sigma_*)+1}\rho^{\mathrm{ie}(\sigma_*)}}(1+O(1/\sqrt{d}))\\
    &+\tilde{O}\left(\alpha_2^2m^2\sqrt{r}\right)+\tilde{O}\left(\alpha_2 m \sqrt{\frac{r}{N_2}}\right)+\nu, \\
        \inner{\beta}{\vh} =& \Theta(\alpha_2 m) \inner{\beta}{\vu}^{2\mathrm{ie}(\sigma_*)-1}  \frac{1}{\kappa^{\mathrm{ie}(\sigma_*)+1}\rho^{\mathrm{ie}(\sigma_*)}}(1+O(1/\sqrt{d}))\\
    &+\tilde{O}(\alpha_2^2m^2\sqrt{r})+\tilde{O}\left(\alpha_2 m \sqrt{\frac{r}{N_2}}\right)+\nu_6, \\
        \inner{\vu}{\vh} =& \Theta(\alpha_2 m) \inner{\beta}{\vu}^{2\mathrm{ie}(\sigma_*)}  \frac{1}{\kappa^{\mathrm{ie}(\sigma_*)+1}\rho^{\mathrm{ie}(\sigma_*)}}(1+O(1/\sqrt{d}))\\
    &+\tilde{O}(\alpha_2^2m^2\sqrt{r})+\tilde{O}\left(\alpha_2 m \sqrt{\frac{r}{N_2}}\right)+\nu_7, 
    \end{align*}
    with high probability, where $\nu_6$ and $\nu_7$ are mean-zero sub-Weibull random variables.  Moreover, $\nu_6=\tilde{O}(\alpha_2 m)$ and $\nu_7 = \tilde{O}(\alpha_2 m)$ hold with high probability.  
\end{lemma}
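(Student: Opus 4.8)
The statement is a specialization of Lemma~\ref{lemm:gradresult} to the strong recovery regime, followed by two elementary projections. The first step is to invoke Lemma~\ref{lemm:gradresult} with the Stage~II parameters: $C_u=\|\vu\|=1$ (the iterate is renormalized after every SGD step), context length $N_1\to N_2$, and initialization scale $\alpha\to\alpha_2$. Its hypotheses — in particular $\inner{\vu}{\vx_i^*}=\tilde{O}_d(1)$ coming from Lemma~\ref{lemm:gradientcalc} — hold because $\vu$ stays essentially inside $S_r$ throughout Stage~II exactly as at initialization (this is precisely the content of the $O(1/\sqrt d)$ corrections). The lemma then expresses $\vh$ as $\Theta(\alpha_2 m)\inner{\beta}{\vu}\,\beta/(\kappa^{\mathrm{ie}(\sigma_*)+1}\rho^{\mathrm{ie}(\sigma_*)})$ times the bracket $\{(\kappa\sqrt r)^{-(\mathrm{ie}(\sigma_*)-1)}(1+O(1/\sqrt d))+\inner{\beta}{\vu}^{2\mathrm{ie}(\sigma_*)-2}(1+O(1/\sqrt d))\}$, plus the deterministic error vectors $\tilde{O}(\alpha_2^2 m^2\sqrt r)$ and $\tilde{O}(\alpha_2 m\sqrt{r/N_2})$, plus the mean-zero sub-Weibull term $\nu$ with $\|\nu\|=\tilde{O}(\alpha_2 m\sqrt r)$.

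Next I would discard the first bracket term. After weak recovery one has $\inner{\beta}{\vu}\ge 1/\mathrm{polylog}(d)$, so $\inner{\beta}{\vu}^{2\mathrm{ie}(\sigma_*)-2}$ is of order $1/\mathrm{polylog}(d)$, whereas $(\kappa\sqrt r)^{-(\mathrm{ie}(\sigma_*)-1)}$ is polynomially small in $r$ since $r=\Omega(d^{\alpha_r})$ (when $\mathrm{ie}(\sigma_*)=1$ both equal $\Theta(1)$ and the comparison is vacuous); absorbing the smaller term into the $(1+O(1/\sqrt d))$ factor yields the first displayed identity for $\vh$. Projecting onto $\beta$ and onto $\vu$ then gives the leading behaviour directly: $\inner{\beta}{\beta}=1$ preserves the power $\inner{\beta}{\vu}^{2\mathrm{ie}(\sigma_*)-1}$, and $\inner{\vu}{\beta}=\inner{\beta}{\vu}$ supplies one extra factor to produce $\inner{\beta}{\vu}^{2\mathrm{ie}(\sigma_*)}$, while the two deterministic error vectors keep their orders by Cauchy--Schwarz together with $\|\beta\|=\|\vu\|=1$.

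The delicate point is the noise term, since Cauchy--Schwarz only gives $|\inner{\beta}{\nu}|\le\|\nu\|=\tilde{O}(\alpha_2 m\sqrt r)$, which is weaker than the claimed $\tilde{O}(\alpha_2 m)$. To obtain the tight bound I would return to the explicit expansion of $\sqrt r\Gamma^*\nabla_\vu f_{\mathrm{IC}}$ from Lemma~\ref{lemm:gradientcalc}: every summand pairs $\vx$ (or $\beta^*$) with a $\tilde{O}(1)$ scalar ($y$, $\gamma(\vx,y)$, $\inner{\beta^*}{\vu}$, $\inner{\vx}{\vu}$, $n_1$, or a coordinate of $\vn_2$), and the relevant inner products $\inner{\beta}{\sqrt r\Gamma^*\vx}=\inner{\beta^*}{\vx}$ and $\inner{\beta}{\beta^*}$ are both $\tilde{O}(1)$ because $\beta^*=\kappa^{-1}(\beta+O(1/\sqrt d))$ by Lemma~\ref{lemm:Gammabeta}, while $\inner{\vu}{\sqrt r\Gamma^*\vx}=\inner{\sqrt r\Gamma^*\vu}{\vx}=\tilde{O}(1)$ since $\vu$ is essentially in $S_r$ so $\sqrt r\Gamma^*\vu\approx\kappa\vu$. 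Hence $\inner{\beta}{\nu}$ and $\inner{\vu}{\nu}$ are differences of $\tilde{O}(\alpha_2 m)$ quantities and their expectations, so $\nu_6,\nu_7=\tilde{O}(\alpha_2 m)$; they are mean-zero by construction and sub-Weibull by the argument of Lemma~\ref{lemm:subweibull} applied to a fixed-direction projection of a sub-Weibull vector.

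The main obstacle is precisely this last step — extracting the $\tilde{O}(\alpha_2 m)$ bound on $\nu_6,\nu_7$ rather than the $\tilde{O}(\alpha_2 m\sqrt r)$ given by a black-box norm estimate — which forces one to track the projections through the full gradient expansion instead of treating $\nu$ as an opaque vector. A secondary subtlety is that dropping the first bracket term is legitimate only while $\inner{\beta}{\vu}\ge 1/\mathrm{polylog}(d)$: here that is a standing assumption inherited from the weak recovery stage, but in the downstream convergence analysis it must be propagated along the SGD trajectory by induction, using exactly the identity for $\inner{\beta}{\vh}$ established in this lemma.
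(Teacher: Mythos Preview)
Your proposal is correct and matches the paper's proof essentially step for step: invoke Lemma~\ref{lemm:gradresult} with the Stage~II parameters, drop the $(\kappa\sqrt r)^{-(\mathrm{ie}(\sigma_*)-1)}$ term once $\inner{\beta}{\vu}\ge 1/\mathrm{polylog}(d)$, project onto $\beta$ and $\vu$, and then --- as you correctly identify as the crux --- open up the explicit form of $\nu$ from Lemma~\ref{lemm:gradientcalc} to show each summand in $\inner{\beta}{\nu}$ and $\inner{\vu}{\nu}$ is $\tilde O(\alpha_2 m)$ rather than relying on the crude $\|\nu\|$ bound. Two minor remarks: (i) your justification $\sqrt r\Gamma^*\vu\approx\kappa\vu$ has the constant inverted (it is $\kappa^{-1}\vu$), though this is irrelevant inside $\tilde O(\cdot)$, and the cleanest route is simply Lemma~\ref{lemm:Gammaxandu}; (ii) the hypothesis $\inner{\vu}{\vx_i^*}=\tilde O_d(1)$ is not a consequence of $\vu$ lying in $S_r$ --- during Stage~II the context $\vx_i^*$ is reused while $\vu^{(t)}$ evolves, so independence fails --- and the paper, like you, defers this verification to a separate inductive lemma downstream.
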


\begin{proof}
    Remember that, in the stage of strong recovery, the initialization scale is $\alpha_2,$ the context length is $N_2$ and $C_u = 1$. By using the same argument to derive Lemma~\ref{lemm:gradresult}, we have
    \begin{align*}
       \vh =& \Theta(\alpha_2 m) \inner{\beta}{\vu}  \frac{\beta}{\kappa^{\mathrm{ie}(\sigma_*)+1}\rho^{\mathrm{ie}(\sigma_*)}}\{(\kappa \sqrt{r})^{-(\mathrm{ie}(\sigma_*)-1)}(1+O(1/\sqrt{d}))+\inner{\beta}{\vu}^{2\mathrm{ie}(\sigma_*)-2}(1+O(1/\sqrt{d}))\}\\
    &+\tilde{O}(\alpha_2^2m^2\sqrt{r})+\tilde{O}\left(\alpha_2 m \sqrt{\frac{r}{N_2}}\right)+\nu.
    \end{align*}
    Now, since $\inner{\beta}{\vu} \geq 1/\mathrm{polylog}(d)$, the term $\inner{\beta}{\vu}^{2\mathrm{ie}(\sigma_*)-2}$ dominates over the term $(\kappa \sqrt{r})^{-(\mathrm{ie}(\sigma_*)-1)}$.  Therefore, with high probability, we have
     \begin{align*}
        \vh =& \Theta(\alpha_2 m) \inner{\beta}{\vu}^{2\mathrm{ie}(\sigma_*)-1}  \frac{\beta}{\kappa^{\mathrm{ie}(\sigma_*)+1}\rho^{\mathrm{ie}(\sigma_*)}}(1+O(1/\sqrt{d}))\\
    &+\tilde{O}(\alpha_2^2m^2\sqrt{r})+\tilde{O}\left(\alpha_2 m \sqrt{\frac{r}{N_2}}\right)+\nu. 
    \end{align*}
    Next, since $\|\beta\|=1$, we have
    \begin{align*}
        \inner{\beta}{\vh} =& \Theta(\alpha_2 m) \inner{\beta}{\vu}^{2\mathrm{ie}(\sigma_*)-1}  \frac{1}{\kappa^{\mathrm{ie}(\sigma_*)+1}\rho^{\mathrm{ie}(\sigma_*)}}(1+O(1/\sqrt{d}))\\
    &+\tilde{O}(\alpha_2^2m^2\sqrt{r})+\tilde{O}\left(\alpha_2 m \sqrt{\frac{r}{N_2}}\right)+\inner{\nu}{\beta}, 
    \end{align*}
    with high probability, where $\nu = \tilde{O}(\alpha_2 m \sqrt{r})$.  
    Then, since we see that  
    \begin{align*}
        \nu =& y\cdot \alpha_2 L_m \left\{\sqrt{r}\Gamma^*\vx \cdot \gamma(\vx,y) \inner{\beta^*}{\vu}+\sqrt{r}\Gamma^*\gamma(\vx,y) \inner{\vx}{\vu} \beta^*+\sqrt{r}\Gamma^*\vx \cdot n_1+ \inner{\vx}{\vu} \vn_2\right\}\\
        &- \mathbb{E}_{\vx}[y\cdot \alpha_2 L_m \left\{\sqrt{r}\Gamma^*\vx \cdot (\gamma(\vx,y) \inner{\beta^*}{\vu}+\sqrt{r}\Gamma^*\gamma(\vx,y) \inner{\vx}{\vu} \beta^*+\sqrt{r}\Gamma^*\vx \cdot n_1+ \inner{\vx}{\vu} \vn_2\right\}],
    \end{align*}
    it holds that 
    \begin{align*}
        &\inner{\beta}{\nu}\\
        =& y\cdot \alpha_2 L_m \left\{\inner{\beta}{\sqrt{r}\Gamma^*\vx} \cdot \gamma(\vx,y) \inner{\beta^*}{\vu}+\gamma(\vx,y) \inner{\vx}{\vu}\inner{\beta}{\sqrt{r}\Gamma^*\beta^*} +\inner{\beta}{\sqrt{r}\Gamma^*\vx} \cdot n_1+ \inner{\vx}{\vu} \inner{\beta}{\vn_2}\right\}\\
        &-\mathbb{E}_{\vx}[ y\cdot \alpha_2 L_m \left\{\inner{\beta}{\sqrt{r}\Gamma^*\vx} \cdot \gamma(\vx,y) \inner{\beta^*}{\vu}+\gamma(\vx,y) \inner{\vx}{\vu}\inner{\beta}{\sqrt{r}\Gamma^*\beta^*} +\inner{\beta}{\sqrt{r}\Gamma^*\vx} \cdot n_1+ \inner{\vx}{\vu} \inner{\beta}{\vn_2}\right\} ]. 
    \end{align*}
    Here, the first term of the right hand side satisfies 
    \begin{align*}
        & y\cdot \alpha_2 L_m   \left\{\inner{\beta}{\sqrt{r}\Gamma^*\vx} \cdot \gamma(\vx,y) \inner{\beta^*}{\vu}+\gamma(\vx,y) \inner{\vx}{\vu}\inner{\beta}{\sqrt{r}\Gamma^*\beta^*}\right. \\ 
        & \left.  ~~~~~~~+\inner{\beta}{\sqrt{r}\Gamma^*\vx} \cdot n_1+ \inner{\vx}{\vu} \inner{\beta}{\vn_2}\right\} \\
        &= \tilde{O}(\alpha_2 m), 
    \end{align*}
    with high probability, and $\inner{\beta}{\nu}$ is the difference between this random variable and its expectation. 
    Thus, by defining $\nu_6 = \inner{\beta}{\nu}$, Hoeffding's inequality yields that 
    $\nu_6 = \tilde{O}(\alpha_2 m)$ with high probability.  Moreover, since $|\inner{\beta}{\nu}| \leq \|\nu\|$, $\nu_6 = \inner{\beta}{\nu}$ has sub-Weibull tail.
    
    Likewise, we have that 
     \begin{align*}
        \inner{\vu}{\vh} =& \Theta(\alpha_2 m) \inner{\beta}{\vu}^{2\mathrm{ie}(\sigma_*)}  \frac{1}{\kappa^{\mathrm{ie}(\sigma_*)+1}\rho^{\mathrm{ie}(\sigma_*)}}(1+O(1/\sqrt{d}))\\
    &+\tilde{O}(\alpha_2^2m^2\sqrt{r})+\tilde{O}\left(\alpha_2 m \sqrt{\frac{r}{N_2}}\right)+\nu_7, 
    \end{align*}
    where $\nu_7$ is a mean-zero sub-Weibull random variable satisfying $\nu_7 = \tilde{O}(\alpha_2 m)$.
\end{proof}

\begin{lemma}\label{lemm:strongonestep}
  Let $a^{(n)} = \inner{\beta}{\vu^{(n)}}$.  Suppose that $ c_1 \leq a^{(n)}\leq 1-\varepsilon$  where $c_1=\tilde{\Theta}(1)$.  Set $\alpha_2 m = \tilde{\Theta}(\varepsilon/r)$, $N_2 = \tilde{\Theta}(r^2)$ and $\eta_2 = \tilde{\Theta}(1/\sqrt{r})$.  Then, there exists $c_3=\tilde{\Theta}(1)$ which satisfies
  \begin{equation*}
      a^{(n+1)} \geq a^{(n)} + \frac{c_3\varepsilon}{r\sqrt{r}}{a^{(n)}}^{2\mathrm{ie}(\sigma_*)-1}(1-{a^{(n)}}^2)(1-O(1/\sqrt{d})) + \nu_8 - \tilde{O}(\varepsilon/r^2), 
  \end{equation*}
  with high probability, where $\nu_8$ is a mean-zero sub-Weibull random variable which satisfies $\nu_8 = \tilde{O}(\varepsilon/r\sqrt{r})$ with high probability.
\end{lemma}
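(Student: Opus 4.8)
The plan is to unfold a single online‑SGD step of Stage~II and track how the subsequent renormalization acts on the scalar $a^{(n)}=\inner{\beta}{\vu^{(n)}}$. Writing $\vh=\tfrac12\sqrt{r}\Gamma^*\nabla_{\vu}(f_{\mathrm{IC}}(\vx)-y)^2$ for the fresh sample $(\vx,y)$ used at this step, the pre‑normalization iterate is $\tilde{\vu}^{(n+1)}=\vu^{(n)}-\eta_2\vh$ and $\vu^{(n+1)}=\tilde{\vu}^{(n+1)}/\|\tilde{\vu}^{(n+1)}\|$, with $\|\vu^{(n)}\|=1$. Everything below is conditional on $\vu^{(n)}$ and on the fixed length-$N_2$ context, on which the high-probability conclusions of Lemma~\ref{lemm:gdforstrongrecovery} hold, and the sample drawn at this step is independent of $\vu^{(n)}$. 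From Lemma~\ref{lemm:gdforstrongrecovery} I would record that $\inner{\beta}{\vh}$ and $\inner{\vu^{(n)}}{\vh}$ carry a \emph{common} leading term, equal to $D\,(a^{(n)})^{2\mathrm{ie}(\sigma_*)-1}$ and $D\,(a^{(n)})^{2\mathrm{ie}(\sigma_*)}$ respectively, with $D=\Theta\!\big(\alpha_2 m\,\kappa^{-\mathrm{ie}(\sigma_*)-1}\rho^{-\mathrm{ie}(\sigma_*)}\big)(1+O(1/\sqrt{d}))$, plus a deterministic bias of size $\tilde{O}(\alpha_2^2 m^2\sqrt{r})+\tilde{O}(\alpha_2 m\sqrt{r/N_2})$ and a mean-zero sub-Weibull fluctuation $\nu_6,\nu_7=\tilde{O}(\alpha_2 m)$; in addition $\|\vh\|=\tilde{O}(\alpha_2 m\sqrt{r})$. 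Plugging in the Stage~II scalings $\alpha_2 m=\tilde{\Theta}(\varepsilon/r)$, $\eta_2=\tilde{\Theta}(1/\sqrt{r})$, $N_2=\tilde{\Theta}(r^2)$ and $\kappa,\rho=\mathrm{polylog}(d)$ gives $\eta_2\|\vh\|=\tilde{O}(\varepsilon/r)=o(1)$, so $\|\tilde{\vu}^{(n+1)}\|^2=1-2\eta_2\inner{\vu^{(n)}}{\vh}+\eta_2^2\|\vh\|^2$ lies near $1$ and $\|\tilde{\vu}^{(n+1)}\|^{-1}=1+\eta_2\inner{\vu^{(n)}}{\vh}+O(\eta_2^2\|\vh\|^2)$.

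Next I would expand
\begin{equation*}
a^{(n+1)}=\frac{a^{(n)}-\eta_2\inner{\beta}{\vh}}{\sqrt{1-2\eta_2\inner{\vu^{(n)}}{\vh}+\eta_2^2\|\vh\|^2}}=a^{(n)}+\eta_2\big(a^{(n)}\inner{\vu^{(n)}}{\vh}-\inner{\beta}{\vh}\big)+O\big(\eta_2^2\|\vh\|^2\big),
\end{equation*}
the $O(\eta_2^2\|\vh\|^2)$ remainder absorbing the $\inner{\vu^{(n)}}{\vh}^2$ and $\inner{\beta}{\vh}\inner{\vu^{(n)}}{\vh}$ cross terms, each $\tilde{O}(\|\vh\|^2)$. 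Inserting the decomposition above, the common leading term telescopes, $a^{(n)}\cdot D(a^{(n)})^{2\mathrm{ie}(\sigma_*)}-D(a^{(n)})^{2\mathrm{ie}(\sigma_*)-1}=-D(a^{(n)})^{2\mathrm{ie}(\sigma_*)-1}\big(1-(a^{(n)})^2\big)$, so the dominant contribution to $a^{(n+1)}-a^{(n)}$ is $-\eta_2 D\,(a^{(n)})^{2\mathrm{ie}(\sigma_*)-1}(1-(a^{(n)})^2)$. The sign of $D$ (fixed in the proof of Lemma~\ref{lemm:gdforstrongrecovery}, consistently with weak recovery) is such that $-\eta_2 D>0$, and its magnitude is $\eta_2|D|=\tilde{\Theta}(\varepsilon/(r\sqrt{r}))$; absorbing the $\mathrm{polylog}(d)$ factors from $\kappa,\rho$ into a constant $c_3=\tilde{\Theta}(1)$ and writing the $(1+O(1/\sqrt{d}))$ in $D$ as a factor $(1-O(1/\sqrt{d}))$ (valid for a lower bound) produces the main term $\tfrac{c_3\varepsilon}{r\sqrt{r}}(a^{(n)})^{2\mathrm{ie}(\sigma_*)-1}(1-(a^{(n)})^2)(1-O(1/\sqrt{d}))$. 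Here the hypothesis $c_1\le a^{(n)}\le 1-\varepsilon$ is used twice: $(a^{(n)})^{2\mathrm{ie}(\sigma_*)-1}$ is bounded below (so this is genuinely the dominant term of Lemma~\ref{lemm:gdforstrongrecovery}, rather than the $(\kappa\sqrt{r})^{-(\mathrm{ie}(\sigma_*)-1)}$ piece), and $1-(a^{(n)})^2=\Theta(\varepsilon)>0$.

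Then I would dispose of the leftovers. The bias contributes $\eta_2\big(a^{(n)}\,\tilde{O}(\alpha_2^2 m^2\sqrt{r}+\alpha_2 m\sqrt{r/N_2})+\tilde{O}(\alpha_2^2 m^2\sqrt{r}+\alpha_2 m\sqrt{r/N_2})\big)$; using $a^{(n)}\le 1$, $N_2=\tilde{\Theta}(r^2)$ and the parameter scalings one has $\eta_2\alpha_2^2 m^2\sqrt{r}=\tilde{O}(\varepsilon^2/r^2)$ and $\eta_2\alpha_2 m\sqrt{r/N_2}=\tilde{O}(\varepsilon/r^2)$, and the normalization remainder is $O(\eta_2^2\|\vh\|^2)=\tilde{O}(\varepsilon^2/r^2)$; all of these are bounded below by $-\tilde{O}(\varepsilon/r^2)$, which supplies the last term of the statement. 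For the fluctuation I would set $\nu_8:=\eta_2\big(a^{(n)}\nu_7-\nu_6\big)$: it is conditionally mean-zero because $(\vx,y)$ is drawn fresh (this is the martingale-difference structure needed later to concentrate over the $N_3$ steps), it is sub-Weibull as a linear combination of sub-Weibull variables, and $|\nu_8|\le\eta_2(|\nu_7|+|\nu_6|)=\tilde{O}(\eta_2\alpha_2 m)=\tilde{O}(\varepsilon/(r\sqrt{r}))$ with high probability. Combining the main term, the $-\tilde{O}(\varepsilon/r^2)$ bias budget, and $\nu_8$ yields the claimed inequality.

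I expect the only genuinely delicate point to be the pairing of $\inner{\beta}{\vh}$ with $\inner{\vu^{(n)}}{\vh}$: one must invoke the precise form of Lemma~\ref{lemm:gdforstrongrecovery} to see that they share the \emph{same} leading coefficient $D$, so that the telescoping producing the $(1-(a^{(n)})^2)$ factor — the mechanism responsible for the geometric convergence $a^{(n)}\to 1$ — is exact up to the error orders above; granting that lemma, the remainder of the argument is bookkeeping, the only real care being to keep every remainder at $\tilde{O}(\varepsilon/r^2)$ or smaller, which rests entirely on $\eta_2\|\vh\|=o(1)$ under the stated parameter choices.
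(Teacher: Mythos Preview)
Your proposal is correct and follows essentially the same approach as the paper: expand one normalized SGD step to first order, invoke Lemma~\ref{lemm:gdforstrongrecovery} so that the leading terms of $\inner{\beta}{\vh}$ and $\inner{\vu^{(n)}}{\vh}$ combine into the factor $(a^{(n)})^{2\mathrm{ie}(\sigma_*)-1}(1-(a^{(n)})^2)$, set $\nu_8=\eta_2(a^{(n)}\nu_7-\nu_6)$, and check that all remaining bias and second-order terms are $\tilde{O}(\varepsilon/r^2)$ under the stated parameter scalings. The only cosmetic difference is that the paper writes the update in projected form $\vu^{(n)}+\eta_2\mP_{\vu^{(n)}}\vh$ before normalizing (which flips the apparent sign of $\eta_2$), whereas you expand $\|\vu^{(n)}-\eta_2\vh\|^{-1}$ directly; both yield the same first-order expansion, and your explicit remark that $\nu_8$ is \emph{conditionally} mean-zero is in fact slightly more precise than the paper's phrasing.
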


\begin{proof}
        Using the projection matrix $\mP_u=\mI-uu^{\top}$, online SGD update can be written as
    \begin{align*}
        \vu^{(n+1)} &= \vu^{(n)} + \frac{\vu^{(n)}+\eta_2\mP_{\vu^{(n)}}\vh}{\|\vu^{(n)}+\eta_2\mP_{\vu^{(n)}}\vh\|}, 
    \end{align*}
    which gives  
    \begin{align*}
        & \inner{\beta}{\vu^{(n+1)}} \notag \\ 
        &=  \inner{\beta}{\vu^{(n)}} + \inner{\beta}{\frac{\vu^{(n)}+\eta_2\mP_{\vu^{(n)}}\vh}{\|\vu^{(n)}+\eta_2\mP_{\vu^{(n)}}\vh\|}} \notag \\
        &=\inner{\beta}{\vu^{(n)}} + \eta_2\inner{\beta}{\mP_{u^{(n)}}\vh} - \frac{1}{2}\eta_2^2\|\mP_{u^{(n)}}\vh\|^2 + O(\eta_2^3) \notag \\
        &= \inner{\beta}{\vu^{(n)}} + \eta_2\inner{\beta}{\vh} - \eta_2\inner{\beta}{\vu^{(n)}}\inner{\vu^{(n)}}{\vh}- \frac{1}{2}\eta_2^2\|\mP_{u^{(n)}}\vh\|^2 + O((\eta_2\|\vh\|)^3). 
    \end{align*}
    By the definition of $a^{(n)}$, i.e., $a^{(n)}=\inner{\beta}{\vu^{(n)}}$, we have
    \begin{equation*}
        a^{(n+1)} = a^{(n)} + \eta_2\inner{\beta}{\vh} - \eta_2a^{(n)}\inner{\vu^{(n)}}{\vh} -\frac{1}{2}\eta_2^2\|\mP_{u^{(n)}}\vh\|^2 + O((\eta_2\|\vh\|)^3). 
    \end{equation*}
    Now, from Lemma~\ref{lemm:gdforstrongrecovery}, we also have
    \begin{equation*}
        \inner{\beta}{\vh} = \frac{\varepsilon}{r}{a^{(n)}}^{2\mathrm{ie}(\sigma_*)-1}(\tilde{\Theta}(1)+O(1/\sqrt{d})) + \tilde{O}(\varepsilon/r\sqrt{r}) + \tilde{O}(\varepsilon/r\sqrt{r}) +\nu_6
    \end{equation*}
    and
    \begin{equation*}
        \inner{\vu}{\vh} = \frac{\varepsilon}{r} {a^{(n)}}^{2\mathrm{ie}(\sigma_*)}(\tilde{\Theta}(1)+O(1/\sqrt{d})) +\tilde{O}(\varepsilon/r\sqrt{r}) + \tilde{O}(\varepsilon/r\sqrt{r}) +\nu_7. 
    \end{equation*}
    Moreover, since $\|\mP_{u^{(n)}}\vh\| = \tilde{O}(\alpha_2 m \sqrt{r})=\tilde{O}(\varepsilon/\sqrt{r})$, it holds that 
    \begin{equation*}
        \frac{1}{2}\eta_2^2\|\mP_{u^{(n)}}\vh\|^2 = \tilde{O}(\varepsilon^2/r^2). 
    \end{equation*}
    Therefore, ignoring the term $O((\eta_2\|\vh\|)^3)$, we arrive at 
    \begin{align*}
        a^{(n+1)} &= a^{(n)} + \eta_2\inner{\beta}{\vh} - \eta_2a^{(n)}\inner{\vu}{\vh} -\frac{1}{2}\eta_2^2\|\mP_{u^{(n)}}\vh\|^2\\
        = &  a^{(n)} +\frac{\eta_2\varepsilon}{r} {a^{(n)}}^{2\mathrm{ie}(\sigma_*)-1}(1-{a^{(n)}}^2)(\tilde{\Theta}(1)+O(1/\sqrt{d}))+\eta_2(\nu_6-a^{(n)}\nu_7)+\eta_2\tilde{O}(\varepsilon/r\sqrt{r})+\tilde{O}(\varepsilon/r^2)\\
        \geq &  a^{(n)} + \frac{c_3\varepsilon}{r\sqrt{r}}{a^{(n)}}^{2\mathrm{ie}(\sigma_*)-1}(1-{a^{(n)}}^2)(1-O(1/\sqrt{d})) + \nu_8 - \tilde{O}(\varepsilon/r^2), 
    \end{align*}
    where $\nu_8$ is a mean-zero sub-Weibull random variable satisfying $\nu_8 = \tilde{O}(\varepsilon/r\sqrt{r})$. 
\end{proof}

\begin{lemma}\label{lemm:strongrecoveryachieve}
   Suppose that $\vu^{(1)}$ satisfies $\inner{\beta}{\vu^{(1)}} \geq c_1$, where $c_1 \geq 1/\mathrm{polylog}(d)$.  Then, there exists $N_3 = \tilde{\Theta}(\frac{r\sqrt{r}}{\varepsilon}\log{\frac{1}{\varepsilon}})$ such that 
  \begin{equation*}
      \inner{\beta}{\vu^{(N_3+1)}} \geq 1- \varepsilon
  \end{equation*}
  with high probability.  
\end{lemma}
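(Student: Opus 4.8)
The plan is to iterate the one‑step estimate of Lemma~\ref{lemm:strongonestep} for the scalar overlap $a^{(n)}=\inner{\beta}{\vu^{(n)}}$, tracking two regimes. First I fix $c_1^{\star}=\tfrac12\inner{\beta}{\vu^{(1)}}$, which is still a polylogarithmic inverse, so Lemma~\ref{lemm:strongonestep} (whose hypothesis only requires $a^{(n)}$ to lie in $[c_1^{\star},1-\varepsilon]$ for some polylog‑inverse threshold) applies at every step as long as $a^{(n)}$ stays in that interval. Writing $\delta:=\Theta(\varepsilon/(r\sqrt r))$, the recursion reads $a^{(n+1)}\ge a^{(n)}+D(a^{(n)})+\nu_8^{(n+1)}-\tilde O(\varepsilon/r^2)$, where $D(a)=\tfrac{c_3\varepsilon}{r\sqrt r}a^{2\mathrm{ie}(\sigma_*)-1}(1-a^2)(1-O(1/\sqrt d))\ge 0$ on $(0,1)$ and $\nu_8^{(n+1)}$ is mean‑zero sub‑Weibull with $\nu_8^{(n+1)}=\tilde O(\varepsilon/(r\sqrt r))$. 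Thus $a^{(n)}$ is a monotone‑increasing drift perturbed by a controlled martingale and a negligible per‑step bias.

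Phase~I (from $c_1^{\star}$ to a fixed constant): while $a^{(n)}\le 1/2$ we have $D(a^{(n)})\ge \tfrac{c_3\varepsilon}{r\sqrt r}(c_1^{\star})^{2\mathrm{ie}(\sigma_*)-1}\cdot\tfrac34(1-o(1))=\tilde\Omega(\varepsilon/(r\sqrt r))$, using $\mathrm{ie}(\sigma_*)=O(1)$, so after $\tilde O(r\sqrt r/\varepsilon)$ steps the overlap exceeds $1/2$. Phase~II (geometric contraction): once $a^{(n)}\ge 1/2$, set $b^{(n)}=1-a^{(n)}$; since $1-a^2=(1-a)(1+a)=\Theta(b)$ and $a^{2\mathrm{ie}(\sigma_*)-1}=\Theta(1)$ on $[1/2,1]$, the recursion becomes $b^{(n+1)}\le b^{(n)}(1-\delta)-\nu_8^{(n+1)}+\tilde O(\varepsilon/r^2)$. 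Unrolling this linear contraction gives $b^{(n)}\le b^{(n_0)}(1-\delta)^{n-n_0}+\tfrac{\tilde O(\varepsilon/r^2)}{\delta}+\bigl|\sum_{j>n_0}(1-\delta)^{n-j}\nu_8^{(j)}\bigr|$, with $n_0$ the start of Phase~II. Choosing $N_3=\tilde\Theta(\tfrac{r\sqrt r}{\varepsilon}\log\tfrac1\varepsilon)$ drives the first term below $\varepsilon/3$; the stationary bias term is $\tilde O(1/\sqrt r)$ and the geometric‑weighted martingale sum is $\tilde O(\sqrt{\varepsilon/(r\sqrt r)})$ (effective length $1/\delta$), both $o(\varepsilon)$ in the relevant regime, so $b^{(N_3)}\le\varepsilon$. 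Adding the two phase lengths yields the claimed $N_3$.

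The stochastic bookkeeping is the main obstacle: the per‑step recursion must be upgraded to a statement holding simultaneously for all $n\le N_3$. I would use a maximal concentration inequality for sums of sub‑Weibull increments (the tail index coming from Lemma~\ref{lemm:gdforstrongrecovery}) to obtain, with high probability and uniformly in $n$, that both the plain partial sums $\sum_{j\le n}\nu_8^{(j)}$ and the geometric‑weighted sums above stay $\tilde O(\sqrt{\varepsilon/(r\sqrt r)})=o(c_1^{\star})$. On that good event I run a stopping‑time argument: letting $\tau$ be the first time $a^{(\tau)}\notin[c_1^{\star},1-\varepsilon]$, nonnegativity of $D$ together with the tiny noise and bias budget gives $a^{(n)}\ge \inner{\beta}{\vu^{(1)}}-o(\inner{\beta}{\vu^{(1)}})\ge c_1^{\star}$ for $n\le\tau$, so $a^{(n)}$ cannot exit through the bottom, while the Phase~I and Phase~II estimates show it reaches $1-\varepsilon$ by step $N_3$; hence $\tau>N_3$, Lemma~\ref{lemm:strongonestep} is valid at every step, and $a^{(N_3+1)}\ge 1-\varepsilon$. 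The remaining quantitative checks are that the accumulated per‑step bias $\tilde O(\varepsilon/r^2)$ is dominated by the drift — which holds because $\sqrt r\gg\mathrm{polylog}(d)$ under $r=\Omega(d^{\alpha_r})$ — and that the Phase~I$\to$Phase~II crossing at $a=1/2$ is clean, which it is since the noise fluctuation near that level is negligible compared to $1/2$.
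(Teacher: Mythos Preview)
Your proposal is correct and follows essentially the same two–phase strategy as the paper: first a constant drift lower bound carries $a^{(n)}$ up to a fixed constant in $\tilde\Theta(r\sqrt r/\varepsilon)$ steps, then a geometric contraction of $1-a^{(n)}$ finishes in $\tilde\Theta\big((r\sqrt r/\varepsilon)\log(1/\varepsilon)\big)$ more steps, with the sub--Weibull martingale $\nu_8$ handled by a uniform partial--sum bound and a stopping--time argument to keep Lemma~\ref{lemm:strongonestep} applicable throughout.

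The only cosmetic differences are: the paper takes the phase boundary at $\sqrt{(k+1)/(k+2)}$ with $k=2\,\mathrm{ie}(\sigma_*)-1$ (chosen so that a Mean Value Theorem computation on $F(x)=x+\tfrac{c_3\varepsilon}{r\sqrt r}x^k(1-x^2)$ yields the contraction factor), whereas you take $1/2$ and use the direct estimate $1-a^2=\Theta(1-a)$, which is arguably cleaner; and the paper bounds the Phase~II noise by the plain partial sum $\bigl|\sum_i \nu_9^{(i)}\bigr|\le C\sqrt{N_3}$, whereas you track the geometrically weighted sum, which gives the same order $\tilde O(\sqrt{\varepsilon}/r^{3/4})$. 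Both proofs share the same implicit restriction, namely that the accumulated bias $\tilde O(\log(1/\varepsilon)/\sqrt r)$ must be below $\varepsilon$, which is exactly what you flag as ``the relevant regime'' and what the paper calls ``negligible''.
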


\begin{proof}
  Before going into the proof, we first explain the main idea of the proof.  Following the same argument as Lemma 19 in \citet{lee2024neuralnetworklearnslowdimensional}, we can see that after $\tilde{\Theta}(r\sqrt{r}/\varepsilon)$ steps, $a^{(n)}$ becomes larger than a constant $\sqrt{\frac{k+1}{k+2}}$, where $k=2\mathrm{ie}(\sigma_*)-1$.  Then, by applying the Mean Value theorem, we can observe that $1-a^{(i+1)} \lesssim (1-\frac{c_3\varepsilon}{r\sqrt{r}}C_k)(1-a^{(i)})$ where $C_k=\Theta(1)$.  This means $1-a^{(i)}$ converges to 0 geometrically, which yields the required data length $N_3 = \tilde{\Theta}(\frac{r\sqrt{r}}{\varepsilon}\log{\frac{1}{\varepsilon}})$.

  Let $\nu_9^{(i)} = r\sqrt{r}\nu_8^{(i)} /\varepsilon $ and $k=2\mathrm{ie}(\sigma_*)-1$.  Because $v_9^{(i)}$ is a sequence of mean-zero sub-Weibull random variables with $v_9^{(i)} = \tilde{O}_p(1)$, we have
  \begin{equation}
     \left| \sum_{i=j}^{j+l} \nu_9^{(i)} \right|= C\sqrt{l},\label{eq:sumofnu}
  \end{equation}
  for any $1 \leq j,l \leq N_3$ with high probability, where $C=\tilde{O}(1)$.    Let $c_k = 1-\sqrt{\frac{k+1}{k+2}}$.  Note that $c_k$ is a constant that only depends on $k$, that is $c_k=O(1)$. Suppose that $c_1 \leq a^{(i)} \leq 1-\frac{1}{3}c_k$ for all $i=1,2,\dots,N_3$.  Then, from Lemma~\ref{lemm:strongonestep}, we have 
  \begin{equation*}
     a^{(n+1)} \geq a^{(n)} + \frac{c_3\varepsilon}{r\sqrt{r}}{a^{(n)}}^{2\mathrm{ie}(\sigma_*)-1}(1-{a^{(n)}}^2)(1-O(1/\sqrt{d})) + \nu_8 - \tilde{O}(\varepsilon/r^2), 
  \end{equation*}
  for $i \leq N_3$.  The term $O(1/\sqrt{d})$ is smaller than 1, thus we may ignore this term.  Moreover, the term $\tilde{O}(\varepsilon/r^2)$ is dominated by $\frac{c_3\varepsilon}{r\sqrt{r}}{a^{(n)}}^{2\mathrm{ie}(\sigma_*)-1}(1-{a^{(n)}}^2)=\tilde{O}(\frac{\varepsilon}{r\sqrt{r}})$.
  Let $c_2=c_1^{(2\mathrm{ie}(\sigma_*)-1)}$.  By ignoring these terms and using $1-{a^{(i)}}^2 \geq \frac{1}{3}c_k$, we have
  \begin{align}
      a^{(i+1)} \geq & a^{(i)} + \frac{c_2c_3} {r\sqrt{r}}\varepsilon\cdot c_k/3 + \frac{\varepsilon}{r\sqrt{r}}\cdot\nu_9^{(i)}\\
      \geq & a^{(1)} + \frac{c_2c_3c_k}{3r\sqrt{r}}\varepsilon i - \frac{\varepsilon}{r\sqrt{r}}|\sum_{j=1}^{i}\nu_9^{(j)}|\\
      \geq & a^{(1)} + (\frac{c_2c_3c_k}{3r\sqrt{r}}\varepsilon)i - \frac{\varepsilon}{r\sqrt{r}}C\sqrt{i}. 
      \label{eq:aiupdatelowerbound}
  \end{align}  
  If $i \leq \frac{r^3c_1^2}{4\varepsilon^2C^2}$, then $\frac{\varepsilon}{r\sqrt{r}}C\sqrt{i}\leq c_1/2$, and if $i \geq (\frac{6C}{c_2c_3c_k})^2$, then $\frac{\varepsilon}{r\sqrt{r}}C\sqrt{i} \leq \frac{c_2c_3c_k}{6r\sqrt{r}}\varepsilon i$.  By observing the order in terms of $r$, we have $\frac{r^3c_1^2}{4\varepsilon^2C^2} \geq (\frac{6C}{c_2c_3c_k})^2$ when $r$ is sufficiently large. Therefore, it holds that  
  \begin{equation}
      a^{(i+1)} \geq \frac{c_1}{2}+\frac{c_2c_3c_k}{6r\sqrt{r}}\varepsilon i.\label{eq:aiupdatelowerbound2}
  \end{equation}
  When $i=\frac{6r\sqrt{r}}{c_2c_3c_k\varepsilon}=\tilde{\Theta}(r\sqrt{r}/\varepsilon)$, then the RHS of Eq.~\eqref{eq:aiupdatelowerbound2} exceeds 1.  
  Therefore, there exists $i^* \leq N_3 = \tilde{\Theta}(\frac{r\sqrt{r}}{\varepsilon}\log{\frac{1}{\varepsilon}})$ such that $a^{(i^*)} \geq 1-c_k/3$.
  Next we prove that $a^{(i)} \geq 1-c_k  = \sqrt{\frac{k+1}{k+2}}$ for all $i=i^*,i^*+1,\dots,N_3$.  In this setting, $a^{(i+1)}-a^{(i)}=\tilde{O}(\frac{\varepsilon}{r\sqrt{r}})$ holds.  Therefore, if there exists $i \geq i^*$ such that $a^{(i-1)}\geq1-c_k/3$ and $a^{(i)} \leq 1-c_k/3$, we have $a^{(i)} \geq 1-2c_k/3$ with high probability.  Also, if $a^{(i+l)}\leq 1-c_k/3$ holds for all $l=0,1,\dots,j-1$, we have
  \begin{equation*}
      a^{(i+j)}\geq 1-\frac{2c_k}{3} +\frac{c_2c_3c_k}{3r\sqrt{r}}\varepsilon j-\frac{\varepsilon}{r\sqrt{r}}C\sqrt{j}.
  \end{equation*}
  If $j\leq \frac{r^3c_k^2}{9\varepsilon^2C^2}=\tilde{O}(r^3/\varepsilon^2)$, then
  $\frac{\varepsilon}{r\sqrt{r}}C\sqrt{j} \leq c_k/3$, and if $j \geq (\frac{3C}{c_2c_3c_k})^2=\tilde{O}(1)$, then $\frac{\varepsilon}{r\sqrt{r}}C\sqrt{j} \leq  \frac{c_2c_3c_k}{3r\sqrt{r}}\varepsilon j$.  Since $\frac{r^3c_k^2}{9\varepsilon^2C^2}\geq(\frac{3C}{c_2c_3c_k})^2$ when $r$ is sufficiently large, we have 
  \begin{equation*}
      a^{(i+j)}\geq 1-c_k 
  \end{equation*}
  until $i+j=N_3+1$ or $a^{(i+j)}\geq1-c_k/3$ holds again.  By repeating this argument if necessary, we get $a^{(i+1)}\geq1-c_k=\sqrt{\frac{k+1}{k+2}}$ for all $i^* \leq i \leq N_3$.  

  We continue by showing that, after we achieve $a^{(i^*)} \leq 1 - c_k$, the number of remaining steps needed to ensure $a^{(i)} \geq 1-\varepsilon$ is $\tilde{O}(\frac{r\sqrt{r}}{\varepsilon}\log{\frac{1}{\varepsilon}})$.
  Let $F(x)=x+\frac{c_3\varepsilon}{r\sqrt{r}}x^k(1-x^2).$  Then,
  \begin{equation*}
  a^{(i^*+i+1)} = F(a^{(i^*+i)}) + \frac{\varepsilon}{r\sqrt{r}}\nu_9^{(i^*+i)}-\tilde{O}(\varepsilon/r^2). 
  \end{equation*}
  By the Mean Value theorem, there exists $\gamma$ such that
  \begin{equation*}
      \frac{1-F(a^{(i^*+i)})}{1-a^{(i^*+i)}} = F'(\gamma), 
  \end{equation*}
  and $a^{(i^*+i)} < \gamma < 1$.  Now $F'(x)=1+\frac{c_2c_3\varepsilon}{r\sqrt{r}}x^{k-1}(k-(k+2)x^2)$, and since $\gamma > a^{(i^*+i)} \geq \sqrt{\frac{k+1}{k+2}}$, we have $k-(k+2)\gamma^2 < -1$, which leads to
  \begin{equation*}
      F'(\gamma) \leq 1-\frac{c_3\varepsilon}{r\sqrt{r}}\gamma^{k-1} < 1-\frac{c_3\varepsilon}{r\sqrt{r}}\left(\sqrt{\frac{k+1}{k+2}}\right)^{k-1}. 
  \end{equation*}
  Let $C_k = \left(\sqrt{\frac{k+1}{k+2}}\right)^{k-1}$, then
  $1-F(a^{(i^*+i)}) < (1-\frac{c_3\varepsilon}{r\sqrt{r}}C_k)(1-a^{(i^*+i)})$, which yields that 
  \begin{equation*}
      1-a^{i^*+i+1} < \left(1-\frac{c_3\varepsilon}{r\sqrt{r}}C_k\right)(1-a^{(i^*+i)}) + \frac{\varepsilon}{r\sqrt{r}}\nu_9^{(i^*+i)}+\tilde{O}(\varepsilon/r^2). 
  \end{equation*}
  Noting that $(1-\frac{c_3\varepsilon}{r\sqrt{r}}C_k)<1$, taking the sum leads to
  \begin{equation*}
      1- a^{(N_3+1)} < \left(1-\frac{c_3\varepsilon}{r\sqrt{r}}C_k \right)^{N_3+1-i^*}(1-a^{(i^*)}) + \sum_{i=i^*}^{N_3}\frac{\varepsilon}{r\sqrt{r}}\nu_9^{(i)}+\tilde{O}(\varepsilon N_3/r^2). 
  \end{equation*}
  Since $\lim_{t\rightarrow \infty}(1-\frac{1}{t})^t = 1/\mathrm{e}$, if we set $N_3 = i^* -1 + \frac{r\sqrt{r}}{c_3C_k\varepsilon}\log(1/\varepsilon)$, then we have 
  \begin{align*}
    \left(1-\frac{c_3\varepsilon}{r\sqrt{r}}C_k\right)^{N_3+1-i^*}(1-a^{(i^*)}) &<  
    \left(1-\frac{c_3\varepsilon}{r\sqrt{r}}C_k\right)^{\frac{r\sqrt{r}}{c_3C_k\varepsilon}\log(1/\varepsilon)}\\
    & \approx (1/\mathrm{e})^{\log(1/\varepsilon)} = \varepsilon. 
  \end{align*}
  Finally, by noticing that 
  \begin{equation*}
      \left|\sum_{i=i^*}^{N_3}\frac{\varepsilon}{r\sqrt{r}}\nu_9^{(i)}\right| \leq \frac{\varepsilon}{r\sqrt{r}}C\sqrt{N_3-i^*+1} = \tilde{O}(r^{-3/4}), 
  \end{equation*}
  we can see that this term is negligible. 
  Moreover, the third term $\tilde{O}(\varepsilon N_3/r^2)=\tilde{O}(\frac{\log{1/\varepsilon}}{\sqrt{r}})$ is also negligible. 
  By summarizing the argument above, we conclude that  
  \begin{equation*}
      1- a^{(N_3+1)} < \varepsilon. 
  \end{equation*}
\end{proof}

In Lemma~\ref{lemm:gradientcalc}, we assumed that $\inner{\vu}{\vx_i}=\tilde{O}_d(1)$.  Now $\vu^{(n)}$ and $\vx_i$ are not independent of each other, so we need to ensure that $\inner{\vu^{(j)}}{\vx_i}=\tilde{O}_d(1)$ for all $j=1,2,\dots, N_3+1$.

\begin{lemma}\label{lemm:thegrowthofux}
    It holds that 
       \begin{align*}
        \inner{\vx_i^*}{\vh} =&  \Theta(\alpha_2 m) \inner{\beta}{\vu}^{2\mathrm{ie}(\sigma_*)-1}  \frac{\inner{\vx_i^*}{\beta}}{(\kappa\rho)^{\mathrm{ie}(\sigma_*)}}(1+O(1/\sqrt{d}))\\
        & + \tilde{O}(\alpha_2^2m^2r) + \tilde{O}(\alpha_2 m r/\sqrt{N_2}) + \nu_*,
    \end{align*}
    where $\nu_*$ is a mean-zero random variable satisfying $\nu_* = \tilde{O}(\alpha_2 m \sqrt{r})$.
\end{lemma}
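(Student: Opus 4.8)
The plan is to rerun the proof of Lemma~\ref{lemm:gdforstrongrecovery} with the in-context feature $\vx_i^*$ playing the role of the test vector in place of $\beta$ or $\vu$, carrying the extra factors of $\sqrt r$ that appear because $\|\vx_i^*\| = \tilde{O}(\sqrt r)$ (Lemma~\ref{lemm:Gammax}) rather than $1$. Starting from the decomposition $\vh = f_{\mathrm{IC}}(\vx)\cdot\sqrt r\Gamma^*\nabla_\vu f_{\mathrm{IC}}(\vx) - y\cdot\sqrt r\Gamma^*\nabla_\vu f_{\mathrm{IC}}(\vx)$ of \eqref{eq:nablasqfICydecomp} (here $\vx = \vx_{N_1+N_2+n}$ is the current query), I would bound $\inner{\vx_i^*}{\vh}$ termwise. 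For the first term, combining $|f_{\mathrm{IC}}(\vx)| = \tilde{O}_p(\alpha_2 m)$, $\|\sqrt r\Gamma^*\nabla_\vu f_{\mathrm{IC}}(\vx)\| = \tilde{O}_p(\alpha_2 m\sqrt r)$ (Lemma~\ref{lemm:gradientcalc} with $C_u = 1$), Cauchy--Schwarz, and $\|\vx_i^*\| = \tilde{O}(\sqrt r)$ gives the stated $\tilde{O}(\alpha_2^2 m^2 r)$ contribution.

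For the second term I would insert the explicit form of $\sqrt r\Gamma^*\nabla_\vu f_{\mathrm{IC}}(\vx)$ from Lemma~\ref{lemm:gradientcalc} and split $\inner{\vx_i^*}{-y\sqrt r\Gamma^*\nabla_\vu f_{\mathrm{IC}}(\vx)}$ into its conditional expectation over the query $\vx$ and its label noise (given the contexts and $\vu^{(n)}$, both independent of $\vx$) plus a mean-zero fluctuation $\nu_*$. The conditional expectation is computed exactly as in Lemma~\ref{lemm:gradresult}: its leading part is the $\beta$-directed term $\Theta(\alpha_2 m)\inner{\beta}{\vu}^{2\mathrm{ie}(\sigma_*)-1}\beta/(\kappa^{\mathrm{ie}(\sigma_*)+1}\rho^{\mathrm{ie}(\sigma_*)})\cdot(1+O(1/\sqrt d))$, so pairing it with $\vx_i^*$ and rewriting $\inner{\vx_i^*}{\beta}$ through $\sqrt r\Gamma^*\beta = \kappa^{-1}(\beta + O(1/\sqrt d))$ (Lemma~\ref{lemm:Gammabeta}) produces the claimed main term with denominator $(\kappa\rho)^{\mathrm{ie}(\sigma_*)}$; the residual $\tilde{O}(\alpha_2 m\sqrt{r/N_2})$ part of the conditional expectation, paired with $\vx_i^*$ via Cauchy--Schwarz, yields the $\tilde{O}(\alpha_2 m r/\sqrt{N_2})$ error.

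The crux is the bound $\nu_* = \tilde{O}(\alpha_2 m\sqrt r)$, since a blunt estimate $\|\vx_i^*\|\cdot\|y\sqrt r\Gamma^*\nabla_\vu f_{\mathrm{IC}}(\vx)\| = \tilde{O}(\alpha_2 m r)$ loses a factor of $\sqrt r$. The gain comes from two places. First, the dominant pieces of $\sqrt r\Gamma^*\nabla_\vu f_{\mathrm{IC}}(\vx)$ are proportional to $\sqrt r\Gamma^*\vx$ and to $\sqrt r\Gamma^*\beta^*$; because the query $\vx = \vx_{N_1+N_2+n}$ is independent of the context $\vx_i$, the quantity $\inner{\vx_i^*}{\sqrt r\Gamma^*\vx}$ is, conditionally on $\vx_i$, a mean-zero Gaussian of variance $\tilde{O}(r)$, hence $\tilde{O}(\sqrt r)$ rather than $\tilde{O}(r)$, while $\inner{\vx_i^*}{\sqrt r\Gamma^*\beta^*} = \tilde{O}(1)$. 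Second, the only place where $\vx_i^*$ meets itself is the $j = i$ summand inside the attention averages $\xi_1,\xi_2,\pi_1,\pi_2$ that build $\nabla_\vu f_{\mathrm{IC}}$; that summand carries a $1/N_2$ factor, so with $N_2 = \tilde{\Theta}(r^2)$ its contribution is $\tilde{O}(\alpha_2 m r/N_2) = \tilde{O}(\alpha_2 m/r)$ and is absorbed. Assembling these and invoking the same sub-Weibull tail control used for $\nu_6,\nu_7$ in Lemma~\ref{lemm:gdforstrongrecovery} gives $\E[\nu_*] = 0$ and $\nu_* = \tilde{O}(\alpha_2 m\sqrt r)$ with high probability. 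I expect this separation of the self-interaction from the near-orthogonal cross terms — and checking that the former is genuinely lower order — to be the only nonroutine step; the rest is a transcription of the earlier gradient computations.
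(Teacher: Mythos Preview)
Your proposal is correct and follows essentially the same route as the paper: start from the vector decomposition of $\vh$ in Lemma~\ref{lemm:gdforstrongrecovery}, pair with $\vx_i^*$ using $\|\vx_i^*\|=\tilde O(\sqrt r)$ for the deterministic error terms, and then expand $\nu_*=\inner{\vx_i^*}{\nu}$ explicitly to recover the $\sqrt r$ saving from $\inner{\vx_i^*}{\sqrt r\Gamma^*\vx}=\tilde O(\sqrt r)$ (query independent of context) and $\inner{\vx_i^*}{\sqrt r\Gamma^*\beta^*}=\tilde O(1)$.

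One remark: your separate treatment of the ``self-interaction'' summand $j=i$ inside the attention averages is more work than is needed. The bounds $n_1=\tilde O(C_u/\sqrt{N_2})$ and $\|\vn_2\|=\tilde O(\sqrt{r/N_2})$ from Lemma~\ref{lemm:gradientcalc} already hold with high probability over all the contexts, including $\vx_i^*$, so a plain Cauchy--Schwarz gives $\inner{\vx_i^*}{\vn_2}=\tilde O(r/\sqrt{N_2})$; the resulting $\tilde O(\alpha_2 m\, r/\sqrt{N_2})$ contribution is exactly the separate error term already present in the statement (and is $\le\tilde O(\alpha_2 m\sqrt r)$ once $N_2\ge r$). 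The paper's proof simply invokes ``the same argument as Lemma~\ref{lemm:gdforstrongrecovery}'' for this step rather than isolating the diagonal summand.
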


\begin{proof}
    From Lemma~\ref{lemm:gdforstrongrecovery}, we have
    \begin{align*}
        \vh=  & \Theta(\alpha_2 m) \inner{\beta}{\vu}^{2\mathrm{ie}(\sigma_*)-1}  \frac{\beta}{(\kappa\rho)^{\mathrm{ie}(\sigma_*)}}(1+O(1/\sqrt{d}))\\
    &+\tilde{O}(\alpha_2^2m^2\sqrt{r})+\tilde{O}(\alpha_2 m \sqrt{\frac{r}{N_2}})+\nu. 
    \end{align*}
    Considering $\vx_i^*=\tilde{O}(\sqrt{r})$, taking the inner product of $\vx_i^*$ and $\vh$ leads to
    \begin{align*}
        \inner{\vx_i^*}{\vh} =&  \Theta(\alpha_2 m) \inner{\beta}{\vu}^{2\mathrm{ie}(\sigma_*)-1}  \frac{\inner{\vx_i^*}{\beta}}{(\kappa\rho)^{\mathrm{ie}(\sigma_*)}}(1+O(1/\sqrt{d}))\\
        &+ \tilde{O}(\alpha_2^2m^2r) + \tilde{O}(\alpha_2 m r/\sqrt{N_2}) + \inner{\vx_i^*}{\nu}. 
    \end{align*}
    Following the same argument as Lemma~\ref{lemm:gdforstrongrecovery} yields $\inner{\vx_i^*}{\nu}=\tilde{O}(\alpha_2 m \sqrt{r})$. Thus, by letting $\nu_* = \inner{\vx_i^*}{\nu}$, we obtain the assertion.
\end{proof}

\begin{lemma}
    Set $\alpha_2 m = \tilde{\Theta}(\varepsilon/r)$, $N_2 = \tilde{\Theta}(r^2)$, $N_3=\tilde{\Theta}(\frac{r\sqrt{r}}{\varepsilon}\log{\frac{1}{\varepsilon}})$ and $\eta_2 = \tilde{\Theta}(1/\sqrt{r})$(This is exactly the same situation as Lemma~\ref{lemm:strongonestep} and Lemma~\ref{lemm:strongrecoveryachieve}).  Then we have
    \begin{equation*}
        \inner{\vx_i^*}{\vu^{(j)}} = \tilde{O}_{d}(1), 
    \end{equation*}
    for all $j=1,2,\dots,N_3+1$.
\end{lemma}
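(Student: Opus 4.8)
The plan is to run a bootstrap induction on the Stage-II step index $t$, maintaining as inductive hypothesis the bound $|\inner{\vx_i^*}{\vu^{(t)}}| = \tilde O_d(1)$ \emph{simultaneously} for all $N_2$ context points. Since $N_2 = \tilde\Theta(r^2) = \mathrm{poly}(d)$, a union bound over $i$ only inflates the constant $C_*$, so I fix one context point and argue for it. For the base case I would observe that $\vu^{(1)}$ is constructed from the Stage-I data and the fresh Gaussians $\vw_1,\dots,\vw_{N_{new}}$ only, hence is independent of the Stage-II context $\vx_{N_1+1},\dots,\vx_{N_1+N_2}$; conditioning on $(\Gamma^*,\vu^{(1)})$ makes $\inner{\vx_i^*}{\vu^{(1)}} = \sqrt r\inner{\vx_i}{\Gamma^{*\top}\vu^{(1)}}$ Gaussian with variance $r\|\Gamma^{*\top}\vu^{(1)}\|^2$, and from the pretraining lemma ($\Gamma^* = \tfrac{1}{r^{1/2}\log^{C_\kappa}d}(r\mathbb E_\beta[\beta\beta^\top]+\mN)$ with $\|\mathbb E_\beta[\beta\beta^\top]\|_2 = 1/r$ and $\|\mN\|_F = O(1/\sqrt d)$) one gets $\|\sqrt r\Gamma^*\vu^{(1)}\| = \tilde O_d(1)$, so a Gaussian tail bound yields $|\inner{\vx_i^*}{\vu^{(1)}}| = \tilde O_d(1)$ with high probability (this is Lemma~\ref{lemm:Gammaxandu}).

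The heart of the induction is a per-step increment bound. Assuming $|\inner{\vx_j^*}{\vu^{(s)}}| = \tilde O_d(1)$ for all context points $j$ and all $s \le t$, the hypothesis of Lemma~\ref{lemm:gradientcalc} holds at step $t$ with $C_u = \|\vu^{(t)}\| = 1$, so Lemmas~\ref{lemm:thegrowthofux} and \ref{lemm:gdforstrongrecovery} apply. Writing the Riemannian update as $\vu^{(t+1)} = (\vu^{(t)} + \eta_2\mP_{\vu^{(t)}}\vh^{(t)})/\|\vu^{(t)} + \eta_2\mP_{\vu^{(t)}}\vh^{(t)}\|$, expanding the normalization via $\|\vu^{(t)}+\eta_2\mP_{\vu^{(t)}}\vh^{(t)}\|^2 = 1 + \eta_2^2\|\mP_{\vu^{(t)}}\vh^{(t)}\|^2$, and plugging in the bounds $\|\vh^{(t)}\| = \tilde O(\alpha_2 m\sqrt r) = \tilde O(\varepsilon/\sqrt r)$, $\inner{\vu^{(t)}}{\vh^{(t)}} = \tilde O(\varepsilon/r)$ and $\inner{\vx_i^*}{\vh^{(t)}} = \tilde O(\varepsilon/r) + \nu_*^{(t)}$ with $\nu_*^{(t)} = \tilde O(\varepsilon/\sqrt r)$ conditionally mean-zero, together with $\eta_2 = \tilde\Theta(1/\sqrt r)$, the target is
\begin{equation*}
    \inner{\vx_i^*}{\vu^{(t+1)}} - \inner{\vx_i^*}{\vu^{(t)}} = \eta_2\nu_*^{(t)} + \tilde O\!\left(\frac{\varepsilon}{r\sqrt r}\right),
\end{equation*}
where $\eta_2\nu_*^{(t)} = \tilde O(\varepsilon/r)$ and all the remaining contributions (the cross term $\eta_2\inner{\vx_i^*}{\vu^{(t)}}\inner{\vu^{(t)}}{\vh^{(t)}}$, the normalization correction $\tfrac12\eta_2^2\|\mP_{\vu^{(t)}}\vh^{(t)}\|^2\inner{\vx_i^*}{\vu^{(t)}}$, and higher-order terms) are absorbed into the deterministic remainder. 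The key structural point is that $\vh^{(t)}$ depends on the data revealed through step $t$ only via $\vu^{(t)}$ and the \emph{fresh} query $\vx_{N_1+N_2+t}$, while $\vx_i^*$ and $\vu^{(t)}$ are fixed given the past; hence $\nu_*^{(t)}$ is a martingale difference, which is what rescues us from the correlation between $\vu^{(t)}$ and the context points induced by the earlier gradients.

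Finally I would sum the increments over $t = 1,\dots,N_3$ with $N_3 = \tilde\Theta(\tfrac{r\sqrt r}{\varepsilon}\log\tfrac1\varepsilon)$: the deterministic remainders contribute $N_3\cdot\tilde O(\varepsilon/(r\sqrt r)) = \tilde O(\log\tfrac1\varepsilon) = \tilde O_d(1)$, and the martingale part, after rescaling $\mu^{(t)} = (r/\varepsilon)\,\eta_2\nu_*^{(t)} = \tilde O_d(1)$, is controlled by the same maximal inequality used to derive Eq.~\eqref{eq:sumofnu}, giving $\max_{t\le N_3}|\sum_{s\le t}\mu^{(s)}| = \tilde O(\sqrt{N_3})$ with high probability, hence $(\varepsilon/r)\tilde O(\sqrt{N_3}) = \tilde O(\sqrt\varepsilon\, r^{-1/4}) = o(1)$. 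Together with the base case this keeps $|\inner{\vx_i^*}{\vu^{(t+1)}}| = \tilde O_d(1)$ and closes the bootstrap for all $j = 1,\dots,N_3+1$; a union bound over the $\mathrm{poly}(d)$ context points finishes. The main obstacle — and the only part that is not routine given Lemmas~\ref{lemm:thegrowthofux}–\ref{lemm:gdforstrongrecovery} — is making this induction rigorous: those lemmas already \emph{presuppose} $\inner{\vu}{\vx_i^*} = \tilde O_d(1)$, so the increment estimate may only be invoked under the inductive hypothesis. The clean fix is to apply the maximal inequality not to $\mu^{(t)}$ itself but to its stopped version, frozen to $0$ at the first step where the bound is exceeded for any context point; this is an honest martingale-difference sequence with the size bound holding unconditionally and agrees with $\mu^{(t)}$ on the event — which the argument then shows has high probability — that the induction never breaks.
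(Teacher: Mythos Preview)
Your proposal is correct and follows essentially the same route as the paper: independence at $j=1$ via Lemma~\ref{lemm:Gammaxandu}, the per-step increment bound from Lemma~\ref{lemm:thegrowthofux}, separation into a martingale part $\eta_2\nu_*^{(t)}$ controlled by the maximal inequality behind Eq.~\eqref{eq:sumofnu} (giving $\tilde O(\varepsilon\sqrt{N_3}/r)=o_d(1)$), and a deterministic part summing to $\tilde O(\log\tfrac1\varepsilon)$. Your stopping-time device to handle the circularity between Lemma~\ref{lemm:gradientcalc}'s hypothesis and the induction is an added bit of rigor that the paper leaves implicit.
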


\begin{proof}
    When $j=1$, $\vx_i$ and $\vu^{(1)}$ are independent of each other, thus Lemma~\ref{lemm:Gammaxandu} yields the desired result. 
    Substituting the parameters into the result of Lemma~\ref{lemm:thegrowthofux} yields
    \begin{equation*}
        \eta_2 \inner{\vx_i^*}{\vh} = \tilde{O}\left(\frac{\varepsilon}{r\sqrt{r}}(1+O(1/\sqrt{d}))\right) + \tilde{O}\left(\frac{\varepsilon}{r\sqrt{r}}\right) +\eta_2\nu_*, 
    \end{equation*}
    where $\eta_2\nu_* = \tilde{O}(\varepsilon/r)$.  First, following the same argument to derive Eq.~\eqref{eq:sumofnu} yields
    \begin{equation*}
        \left|\sum_{k=1}^{j}\eta_2\nu_*\right| = \tilde{O}(\varepsilon\sqrt{j}/r),
    \end{equation*}
    for $j=1,2,\dots,N_3$.  Since $\varepsilon\sqrt{N_3}/r = \tilde{O}_d(r^{-1/4})=o_d(1)$, we can ignore the effect of $\eta_2\nu_*$.  
    Then, we can see that 
    \begin{align*}
        \inner{\vx_i^*}{\vu^{(j+1)}} =& \inner{\vx_i^*}{\vu^{(j)}} +\eta_2 \inner{\vx_i^*}{\vh} - \eta_2 \inner{\vh}{\vu^{(j)}} \inner{\vu^{(j)}}{\vx_i^*} -\frac{1}{2}\eta_2^2\|\mP_{u^{(n)}}\vh\|^2 + O((\eta_2\|\vh\|)^3)\\
        =& \inner{\vx_i^*}{\vu^{(j)}} + \inner{\beta}{\vu^{(j)}}^{2\mathrm{ie}(\sigma_*)-1}\tilde{O}\left(\frac{\varepsilon}{r\sqrt{r}}\right)\left(\inner{\beta}{\vx_i^*}-\inner{\vu^{(j)}}{\vx_i^*}\inner{\beta}{\vu^{(j)}}\right) \\
        & -\tilde{O}\left(\frac{\varepsilon}{r\sqrt{r}}\right)- \tilde{O}\left(\frac{\varepsilon^2}{r^2}\right)
    \end{align*}
    holds.  Now $\inner{\beta}{\vx_i^*}=\tilde{O}_p(1)$ and $\inner{\beta}{\vu^{(j)}} \leq 1$.  Therefore, if $\inner{\vu^{(j)}}{\vx_i^*}= \tilde{O}_p(1)$ holds, then $\inner{\vx_i^*}{\vu^{(j+1)}} = \inner{\vx_i^*}{\vu^{(j)}} +\tilde{O}\left(\frac{\varepsilon}{r\sqrt{r}}\right)$.  Therefore, by induction, we have 
    \begin{equation*}
        \inner{\vx_i^*}{\vu^{(N_3+1)}} = \tilde{O}\left(\log{\frac{1}{\varepsilon}}\right),
    \end{equation*}
    which does not depend on $d$.
\end{proof}
This lemma allows us to use Lemma~\ref{lemm:gradientcalc}.
Note that $N_3 = \tilde{\Theta}(\frac{r\sqrt{r}}{\varepsilon}\log{\frac{1}{\varepsilon}})$ means, by ignoring the term $\log{\frac{1}{\varepsilon}}$, $\varepsilon = \tilde{\Theta}({\frac{r\sqrt{r}}{N_3}})$.
Now we can estimate $\inner{\beta}{\vx}$ using $\inner{\vu^{(N_3+1)}}{\vx}$.  For notational simplicity, we write $\vu^{(N_3+1)}$ as $\vu$ from now on.

\begin{lemma}\label{lemm:epsilonanddelta}
    Let $\vx\sim \mathcal{N}(0,\mI_d)$, and $\vu$ be a vector independent of $\vx$ satisfying $\|\vu\|=1$ and $\inner{\beta}{\vu}\geq 1-\varepsilon$.  Then, we have 
    \begin{equation*}
        \inner{\beta}{\vx} = \inner{\vu}{\vx} + O_p(\sqrt{2\varepsilon\log{d}}). 
    \end{equation*}
\end{lemma}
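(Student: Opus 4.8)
The plan is to reduce the statement to a one-dimensional Gaussian tail bound for $\inner{\beta-\vu}{\vx}$, using that the overlap assumption forces the error vector $\beta-\vu$ to have small norm. First I would use bilinearity of the inner product to write
\begin{equation*}
    \inner{\beta}{\vx} - \inner{\vu}{\vx} = \inner{\beta - \vu}{\vx},
\end{equation*}
so that it suffices to show $|\inner{\beta-\vu}{\vx}| = O(\sqrt{\varepsilon\log d})$ with high probability.

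Next I would bound the norm of the error vector. Since $\|\beta\| = \|\vu\| = 1$, we have $\|\beta - \vu\|^2 = \|\beta\|^2 - 2\inner{\beta}{\vu} + \|\vu\|^2 = 2 - 2\inner{\beta}{\vu} \le 2\varepsilon$, using the hypothesis $\inner{\beta}{\vu}\ge 1-\varepsilon$. Now, conditioning on $\vu$ (which is independent of $\vx$ by assumption), the scalar $\inner{\beta - \vu}{\vx}$ is a centered Gaussian with variance $\|\beta - \vu\|^2 \le 2\varepsilon$; equivalently it equals $\|\beta - \vu\|\, g$ for some $g \sim \mathcal{N}(0,1)$.

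Finally I would invoke the Gaussian tail bound recorded in Appendix~A with $t = \sqrt{2C_*\log d}$, which gives $|g| \le \sqrt{2C_*\log d}$ with high probability, hence
\begin{equation*}
    |\inner{\beta}{\vx} - \inner{\vu}{\vx}| = \|\beta - \vu\|\,|g| \le \sqrt{2\varepsilon}\,\sqrt{2C_*\log d} = O\!\left(\sqrt{\varepsilon\log d}\right)
\end{equation*}
with high probability, which is exactly the claimed bound $\inner{\beta}{\vx} = \inner{\vu}{\vx} + O_p(\sqrt{2\varepsilon\log d})$ (the constant $\sqrt{C_*}$ being absorbed into the $O_p$-notation, as is standard throughout the paper).

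There is essentially no hard step here: the only point requiring care is the independence of $\vu = \vu^{(N_3+1)}$ and the fresh query $\vx$, which holds because $\vu$ is produced only from the in-context pairs $(\vx_i,y_i)$ — drawn independently of $\vx$ — and is in any case built into the statement of the lemma. I would also remark in passing that the bound is tight up to the logarithmic factor, since $\inner{\beta-\vu}{\vx}$ genuinely has standard deviation $\Theta(\sqrt{\varepsilon})$ whenever $\inner{\beta}{\vu} = 1 - \Theta(\varepsilon)$.
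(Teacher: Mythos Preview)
Your proposal is correct and follows essentially the same approach as the paper: decompose $\inner{\beta}{\vx}-\inner{\vu}{\vx}=\inner{\beta-\vu}{\vx}$, bound $\|\beta-\vu\|^2\le 2\varepsilon$ from the overlap assumption, and then apply the one-dimensional Gaussian tail bound (the paper invokes Lemma~\ref{lemm:gaussianinner} for this last step, which is exactly the inequality you use).
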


\begin{proof} First, note that 
    \begin{equation*}
        \inner{\beta}{\vx} = \inner{\vu}{\vx} + \inner{\beta-\vu}{\vx}. 
    \end{equation*}
    Since we have $\inner{\beta}{\vu}\geq 1-\varepsilon$, we have that 
    \begin{equation*}
        \|\beta-\vu\|^2 = \|\beta\|^2 + \|\vu\|^2 - 2\inner{\beta}{\vu} \leq 2\varepsilon,
    \end{equation*} 
    which yields 
    \begin{equation*}
        \|\beta-\vu\| \leq \sqrt{2\varepsilon}. 
    \end{equation*}  
    Then, combining with Lemma~\ref{lemm:gaussianinner} yields $\inner{\beta-\vu}{\vx}=O_p(\sqrt{2\varepsilon\log{d}})$.
\end{proof}

\section{Training MLP layer}\label{appx:mlp}

In this section, we show that the MLP layer can fit the polynomial $ \sigma_*^{\mathrm{test}}$.  Most of the argument in this section is taken from \citet{nishikawa2025nonlinear}, but we do not omit the proof for readability.

\begin{lemma}\label{lemm:continuousnn}
    Suppose that there exists $g(\vx)$ such that
    \begin{equation*}
        |g(\vx)-\inner{\beta}{\vx}| \leq \delta. 
    \end{equation*}
    Then, there exists $\pi(v,b)$ such that
    \begin{equation*}
        \left|\mathbb{E}_{v\sim\mathrm{Unif}\{\pm 1\},b\sim [-\log^2{d},\log^2{d}]}[\pi(v,b)\sigma(v\cdot g(\vx)+b)]-\sigma_*(\inner{\beta}{\vx})\right|=O(\delta(\log d)^{2deg(\sigma_*)-2})
    \end{equation*}
    with high probability over $\vx\sim \mathcal{N}(0,\mI_d)$.
    Moreover, $\sup_{v,b}|\pi(v,b)|=\tilde{O}(1)$ holds.
\end{lemma}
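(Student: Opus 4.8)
The plan is the standard ReLU universal-approximation argument (in the style of \citet{nishikawa2025nonlinear}): construct $\pi$ so that the population network $F(w):=\mathbb{E}_{v,b}[\pi(v,b)\sigma(vw+b)]$ (expectation over $v\sim\mathrm{Unif}\{\pm1\}$ and $b\sim\mathrm{Unif}[-\log^2 d,\log^2 d]$) reproduces $\sigma_*$ exactly on a bounded interval, and then transfer the guarantee from $\inner{\beta}{\vx}$ to $g(\vx)$ through a Lipschitz estimate. First set $B:=\log^2 d$. Since $\|\beta\|=1$, Lemma~\ref{lemm:gaussianinner} gives $|\inner{\beta}{\vx}|=O(\sqrt{\log d})$ with high probability over $\vx\sim\mathcal{N}(0,\mI_d)$, so on that event both $\inner{\beta}{\vx}$ and $g(\vx)$ lie in $[-B,B]$ once $d$ is large. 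It thus suffices to make $F\equiv\sigma_*$ on $[-B,B]$ with $\sup_{v,b}|\pi(v,b)|=\tilde{O}(1)$, and then to bound $|F(g(\vx))-F(\inner{\beta}{\vx})|$.

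For the construction I would use Taylor's theorem with integral remainder: for $w\in[-B,B]$,
\[
\sigma_*(w)=\sigma_*(-B)+\sigma_*'(-B)(w+B)+\int_{-B}^{B}\sigma_*''(t)\,(w-t)_+\,dt .
\]
The remainder is already a mixture of ReLU units on the $v=1$ branch, since $(w-t)_+=\sigma(w+(-t))$ with $-t\in[-B,B]$, contributing to $\pi(1,b)$ a weight proportional to $\sigma_*''(-b)$. The affine part must be reproduced on all of $[-B,B]$ using only biases in $[-B,B]$; since a single bias carries no mass under a continuous $b$-law, I would instead put a bounded corrective weight $\rho_{-1}(b)$ (a low-degree polynomial in $b$ suffices) on the $v=-1$ branch, chosen so that $\int_{-B}^{B}\rho_{-1}(b)(b-w)_+\,db$ has the prescribed value and slope at $w=-B$, and then adjust the $v=1$ weight so that the total second derivative of $F$ equals $\sigma_*''$ on $(-B,B)$. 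Integrating $F''=\sigma_*''$ with matched data at $w=-B$ then forces $F\equiv\sigma_*$ on $[-B,B]$. Rescaling the densities by $4B$ to pass to $\pi$ under the stated uniform laws, and using that the Hermite coefficients of $\sigma_*$ are $O(1)$ (hence $\sup_{[-B,B]}|\sigma_*''|=O(B^{\deg(\sigma_*)-2})$ and $\sup_{[-B,B]}|\sigma_*|=O(B^{\deg(\sigma_*)})$), one gets $\sup_{v,b}|\pi(v,b)|=O(B^{\deg(\sigma_*)-1})=O((\log d)^{2\deg(\sigma_*)-2})=\tilde{O}(1)$.

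Finally, each map $w\mapsto\sigma(vw+b)$ is $1$-Lipschitz, so $F$ is $(\mathbb{E}_{v,b}|\pi(v,b)|)$-Lipschitz, hence $(\sup_{v,b}|\pi(v,b)|)$-Lipschitz. On the high-probability event of the first step $F=\sigma_*$ at both $g(\vx)$ and $\inner{\beta}{\vx}$, so
\[
\bigl|\mathbb{E}_{v,b}[\pi(v,b)\sigma(vg(\vx)+b)]-\sigma_*(\inner{\beta}{\vx})\bigr|=|F(g(\vx))-F(\inner{\beta}{\vx})|\le \sup_{v,b}|\pi(v,b)|\cdot|g(\vx)-\inner{\beta}{\vx}|\le \delta\cdot O((\log d)^{2\deg(\sigma_*)-2}),
\]
which is the claim. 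The main obstacle is the middle step: reproducing the affine part of the Taylor expansion exactly on all of $[-B,B]$ with ReLU units whose biases are confined to $[-B,B]$, while keeping $\sup_{v,b}|\pi(v,b)|=\tilde{O}(1)$ — this requires the small bookkeeping with $\rho_{-1}$ above together with the $O(1)$ bounds on the Hermite coefficients of $\sigma_*$ that produce the $(\log d)^{2\deg(\sigma_*)-2}$ factor; the Gaussian tail bound and the Lipschitz transfer are routine.
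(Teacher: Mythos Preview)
Your argument is correct and complete in outline; the construction via Taylor's integral remainder plus a linear corrective weight on the $v=-1$ branch does produce a $\pi$ with $\sup_{v,b}|\pi(v,b)|=O(B^{\deg(\sigma_*)-1})=O((\log d)^{2\deg(\sigma_*)-2})$, and the Lipschitz transfer is exactly right.

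The paper takes a different, shorter route: it quotes an external lemma (Lemma~9 of \citet{damian2022neural}) that already supplies, for each monomial $z^k$ on $[-1,1]$, a bounded weight $\pi'_k$ with $\mathbb{E}_{v\sim\mathrm{Unif}\{\pm1\},\,b\sim\mathrm{Unif}[-1,1]}[\pi'_k(v,b)\sigma(vz+b)]=z^k$. It then rescales $b\mapsto b\log^{-2}d$ to push the domain out to $[-\log^2 d,\log^2 d]$ and linearly combines over $k$ with the polynomial coefficients $s_k$ of $\sigma_*$, arriving immediately at $\sup|\pi|=O(\log^{2\deg(\sigma_*)-2}d)$ without ever touching the affine-part bookkeeping you work through. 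Your Taylor-remainder construction is more self-contained (no black-box citation) and makes the $\sigma_*''$ dependence explicit, at the cost of the extra care you flag in the middle step; the paper's approach is cleaner because the monomial representation already absorbs both the constant/linear terms and the curvature in one stroke. Both approaches yield identical bounds and the same final Lipschitz step.
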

\begin{proof}

Let $\sigma_*(z)=\sum_{k=0}^{\deg(\sigma_*)}s_k z^k$.
Now from Lemma 9 in~\citet{damian2022neural}, there exists $\pi'_k(v,b)$ such that $\sup_{v,b}|\pi'_k(v,b)|=O(1)$ and
\begin{equation*}
    \mathbb{E}_{v\sim\mathrm{Unif}\{\pm 1\},b\sim [-1,1]}[\pi'_k(v,b)\sigma(vz+b)]=z^k
\end{equation*}
for any $|z|\leq 1$.
If we define 
\begin{equation*}
    \pi(v,b)= \sum_{k=0}^{\deg(\sigma_*)}s_k\frac{\pi'_k(v,b\log^{-2} d)}{\log^2 d} \log^{2k}d,
\end{equation*}
then, we have $\sup_{v,b}|\pi(v,b)|=O(\log^{2\deg(\sigma_*)-2}d)$.  Moreover, when $|z\log^{-2} d|\leq 1$, we have
\begin{align*}
&\mathbb{E}_{v\sim\mathrm{Unif}\{\pm 1\},b\sim [-\log^2 d,\log^2 d]}[\pi(v,b)\sigma(vz+b)]\\
=&\sum_{k=0}^{\deg(\sigma_*)}s_k\mathbb{E}_{v\sim\mathrm{Unif}\{\pm 1\},b\sim \left[-\log^2 d,\log^2 d\right]}\left[\frac{\pi'_k(v,b\log^{-2} d)}{\log^2 d} \log^{2k}d\sigma(vz+b)\right]\\
=&\sum_{k=0}^{\deg(\sigma_*)}s_k\log^{2k-2}d\mathbb{E}_{v\sim\mathrm{Unif}\{\pm 1\},b\sim \left[-1,1\right]}\left[\pi'_k(v,b)\sigma(\log^{2}d(vz\log^{-2}d+b))\right]\\
=&\sum_{k=0}^{\deg(\sigma_*)}s_k z^k=\sigma_*(z).
\end{align*}

Therefore, it holds that 
\begin{align*}
    &\mathbb{E}_{v\sim\mathrm{Unif}\{\pm 1\},b\sim [-\log^2{d},\log^2{d}]}[\pi(v,b)\sigma(v\cdot g(\vx)+b)]\\
=& \mathbb{E}_{v\sim\mathrm{Unif}\{\pm 1\},b\sim [-\log^2{d},\log^2{d}]}
[\pi(v,b)\sigma(v \inner{\beta}{\vx}+v\delta+b)]\\
=& \mathbb{E}_{v\sim\mathrm{Unif}\{\pm 1\},b\sim [-\log^2{d},\log^2{d}]}[\pi(v,b)\sigma(v \inner{\beta}{\vx}+b)]+O(\delta \log^{2 \deg(\sigma_*)-2}d)\\
=& \sigma_*(\inner{\beta}{\vx})+O(\delta \log^{2\deg(\sigma_*)-2}d).
\end{align*}
Here, we used the fact that $|\inner{\beta}{\vx}|\leq \log^2d$ with high probability.

\end{proof}

\begin{lemma}\label{lemm:discretenn}

Under the condition of Lemma~\ref{lemm:continuousnn}, there exists $\va'\in\mathbb{R}^m$ such that
\begin{equation*}
    \left| \sum_{j=1}^m a_j' \sigma(v_j\cdot g(\vx)+b_j)-\sigma_*(\inner{\beta}{\vx}) \right| = \tilde{O}(m^{-\frac{1}{2}})+O(\delta \log^{2 \deg(\sigma_*)-2}d)
\end{equation*}
holds with high probability over $\vx\sim (0,\mI_d)$.
Moreover, $\|\va'\|^2=\tilde{O}(m^{-1/2})$ holds with high probability.

\end{lemma}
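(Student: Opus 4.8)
The plan is to discretize the integral (random-feature) representation supplied by Lemma~\ref{lemm:continuousnn} via a Monte-Carlo argument. Given the weight function $\pi(v,b)$ from Lemma~\ref{lemm:continuousnn}, I would simply take $a_j' = \tfrac1m\pi(v_j,b_j)$, so that for every fixed $\vx$ the finite sum $\sum_{j=1}^m a_j'\sigma(v_j g(\vx)+b_j)$ is the empirical average of $m$ i.i.d.\ copies of $\pi(v,b)\sigma(v g(\vx)+b)$ with $v\sim\mathrm{Unif}\{\pm1\}$, $b\sim\mathrm{Unif}[-\log^2 d,\log^2 d]$, whose mean is exactly the expectation appearing in Lemma~\ref{lemm:continuousnn}. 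The norm bound is then immediate: $\|\va'\|^2 = m^{-2}\sum_j \pi(v_j,b_j)^2 \le m^{-1}\sup_{v,b}|\pi(v,b)|^2 = \tilde O(m^{-1}) = \tilde O(m^{-1/2})$, using $\sup_{v,b}|\pi(v,b)|=\tilde O(1)$.

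Next comes the concentration step. Fix $\vx$ in the high-probability Gaussian event where $\inner{\beta}{\vx}=\tilde O(1)$ (Lemma~\ref{lemm:gaussianinner}); then $|g(\vx)|\le|\inner{\beta}{\vx}|+\delta=\tilde O(1)$ and $|b_j|\le\log^2 d$, so each summand obeys $|\pi(v_j,b_j)\sigma(v_j g(\vx)+b_j)|=\tilde O(1)$. A Hoeffding-type bound over the $m$ independent features (cf.\ Lemma~\ref{lemm:hoeffding}) gives, with high probability over $(v_j,b_j)_{j=1}^m$,
\begin{equation*}
\left|\sum_{j=1}^m a_j'\sigma(v_j g(\vx)+b_j)-\mathbb{E}_{v,b}\big[\pi(v,b)\sigma(v g(\vx)+b)\big]\right|=\tilde O(m^{-1/2}).
\end{equation*}
Combining this with the estimate $\big|\mathbb{E}_{v,b}[\pi(v,b)\sigma(v g(\vx)+b)]-\sigma_*(\inner{\beta}{\vx})\big|=O(\delta\log^{2\deg(\sigma_*)-2}d)$ of Lemma~\ref{lemm:continuousnn} and the triangle inequality yields, for that fixed $\vx$, the claimed bound $\tilde O(m^{-1/2})+O(\delta\log^{2\deg(\sigma_*)-2}d)$, with high probability over the random features.

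The one genuinely delicate point — and the part I expect to be the main obstacle — is the order of quantifiers: the lemma demands a \emph{single} $\va'$ (equivalently, a single realization of the features) that works with high probability over $\vx$, whereas the previous step fixed $\vx$ and randomized over the features. I would resolve this by Fubini together with Markov's inequality. Let $\mathcal{B}$ be the bad event (approximation error above the target threshold, or $\vx$ outside the good Gaussian event). The previous step gives $\Pr_{\mathrm{feat}}[\mathcal{B}\mid\vx]=O(d^{-C})$ for every good $\vx$, hence
\begin{equation*}
\mathbb{E}_{\mathrm{feat}}\big[\Pr_{\vx}[\mathcal{B}\mid\mathrm{feat}]\big]=\Pr_{\mathrm{feat},\vx}[\mathcal{B}]=\mathbb{E}_{\vx}\big[\Pr_{\mathrm{feat}}[\mathcal{B}\mid\vx]\big]=O(d^{-C}),
\end{equation*}
so by Markov's inequality, with probability $1-O(d^{-C/2})$ over the features one has $\Pr_{\vx}[\mathcal{B}\mid\mathrm{feat}]=O(d^{-C/2})$. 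Since all relevant counts are $\mathrm{poly}(d)$ and $C_*$ may be enlarged at will, this is exactly "with high probability"; picking any such realization of $(v_j,b_j)_{j=1}^m$ and the associated $\va'$ (whose norm bound is deterministic given $\sup|\pi|=\tilde O(1)$) completes the proof. Everything besides this bookkeeping is a routine Hoeffding bound plus a triangle inequality against Lemma~\ref{lemm:continuousnn}.
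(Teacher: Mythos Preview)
Your proposal is correct and follows essentially the same approach as the paper: set $a_j'=m^{-1}\pi(v_j,b_j)$, apply a Hoeffding bound to control the Monte-Carlo error at $\tilde O(m^{-1/2})$, and combine with Lemma~\ref{lemm:continuousnn} via the triangle inequality. Your Fubini/Markov argument makes explicit the quantifier swap that the paper leaves implicit, and your deterministic norm bound $\|\va'\|^2\le m^{-1}\sup_{v,b}|\pi(v,b)|^2=\tilde O(m^{-1})$ is slightly cleaner than the paper's (which uses Hoeffding on $\sum_j\pi(v_j,b_j)^2$), but both arrive at the same conclusion.
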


\begin{proof}
Using $\pi(v,b)$ in Lemma~\ref{lemm:continuousnn}, define $a_j' = m^{-1}\pi(v_j,b_j)$. 
Since $\sup_{v,b}|\pi(v,b)|=\tilde{O}(1)$, from Hoeffdng's inequality, it holds that 
\begin{equation*}
    \left|m^{-1}\sum_{j=1}^m \pi(v_j,b_j)\sigma(v_j\cdot g(\vx)+b_j)-\mathbb{E}_{v,b}[\pi(v,b)\sigma(v\cdot g(\vx)+b)]\right|=\tilde{O}(m^{-1/2}), 
\end{equation*}
with high probability.  Hence we have
\begin{equation*}
    \left| \sum_{j=1}^m a_j' \sigma(v_j\cdot g(\vx)+b_j)-\sigma_*(\inner{\beta}{\vx}) \right| = \tilde{O}(m^{-\frac{1}{2}})+O(\delta \log^{2 \deg(\sigma_*)-2}d). 
\end{equation*}

As for the upper bound of the norm $\|\va'\|^2=m^{-1}\cdot m^{-1}\sum_{j=1}^m \pi(v_j,b_j)^2$,  Hoeffding's inequality yields 
$$
\left|m^{-1}\sum_{j=1}^m \pi(v_j,b_j)^2-\mathbb{E}_{v,b}[\pi(v,b)^2]\right|=\tilde{O}(m^{-1/2}). 
$$
Since $\sup_{v,b}|\pi(v,b)|=\tilde{O}(1)$, we can say that $\mathbb{E}_{v,b}[\pi(v,b)^2]=\tilde{O}(1)$, which completes the proof.
\end{proof}

\begin{lemma}\label{lemm:optimization}
Let $\va^*$ be the parameter trained via Algorithm \ref{alg:pretraining}. Then there exists $\lambda_3$ such that
\begin{equation*}
    \frac{1}{N_4}\sum_{t=N_1+N_2+N_3+1}^{N_1+N_2+N_3+N_4}\left|y_t-f_\mathrm{TF}(\vx_t,\vu,\vv^*,\va^*,\vb^*)\right|=\tau+\tilde{O}(m^{-1/2})+O(\delta\log^{2\deg(\sigma_*)-2}d) 
\end{equation*}
with high probability.  Moreover,  $\|\va^*\|^2 \leq \tilde{O}_p(m^{-1/2})$ is satisfied.
\end{lemma}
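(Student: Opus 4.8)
The plan is to leverage convexity of the ridge objective in $\va$: exhibit an explicit comparator $\va'$ with small norm and small empirical error, transfer both properties to the minimizer $\va^*$ by optimality, and finally convert the squared loss into the absolute loss. Throughout, $\sigma_*$ denotes the test-time link function $\sigma_*^{\mathrm{test}}$ and $\vu:=\vu^{(N_3+1)}$.

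\emph{Constructing the comparator.} With $N_3=\tilde{\Theta}(\tfrac{r\sqrt r}{\varepsilon}\log\tfrac1\varepsilon)$ (equivalently $\varepsilon=\tilde{\Theta}(\tfrac{r\sqrt r}{N_3})$), Lemma~\ref{lemm:strongrecoveryachieve} gives $\inner\beta\vu\ge1-\varepsilon$ with high probability. The $N_4$ fitting samples $(\vx_t,y_t)$ are drawn independently of the data that fixes $\Gamma^*$ and $\vu$ (pretraining and the first three context groups) and of the fresh randomizations $\vv^*,\vb^*$, while $\sigma_*$ is known at test time. Conditioning on $\vu$, Lemma~\ref{lemm:epsilonanddelta} shows that $g(\vx):=\inner\vu\vx$ satisfies $|g(\vx)-\inner\beta\vx|\le\delta$ with $\delta=O(\sqrt{2\varepsilon\log d})$ (and $|\inner\beta\vx|\le\log^2 d$) with high probability over a fresh Gaussian $\vx$. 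Feeding $g$ into Lemma~\ref{lemm:discretenn} produces $\va'\in\mathbb R^m$, built solely from $\vv^*,\vb^*$ and the coefficients of $\sigma_*$ — hence independent of the $N_4$ fitting samples — with
\[
\big|f_{\mathrm{TF}}(\vx,\vu,\vv^*,\va',\vb^*)-\sigma_*(\inner\beta\vx)\big|=\tilde{O}(m^{-1/2})+O(\delta\log^{2\deg(\sigma_*)-2}d)=:E
\]
with high probability over $\vx$, and $\|\va'\|^2=m^{-1}\big(m^{-1}\sum_j\pi(v_j,b_j)^2\big)=\tilde{O}(m^{-1})$ since $\sup_{v,b}|\pi(v,b)|=\tilde{O}(1)$. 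As $N_4=\mathrm{poly}(d)$, a union bound makes the displayed estimate hold simultaneously at every $\vx_t$; combined with $y_t=\sigma_*(\inner\beta{\vx_t})+\zeta_t$, $|\zeta_t|=\tau=O(1)$, this yields
\[
L_2(\va'):=\frac1{N_4}\sum_t\big(y_t-f_{\mathrm{TF}}(\vx_t,\vu,\vv^*,\va',\vb^*)\big)^2\le(\tau+E)^2=\tau^2+\tilde{O}(m^{-1/2})+O(\delta\log^{2\deg(\sigma_*)-2}d).
\]

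\emph{Transferring to $\va^*$.} I would set the ridge penalty to $\lambda_3:=\tilde{\Theta}(m^{1/2})$ (the value of $\lambda_2$ in Algorithm~\ref{alg:pretraining}), so that $\lambda_3\|\va'\|^2=\tilde{O}(m^{-1/2})$. Since $\va^*$ minimizes $\tfrac12 L_2(\cdot)+\tfrac{\lambda_3}2\|\cdot\|^2$, comparison with $\va'$ gives $\tfrac12 L_2(\va^*)+\tfrac{\lambda_3}2\|\va^*\|^2\le\tfrac12 L_2(\va')+\tfrac{\lambda_3}2\|\va'\|^2\le\tfrac12\tau^2+\tilde{O}(m^{-1/2})+O(\delta\log^{2\deg(\sigma_*)-2}d)$. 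Dropping the nonnegative norm term on the left gives $L_2(\va^*)\le\tau^2+\tilde{O}(m^{-1/2})+O(\delta\log^{2\deg(\sigma_*)-2}d)$; dropping the loss term and dividing by $\lambda_3/2$ gives $\|\va^*\|^2\le\lambda_3^{-1}\big(\tau^2+o(1)\big)=\tilde{O}(m^{-1/2})$, the second assertion. Finally, by concavity of the square root,
\[
\frac1{N_4}\sum_t\big|y_t-f_{\mathrm{TF}}(\vx_t,\vu,\vv^*,\va^*,\vb^*)\big|\le\sqrt{L_2(\va^*)}\le\sqrt{\tau^2+x},\qquad x=\tilde{O}(m^{-1/2})+O(\delta\log^{2\deg(\sigma_*)-2}d)=o(1),
\]
and since $\tau=\Theta(1)$ is bounded away from $0$, $\sqrt{\tau^2+x}\le\tau+x/(2\tau)=\tau+\tilde{O}(m^{-1/2})+O(\delta\log^{2\deg(\sigma_*)-2}d)$, which is the claimed bound (the displayed "$=$" is only ever used as this upper bound in the sequel; a matching lower bound of the same order is routine and not needed downstream).

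The main obstacle is the calibration of $\lambda_3$: it must be as large as $\tilde{\Theta}(m^{1/2})$ so that $\|\va^*\|^2=\tilde{O}(m^{-1/2})$ falls out of the optimality comparison, yet small enough that the penalty $\lambda_3\|\va'\|^2$ incurred by the comparator does not corrupt the loss bound — which works out only because the comparator of Lemma~\ref{lemm:discretenn} actually has norm $\tilde{O}(m^{-1})$, a factor $\sqrt m$ below the bound its statement records. A secondary, cosmetic point is the squared-to-absolute passage: the naive $\sqrt{a+b}\le\sqrt a+\sqrt b$ would degrade the rate to $m^{-1/4}$ and $\sqrt\delta$, so one must instead use the local Lipschitz property of $\sqrt{\cdot}$ near $\tau^2=\Theta(1)>0$.
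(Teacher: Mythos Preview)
Your proof is correct and follows the same skeleton as the paper's: build the comparator $\va'$ from Lemma~\ref{lemm:discretenn}, use optimality of $\va^*$ to transfer both the loss and the norm bound, then pass from squared to absolute loss via Jensen.

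The only real difference is in how the two bounds are extracted from optimality. You fix an explicit $\lambda_3=\tilde{\Theta}(m^{1/2})$ and read off $L_2(\va^*)$ and $\|\va^*\|^2$ separately from the single inequality $\tfrac12 L_2(\va^*)+\tfrac{\lambda_3}{2}\|\va^*\|^2\le \tfrac12 L_2(\va')+\tfrac{\lambda_3}{2}\|\va'\|^2$; this forces you to go back into the proof of Lemma~\ref{lemm:discretenn} for the sharper bound $\|\va'\|^2=\tilde{O}(m^{-1})$ so that the penalty term $\lambda_3\|\va'\|^2$ stays at the target size --- exactly the calibration issue you flagged. The paper sidesteps this entirely by invoking the equivalence between $\ell_2$-regularized and norm-constrained minimization: for a suitable $\lambda_3$ the ridge solution coincides with $\arg\min_{\|\va\|\le\|\va'\|}L_2(\va)$, which immediately gives both $L_2(\va^*)\le L_2(\va')$ and $\|\va^*\|\le\|\va'\|$ without needing anything beyond the stated $\|\va'\|^2=\tilde{O}(m^{-1/2})$. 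Your route is more explicit; the paper's is cleaner and avoids the delicate balance you identified as the main obstacle. For the square-root step the paper also takes the simpler path $\sqrt{L_2(\va^*)}\le\sqrt{(\tau+E)^2}=\tau+E$, which already gives the claim without invoking the local Lipschitz property of $\sqrt{\cdot}$.
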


\begin{proof}
Let $M = N_1+N_2+N_3$ and $\va'$ be the output parameter constructed in Lemma~\ref{lemm:discretenn}: from the equivalence between $\ell_2$-regularized and norm-constrained optimization algorithms, if we carefully choose $\lambda_3$, then we have 
\begin{align*}
    \left(\frac{1}{N_4}\sum_{t=M+1}^{M+N_4}\left|y_t-f_\mathrm{TF}(\vx_t,\vu,\vv^*,\va^*,\vb^*)\right|\right)^2&\leq\frac{1}{N_4}\sum_{t=M+1}^{M+N_4}\left(y_t-f_\mathrm{TF}(\vx_t,\vu,\vv^*,\va^*,\vb^*)\right)^2\\
    &\leq\frac{1}{N_4}\sum_{t=M+1}^{M+N_4}\left(y_t-f_\mathrm{TF}(\vx_t,\vu,\vv^*,\va',\vb^*)\right)^2\\
    &\leq(\tau+\tilde{O}(m^{-1/2})+O(\delta\log^{2\deg(\sigma_*)-2}d))^2 
\end{align*}
(recall that we assumed $\tau=\Theta(1)$) with high probability, which yields the first assertion. 
Moreover, from Lemma~\ref{lemm:discretenn}, we can see that 
\begin{equation*}
    \|\va^*\|^2 \leq \|\va'\|^2 \leq \tilde{O}_p(m^{-1/2}), 
\end{equation*}
which completes the proof.
\end{proof}

\begin{lemma}\label{lemm:rademacher}
    We fix $\vb$ and $\vv$.  Let $\mathcal{F}_A = \left\{ \va \mapsto \sum_{i=1}^{m}a_j\sigma(v_j\inner{\vx}{\vu}+b_j)\mid\|\va\|\leq A\right\}$ the set of transformers where the norm of the MLP parameter $\va$ is upper bounded.  When $\|\vb\|\leq B$ and $v_j=\pm 1$, then it holds that
    \begin{equation*}
        \mathrm{Rad}_N(\mathcal{F}_A) = \tilde{O}\left(\frac{A(B+\sqrt{m})}{\sqrt N_4}\right).
    \end{equation*}
\end{lemma}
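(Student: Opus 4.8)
The plan is to treat $\mathcal{F}_A$ as an $\ell_2$-norm-bounded linear class and apply the standard Rademacher bound for such classes. Fix $\vb$, $\vv$, and $\vu$ as in the statement and write $\phi(\vx)\in\R^m$ for the feature map $\phi(\vx)_j=\sigma(v_j\inner{\vx}{\vu}+b_j)$, so that every $f\in\mathcal{F}_A$ is the linear functional $f(\vx)=\inner{\va}{\phi(\vx)}$ with $\|\va\|\le A$. Denoting the $N=N_4$ Stage~III contexts (those indexed $N_1+N_2+N_3+1,\dots,N_1+N_2+N_3+N_4$) by $\vx_1,\dots,\vx_{N_4}$ and the Rademacher signs by $\epsilon_1,\dots,\epsilon_{N_4}$, Cauchy--Schwarz followed by Jensen's inequality gives
\begin{equation*}
   \mathrm{Rad}_N(\mathcal{F}_A)=\frac{1}{N_4}\mathbb{E}_{\epsilon}\Bigl[\sup_{\|\va\|\le A}\inner{\va}{\sum_{t=1}^{N_4}\epsilon_t\phi(\vx_t)}\Bigr]\le\frac{A}{N_4}\mathbb{E}_{\epsilon}\Bigl[\Bigl\|\sum_{t=1}^{N_4}\epsilon_t\phi(\vx_t)\Bigr\|\Bigr]\le\frac{A}{N_4}\sqrt{\sum_{t=1}^{N_4}\|\phi(\vx_t)\|^2},
\end{equation*}
using that the $\epsilon_t$ are independent and mean zero in the last step.

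Next I would bound $\|\phi(\vx_t)\|^2$ pointwise. Because $\sigma$ is the ReLU we have $\sigma(z)^2\le z^2$, and combined with $v_j^2=1$ and $(p+q)^2\le 2p^2+2q^2$ this yields
\begin{equation*}
  \|\phi(\vx_t)\|^2=\sum_{j=1}^m\sigma(v_j\inner{\vx_t}{\vu}+b_j)^2\le\sum_{j=1}^m\bigl(2\inner{\vx_t}{\vu}^2+2b_j^2\bigr)=2m\inner{\vx_t}{\vu}^2+2\|\vb\|^2\le 2m\inner{\vx_t}{\vu}^2+2B^2.
\end{equation*}
The Stage~III contexts $\vx_t$ are sampled \emph{after} $\vu=\vu^{(N_3+1)}$ is fixed (Stages~I and~II use disjoint data), hence are independent of $\vu$; since $\|\vu\|=1$, each $\inner{\vx_t}{\vu}$ is a standard Gaussian, so $\inner{\vx_t}{\vu}^2=\tilde{O}(1)$ with high probability and, by a union bound over the $N_4=\mathrm{poly}(d)$ contexts, $\max_{t\le N_4}\inner{\vx_t}{\vu}^2=\tilde{O}(1)$ with high probability. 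Plugging $\sum_{t=1}^{N_4}\|\phi(\vx_t)\|^2\le N_4(\tilde{O}(m)+2B^2)$ into the previous display gives $\mathrm{Rad}_N(\mathcal{F}_A)\le A\sqrt{\tilde{O}(m)+2B^2}/\sqrt{N_4}=\tilde{O}(A(B+\sqrt{m})/\sqrt{N_4})$, which is the claim. (A routine alternative for the activation step is Talagrand's contraction lemma, peeling off the $1$-Lipschitz $\sigma$ and reducing to the linear class $\vx\mapsto\sum_j a_j(v_j\inner{\vx}{\vu}+b_j)$; the direct estimate above is shorter.)

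The argument is essentially routine, so I do not expect a serious obstacle; the one place that needs care is the Gaussian concentration of $\inner{\vx_t}{\vu}$, where I must use that the Stage~III data are fresh samples independent of the vector $\vu$ produced in Stages~I--II, so that conditioning on $\vu$ is legitimate and $\inner{\vx_t}{\vu}\sim\mathcal{N}(0,1)$. Everything else follows from $\sigma(z)^2\le z^2$ and the standard linear Rademacher inequality.
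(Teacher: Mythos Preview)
Your proof is correct and follows essentially the same approach the paper intends: the paper's own proof simply notes that $\E[\inner{\vu}{\vx}^2]=O(\log d)$ and defers all remaining steps to Lemma~25 of \citet{nishikawa2025nonlinear}, which is exactly the standard $\ell_2$-bounded linear Rademacher computation you have written out in full. Your explicit handling of the independence between the Stage~III contexts and $\vu$ is the one detail worth keeping, and it matches the paper's data-splitting setup.
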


\begin{proof}
Note that $\E [g(\vx)^2]=\E[\inner{\vu}{\vx}^2]=O(\log{d})$,
considering $\|\vu\|=1$.  Then, following the same argument as Lemma 25 in \citet{nishikawa2025nonlinear} yields the assertion.
\end{proof}

\section{Proof of the Theorem \ref{theo:main}}\label{sec:ProofOfMainTheorem}
Now, we are ready to give the proof of Theorem \ref{theo:main}. 
\begin{proof}
First note that 
\begin{align*}
    \mathcal{R}_{f_{\mathrm{TF}}}(\vu,\vv^*,\va^*,\vb^*)-\tau=& \frac{1}{N_4}\sum_{i=N_1+N_2+N_3+1}^{N_1+N_2+N_3+N_4}\left|y_i-f_\mathrm{TF}(\vx_i,\vu,\vv^*,\va^*,\vb^*)\right|-\tau\\
    &+\mathcal{R}_{f_{\mathrm{TF}}}(\vu,\vv^*,\va^*,\vb^*)-\frac{1}{N_4}\sum_{i=N_1+N_2+N_3+1}^{N_1+N_2+N_3+N_4}\left|y_i-f_\mathrm{TF}(\vx_i,\vu,\vv^*,\va^*,\vb^*)\right|. 
\end{align*}
From Lemma~\ref{lemm:optimization},the first two terms $\frac{1}{N_4}\sum_{i=N_1+N_2+N_3+1}^{N_!+N_2+N_3+N_4}\left|y_i-f_\mathrm{TF}(\vx_i,\vu,\vv^*,\va^*,\vb^*)\right|-\tau$ are  bounded by $\tilde{O}(m^{-1/2})+O(\delta\log^{2\deg(\sigma_*)-2}d)$.
Moreover, from Lemma~\ref{lemm:discretenn} and the definition of $\vb^*$, we have $\|\va^*\|=\tilde{O}(m^{-1/2})$ and  $\|\vb^*\|=\tilde{O}(\sqrt{m})$.  Therefore, from Lemma~\ref{lemm:rademacher}, we have
\begin{equation*}
     \mathrm{Rad}_N(\mathcal{F}_A) = \tilde{O}(N_4^{-1/2}).
\end{equation*}
Using the standard technique yields(see Appendix D.3 in \citet{oko2024pretrainedtransformerefficientlylearns})
\begin{equation*}
    |\mathcal{R}_{f_{\mathrm{TF}}}(\vu,\vv^*,\va^*,\vb^*)-\tau|=\tilde{O}(N_4^{-1/2})+\tilde{O}(m^{-1/2})+O(\delta\log^{2\deg(\sigma_*)-2}d), 
\end{equation*}
with probability at least 0.995.  Note that all the desired events occur with high probability, except for this event.  Therefore, when $d$ is large enough, all the events occur with probability at least 0.99.

Finally, we rewrite the term $O(\delta\log^{2\deg(\sigma_*)-2}d)$.  From Lemma~\ref{lemm:epsilonanddelta}, $\delta = O(\sqrt{2\varepsilon\log{d}})$ holds.  Also, $N_3 = \tilde{\Theta}(\frac{r\sqrt{r}}{\varepsilon}\log{\frac{1}{\varepsilon}})$, which means, when we ignore the term $\log{\frac{1}{\varepsilon}}$, $\varepsilon=\tilde{\Theta}({\frac{r\sqrt{r}}{N_3}})$.  Therefore
\begin{equation*}
    O(\delta\log^{2\deg(\sigma_*)-2}d) = \tilde{O}\left(\sqrt{\frac{r\sqrt{r}}{N_3}}\right)
\end{equation*}
is satisfied.  This completes the proof.
\end{proof}

\section{Additional Lemmas}


\begin{lemma}\label{lemm:gaussianinner}
Suppose $\vx\sim \mathcal{N}(0,\mI_d)$ and $\vy$ is a vector satisfying $\|\vy\|=C$ independent of $\vx$.  Then
\begin{equation*}
    \inner{\vx}{\vy}=O(C\sqrt{\log{d}})
\end{equation*}
holds with high probability.
\end{lemma}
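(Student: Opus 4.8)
The plan is to condition on $\vy$ and use the fact that a fixed linear functional of an isotropic Gaussian is itself Gaussian. First I would fix an arbitrary $\vy_0 \in \mathbb{R}^d$ with $\|\vy_0\| = C$. Since $\vx \sim \mathcal{N}(0,\mI_d)$, the scalar $\inner{\vx}{\vy_0} = \sum_{i=1}^d x_i (\vy_0)_i$ is a sum of independent centered Gaussians, hence $\inner{\vx}{\vy_0} \sim \mathcal{N}(0, \|\vy_0\|^2) = \mathcal{N}(0, C^2)$; equivalently $\inner{\vx}{\vy_0}/C \sim \mathcal{N}(0,1)$.

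Next I would apply the Gaussian tail bound recorded in Appendix~A, namely $P(|z| > t) \le 2\exp(-t^2/2)$ for $z \sim \mathcal{N}(0,1)$, with $z = \inner{\vx}{\vy_0}/C$ and $t = \sqrt{2 C_* \log d}$. This gives
\[
P\!\left(|\inner{\vx}{\vy_0}| > C\sqrt{2 C_* \log d}\right) \le 2\exp(-C_* \log d) = 2 d^{-C_*} = O(d^{-C_*}),
\]
which meets the high-probability threshold of Appendix~A, and $C_*$ may be taken as large as needed.

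Finally, since this bound holds for every fixed $\vy_0$ on the radius-$C$ sphere with a constant that does not depend on $\vy_0$, and $\vy$ is independent of $\vx$, I would pass from the conditional statement to the unconditional one by integrating over the law of $\vy$ (or simply observing that the conditional probability bound is uniform in $\vy$), obtaining $P\left(|\inner{\vx}{\vy}| > C\sqrt{2 C_* \log d}\right) = O(d^{-C_*})$. Hence $|\inner{\vx}{\vy}| = O(C\sqrt{\log d})$ with high probability.

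There is no real obstacle here: the only step deserving a sentence of care is the use of independence of $\vy$ and $\vx$ to lift the conditional Gaussian concentration to an unconditional high-probability bound; everything else is the textbook one-line computation.
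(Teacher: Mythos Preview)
Your proof is correct and matches the paper's intended argument. The paper does not supply a separate proof for this lemma; it is stated without proof, evidently because it follows directly from the Gaussian tail bound worked out in Appendix~A for a scalar $\mathcal{N}(0,1)$ variable, which is precisely the reduction you carry out by observing that $\inner{\vx}{\vy}/C \sim \mathcal{N}(0,1)$ conditionally on $\vy$.
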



\begin{lemma}\label{lemm:hoeffding}
    Let $z_1,\dots,z_n$ be i.i.d. random variables which satisfy $\|z_i\|\leq C$ with high probability and $\mathbb{E}[z_1^2]=O(d^{\alpha})$, where $\alpha$ is a constant independent of $d$.  When $n=\mathrm{poly}(d)$,
    \begin{equation*}
        \frac{1}{n}\sum_{i=1}^{n}z_i-\mathbb{E}[z_i]=O\left(C\sqrt{\frac{1}{n}}\right)
    \end{equation*}
    holds with high probability.
\end{lemma}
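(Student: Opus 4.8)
The plan is to run a standard truncation-plus-concentration argument, where the second-moment hypothesis enters only to control the bias introduced by truncating. Since $\|z_i\|\le C$ is assumed to hold merely with high probability (not almost surely), one cannot apply Hoeffding's inequality directly to the $z_i$; instead I would pass to the truncated variables $\tilde z_i := z_i\,\mathbf{1}\{\|z_i\|\le C\}$, which are bounded by $C$ deterministically, concentrate those, and then argue that replacing $z_i$ by $\tilde z_i$ costs only a negligible amount.

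First I would define the clean event $\mathcal{E}=\bigcap_{i=1}^n\{\|z_i\|\le C\}$. Each $\{\|z_i\|>C\}$ has probability $O(d^{-C_*})$, so since $n=\mathrm{poly}(d)$ a union bound gives $\mathbb{P}(\mathcal{E}^c)=O(\mathrm{poly}(d)\cdot d^{-C_*})$, which is again $O(d^{-C_*})$ after redefining $C_*$ (as permitted); on $\mathcal{E}$ we have $\frac1n\sum_i z_i=\frac1n\sum_i\tilde z_i$. Next I would control the truncation bias: for each $i$, by the triangle inequality and Cauchy--Schwarz,
\[
\big\|\mathbb{E}[z_i]-\mathbb{E}[\tilde z_i]\big\|\le\mathbb{E}\!\big[\|z_i\|\,\mathbf{1}\{\|z_i\|>C\}\big]\le\sqrt{\mathbb{E}[\|z_i\|^2]}\;\sqrt{\mathbb{P}(\|z_i\|>C)}=O\!\big(d^{(\alpha-C_*)/2}\big),
\]
where I used $\mathbb{E}[\|z_1\|^2]=O(d^\alpha)$; taking $C_*$ large this is superpolynomially small in $d$, hence negligible next to $C\sqrt{1/n}$ (recall $n=\mathrm{poly}(d)$). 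Finally, the centered truncated variables $\tilde z_i-\mathbb{E}[\tilde z_i]$ are i.i.d., mean zero, and bounded in norm by $2C$, so a vector-valued Hoeffding inequality (Pinelis' inequality; in the scalar case, the classical Hoeffding bound) gives $\mathbb{P}\big(\,\|\frac1n\sum_i(\tilde z_i-\mathbb{E}[\tilde z_i])\|>t\,\big)\le 2\exp\!\big(-\Omega(nt^2/C^2)\big)$. Choosing $t=\Theta\!\big(C\sqrt{C_*\log d/n}\,\big)$ makes the right-hand side $O(d^{-C_*})$, so $\|\frac1n\sum_i\tilde z_i-\mathbb{E}[\tilde z_i]\|=O(C\sqrt{\log d/n})$ with high probability. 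Combining the three bounds via the triangle inequality gives the claim, the $\sqrt{\log d}$ being the polylogarithmic factor suppressed in the paper's $O$/$\tilde{O}$ convention.

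There is no real obstacle: the proof is routine. The two points that deserve a line of care are (a) that boundedness holds only on a high-probability event, which is exactly why truncation is needed and is the sole place the hypothesis $\mathbb{E}[\|z_1\|^2]=O(d^\alpha)$ gets used (namely, to make the truncation bias negligible); and (b) that in the applications the $z_i$ are often vectors (e.g.\ in Lemma~\ref{lemm:onestepgdvector}), so one must invoke a genuinely vector-valued concentration inequality — Pinelis' inequality, i.e.\ a Hilbert-space Azuma--Hoeffding bound — since a coordinate-wise reduction would lose a spurious factor of $\sqrt{d}$; the same argument with matrix Bernstein handles matrix-valued $z_i$, should they arise.
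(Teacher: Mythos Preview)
Your proposal is correct and follows essentially the same route as the paper: truncate at level $C$, use a union bound over the $n=\mathrm{poly}(d)$ samples to stay on the clean event, apply Hoeffding to the bounded truncated variables, and use Cauchy--Schwarz together with the second-moment bound to show the truncation bias is negligible. Your treatment is in fact slightly more careful than the paper's (you get the correct exponent $(\alpha-C_*)/2$ and explicitly note the need for a vector-valued inequality such as Pinelis' when the $z_i$ are not scalar).
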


\begin{proof}
    Let $z_i' = z_i\1_\mathrm{|z_i|\leq C}$.  Applying the uniform bound argument, we may consider that $z_i=z_i'$ for all $i=1,\dots,n$ with high probability because our assumption $n=\mathrm{poly}(d)$ yields
    \begin{equation*}
        P(\max_i |z_i|\leq C)\geq 1-O(nd^{-C_*})\geq 1-O(d^{-C'_*}),
    \end{equation*}
    where $C'_*$ is a constant determined appropriately.  Then, using Hoeffding's inequality yields

    \begin{equation*}
        \frac{1}{n}\sum_{i=1}^{n}z_i-\mathbb{E}[z_i']=O\left(C\sqrt{\frac{1}{n}}\right).
    \end{equation*}
    We complete the proof by showing that $|\mathbb{E}[z_i]-\mathbb{E}[z_i']|$ is sufficiently small.  This can be shown by
    \begin{align*}
        \mathbb{E}[z_i]-\mathbb{E}[z_i']=&\mathbb{E}[z_i\1_\mathrm{|z_i| > C_z}]\\
        \leq& \mathbb{E}[\1_\mathrm{|z_i| > C_z}]^{1/2} \mathbb{E}[z_i]^{1/2}\\
        \leq & O(d^{\alpha-C_*}),
    \end{align*}
    where $C_*$ can be taken sufficiently large from the definition of high probability event.  Since we assumed that $n=\mathrm{poly}(d)$, this term is smaller than the main term, which completes the proof.
\end{proof}

\begin{lemma}[\cite{damian2023smoothinglandscapeboostssignal}, Property 1]
    Let $\alpha, \beta \in \mathbb{S}^{d-1}$, then
    \begin{equation*}
        \mathbb{E}_{\vx \sim \mathcal{N}(0,\mI_d)}[\mathrm{He}_i(\inner{\alpha}{\vx})\mathrm{He}_i(\inner{\beta}{\vx})] = \1_\mathrm{i=j}\, i!\inner{\alpha}{\beta}^i.
    \end{equation*}
\end{lemma}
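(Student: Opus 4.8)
The statement is the classical Hermite orthogonality relation under correlated standard Gaussians; the display contains a harmless typo (the second factor should read $\mathrm{He}_j(\inner{\beta}{\vx})$, consistent with the $\1_{i=j}$ on the right), and I treat it in that two-index form. The plan is to reduce the $d$-dimensional expectation to a two-dimensional Gaussian integral and then compare exponential generating functions.

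First I would use rotational invariance of $\mathcal{N}(0,\mI_d)$. Writing $\rho = \inner{\alpha}{\beta} \in [-1,1]$ and choosing an orthonormal basis in which $\alpha \mapsto e_1$ and $\beta \mapsto \rho e_1 + \sqrt{1-\rho^2}\,e_2$, the pair $(\inner{\alpha}{\vx},\inner{\beta}{\vx})$ has the same law as $(U,V)$ with $U = g_1$, $V = \rho g_1 + \sqrt{1-\rho^2}\,g_2$ and $g_1,g_2 \overset{\text{i.i.d.}}{\sim} \mathcal{N}(0,1)$; thus $(U,V)$ is a centered jointly Gaussian pair with unit variances and correlation $\rho$. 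It therefore suffices to prove $\mathbb{E}[\mathrm{He}_i(U)\mathrm{He}_j(V)] = \1_{i=j}\,i!\,\rho^i$.

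Next I would invoke the generating identity $\sum_{i\ge 0}\frac{t^i}{i!}\mathrm{He}_i(z) = \exp(tz-\tfrac{t^2}{2})$. Multiplying this for $(s,U)$ and $(t,V)$, taking expectations, and interchanging expectation with the double series, one obtains
\begin{equation*}
\sum_{i,j\ge 0}\frac{s^i t^j}{i!\,j!}\,\mathbb{E}[\mathrm{He}_i(U)\mathrm{He}_j(V)] \;=\; \mathbb{E}\!\left[\exp\!\left(sU-\tfrac{s^2}{2}\right)\exp\!\left(tV-\tfrac{t^2}{2}\right)\right].
\end{equation*}
The right-hand side is computed from the bivariate Gaussian moment generating function $\mathbb{E}[\exp(sU+tV)] = \exp\!\big(\tfrac12(s^2+2\rho st+t^2)\big)$, which yields $\exp(\rho st) = \sum_{k\ge 0}\frac{\rho^k}{k!}s^k t^k$. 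Matching the coefficient of $s^i t^j$ on both sides — legitimate since each $\mathbb{E}[\mathrm{He}_i(U)\mathrm{He}_j(V)]$ is finite, so the left side is a genuine power series — forces the expectation to vanish unless $i=j$, and for $i=j$ gives $\frac{1}{(i!)^2}\mathbb{E}[\mathrm{He}_i(U)\mathrm{He}_i(V)] = \frac{\rho^i}{i!}$, i.e. $\mathbb{E}[\mathrm{He}_i(U)\mathrm{He}_i(V)] = i!\,\rho^i = i!\,\inner{\alpha}{\beta}^i$, as claimed.

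The only non-formal step is the interchange of expectation and summation, which is the main (minor) obstacle; I would settle it by dominated convergence, bounding the Hermite partial sums uniformly by $C\exp(z^2/4)$ via the pointwise bound $|\mathrm{He}_i(z)| \lesssim \sqrt{i!}\,e^{z^2/4}$, a function integrable against the standard Gaussian. An alternative of comparable length would expand $\mathrm{He}_i,\mathrm{He}_j$ in monomials and evaluate $\mathbb{E}[U^aV^b]$ by Wick's theorem, but the generating-function route avoids that combinatorial bookkeeping.
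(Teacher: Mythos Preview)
The paper does not supply its own proof of this lemma; it merely cites it as Property~1 of \cite{damian2023smoothinglandscapeboostssignal}. Your generating-function argument is the standard one and is correct: reducing by rotational invariance to a bivariate Gaussian $(U,V)$ with correlation $\rho=\inner{\alpha}{\beta}$, computing $\mathbb{E}[e^{sU+tV}]e^{-(s^2+t^2)/2}=e^{\rho st}$, and matching coefficients is exactly how this identity is usually derived.

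One small point on the interchange step: your dominated-convergence justification via $|\mathrm{He}_i(z)|\lesssim\sqrt{i!}\,e^{z^2/4}$ bounds the product of partial sums by $C(s,t)e^{(U^2+V^2)/4}$, which is integrable against the bivariate density only when $|\rho|<1$; the case $\alpha=\pm\beta$ degenerates. A cleaner route that covers all $\rho$ is Fubini: by Cauchy--Schwarz, $\mathbb{E}[|\mathrm{He}_i(U)\mathrm{He}_j(V)|]\le\sqrt{i!\,j!}$, so $\sum_{i,j}\frac{|s|^i|t|^j}{i!\,j!}\mathbb{E}[|\mathrm{He}_i(U)\mathrm{He}_j(V)|]\le\big(\sum_i|s|^i/\sqrt{i!}\big)\big(\sum_j|t|^j/\sqrt{j!}\big)<\infty$, and the swap is immediate. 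With that tweak the argument is complete.
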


\subsection{Pretrained matrix}

Let $\Gamma^*$ be a matrix obtained after pretraining.
Then, from \citet{nishikawa2025nonlinear}, $\Gamma^*$ can be written as
$$\Gamma^*=\frac{r\mathbb{E}_{\beta}[\beta\beta^\top]+\mN}{\kappa\sqrt{r}}$$ for some matrix $\mN$ satisfying $\|\mN\|_F=O(1/\sqrt{d})$ and a number $\kappa=\Theta(\log^{C_\kappa}{d})$.
Also, from \citet{nishikawa2025nonlinear} and the assumption on the support of $\beta$, 
\begin{equation*}
    \mU\beta\sim \mathrm{Unif}\{(\alpha_1,\alpha_2,\dotsc,\alpha_r,0,\dotsc,0)\mid\alpha_1^2+\dots+\alpha_r^2=1\}.
\end{equation*}
holds for some orthogonal matrix $\mU$.  Then, using 
$\mD=\mathrm{diag}(\underbrace{1,\dotsc,1}_{r},\underbrace{0,\dots,0}_{d-r})$, we have that 
$r\mathbb{E}_{\beta}[\beta\beta^\top]=\mU^{\top}\mD\mU$.

\begin{lemma}\label{lemm:Gammax}
Let $\vx\sim \mathcal{N}(0,\mI_d)$, then
\begin{equation*}
    \|\sqrt{r}\kappa\Gamma^*\vx\|=\tilde{O}(\sqrt{r})
\end{equation*}
holds with high probability.
\end{lemma}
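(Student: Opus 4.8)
The plan is to exploit the explicit description of the pretrained matrix recorded immediately above the statement, namely $\sqrt{r}\kappa\Gamma^* = r\mathbb{E}_{\beta}[\beta\beta^\top] + \mN = \mU^\top\mD\mU + \mN$, where $\mU$ is orthogonal, $\mD = \mathrm{diag}(\underbrace{1,\dots,1}_{r},0,\dots,0)$, and $\|\mN\|_F = O(1/\sqrt d)$. I would then split $\|\sqrt{r}\kappa\Gamma^*\vx\|$ by the triangle inequality into a contribution from the rank-$r$ projection $\mU^\top\mD\mU$ and a contribution from the small perturbation $\mN$, and bound each separately.

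First I would handle the projection term. Since $\mU$ is orthogonal, $\|\mU^\top\mD\mU\vx\| = \|\mD\mU\vx\|$, and $\mU\vx\sim\mathcal{N}(0,\mI_d)$ by rotational invariance of the standard Gaussian. Hence $\mD\mU\vx$ has its first $r$ coordinates i.i.d.\ $\mathcal{N}(0,1)$ and the rest zero, so $\|\mD\mU\vx\|^2 = \sum_{i=1}^r g_i^2$ with $g_i \overset{\text{i.i.d.}}{\sim}\mathcal{N}(0,1)$. By the Gaussian tail bound (union bound over the $r = \mathrm{poly}(d)$ coordinates) each $|g_i| = O(\sqrt{\log d})$ with high probability, so $\|\mD\mU\vx\|^2 = O(r\log d)$; equivalently, a $\chi^2_r$ tail bound gives $\|\mD\mU\vx\|^2 \le r + O(\sqrt{r\log d}) + O(\log d)$, which is $O(r)$ since the standing assumption $r = \Omega(d^{\alpha_r})$ makes $r$ dominate $\mathrm{polylog}(d)$. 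Either way $\|\mU^\top\mD\mU\vx\| = O(\sqrt r)$ with high probability.

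Next I would handle the perturbation term: $\|\mN\vx\| \le \|\mN\|_2\|\vx\| \le \|\mN\|_F\|\vx\|$. We are given $\|\mN\|_F = O(1/\sqrt d)$, and $\|\vx\|^2 \sim \chi^2_d$ concentrates around $d$, so $\|\vx\| = O(\sqrt d)$ with high probability. Therefore $\|\mN\vx\| = O(1)$. Combining the two bounds via the triangle inequality yields $\|\sqrt{r}\kappa\Gamma^*\vx\| \le O(\sqrt r) + O(1) = \tilde O(\sqrt r)$ with high probability, which is the assertion (with a bit to spare, since we actually obtain the cleaner $O(\sqrt r)$).

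There is essentially no serious obstacle; the only points needing a little care are (i) noting that the operator-norm inequality $\|\mN\vx\| \le \|\mN\|_2\|\vx\|$ requires no independence between $\mN$ (a function of the pretraining data) and the query $\vx$ — and in any case $\vx$ is drawn fresh, independently of pretraining — and (ii) checking that the polylogarithmic factors coming from the Gaussian and $\chi^2$ tail bounds are absorbed into $\tilde O(\sqrt r)$, which is immediate under $r = \Omega(d^{\alpha_r})$.
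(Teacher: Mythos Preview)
Your proposal is correct and follows essentially the same route as the paper: decompose $\sqrt{r}\kappa\Gamma^* = \mU^\top\mD\mU + \mN$, use rotational invariance so that $\mD\mU\vx$ has $r$ i.i.d.\ standard Gaussian coordinates and hence norm $\tilde{O}(\sqrt{r})$, and bound $\|\mN\vx\|$ by $\|\mN\|_F\|\vx\| = O(1)$. Your argument is in fact slightly more explicit than the paper's about the $\chi^2_r$ concentration and the independence between $\mN$ and $\vx$, but there is no substantive difference.
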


\begin{proof}
By the argument above, we can write that  
\begin{align*}
    \sqrt{r}\kappa\Gamma^*\vx =& (\mU^{\top}\mD\mU + \mN) \vx. 
\end{align*}
From rotational invariance, when we define $\vy$ as $\vy = \mU \vx$, $\vy\sim \mathcal{N}(0,\mI_d)$ is satisfied.
From the definition of $\mD$, the first $r$ components of $\mD \vy$ follow standard normal distribution i.i.d., and the other $(n-r)$ components are equal to zero.  Since applying $U^{\top}$ to a vector does not change the norm of the vector, we obtain 
\begin{equation*}
    \mU^{\top}\mD\mU \vx = \mU^{\top}\mD \vy = \tilde{O}(\sqrt{r}), 
\end{equation*}
with high probability.  Finally, since $\|\mN\|_F=O(1/\sqrt{d})$, 
$\mN \vx = \tilde{O}(1)$ holds.  Hence we can ignore the term $\mN \vx$.
\end{proof}

\begin{lemma}\label{lemm:Gammaxandu}
    Let $\vx\sim \mathcal{N}(0,\mI_d)$, and $\vu$ be a vector independent of $\vx$ satisfying $\|\vu\| = 1$.  then
    \begin{equation*}
         \inner{\sqrt{r}\kappa\Gamma^*\vx}{\vu} = O(\sqrt{\log{d}})
    \end{equation*}
    holds with high probability.
\end{lemma}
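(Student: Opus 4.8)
The plan is to reduce the statement to the Gaussian inner-product bound of Lemma~\ref{lemm:gaussianinner}. Recall the decomposition recorded above: $\Gamma^*=(r\mathbb{E}_{\beta}[\beta\beta^\top]+\mN)/(\kappa\sqrt{r})=(\mU^{\top}\mD\mU+\mN)/(\kappa\sqrt{r})$ with $\|\mN\|_F=O(1/\sqrt{d})$, so that $\sqrt{r}\kappa\Gamma^*=\mU^{\top}\mD\mU+\mN$. Using that $\mU^{\top}\mD\mU$ is symmetric, I would first rewrite
\[
\inner{\sqrt{r}\kappa\Gamma^*\vx}{\vu}=\inner{(\mU^{\top}\mD\mU+\mN)\vx}{\vu}=\inner{\vx}{(\mU^{\top}\mD\mU+\mN^{\top})\vu}.
\]
Since $\mN$ is a function of the pretraining data and $\vu$ is, by hypothesis, independent of the fresh Gaussian vector $\vx$, I can condition on the pair $(\mN,\vu)$ and treat $\vy:=(\mU^{\top}\mD\mU+\mN^{\top})\vu$ as a fixed vector that is independent of $\vx$.

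Next I would bound $\|\vy\|$ uniformly. The operator norm of $\mU^{\top}\mD\mU$ equals that of the coordinate projection $\mD$, which is $1$, while $\|\mN^{\top}\|_2\le\|\mN\|_F=O(1/\sqrt{d})$. Hence, by the triangle inequality and $\|\vu\|=1$,
\[
\|\vy\|\le\|\mU^{\top}\mD\mU\|_2\|\vu\|+\|\mN^{\top}\|_2\|\vu\|\le 1+O(1/\sqrt{d})=O(1).
\]

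Finally, applying Lemma~\ref{lemm:gaussianinner} with this $\vy$ (norm $C=O(1)$, independent of $\vx$) yields $\inner{\vx}{\vy}=O(\sqrt{\log d})$ with high probability, which is exactly the claim. I do not expect any genuine obstacle here; the proof is essentially a one-line reduction. The only points that need a sentence of care are that $\mN$ and $\vu$ must be independent of the query $\vx$ so the conditioning is legitimate, and that $\|\mN\|_2\le\|\mN\|_F$ lets the perturbation term be absorbed into the $O(1)$ bound on $\|\vy\|$ — note in particular that, unlike the proof of Lemma~\ref{lemm:Gammax}, we cannot invoke rotational invariance of $\vx$ since $\vu$ is an arbitrary unit vector, which is why the Gaussian-inner-product bound is the appropriate tool.
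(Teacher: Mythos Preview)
Your proof is correct and arrives at the same conclusion as the paper, but the mechanics differ slightly. You move the operator to the $\vu$ side, writing $\inner{\sqrt{r}\kappa\Gamma^*\vx}{\vu}=\inner{\vx}{(\mU^{\top}\mD\mU+\mN^{\top})\vu}$, bound the norm of the resulting fixed vector by $O(1)$ via operator norms, and then invoke Lemma~\ref{lemm:gaussianinner} directly. The paper instead keeps the matrix on the $\vx$ side and \emph{does} use rotational invariance: it sets $\vy=\mU\vx\sim\mathcal{N}(0,\mI_d)$ and simultaneously rotates $\vu$ to $\vu'=\mU\vu$, obtaining $\inner{\mU^{\top}\mD\mU\vx}{\vu}=\inner{\mD\vy}{\vu'}$, and then applies a Gaussian tail bound (Corollary~30 of \citet{nishikawa2025nonlinear}) to the $r$ nonzero coordinates; the $\mN$ term is handled separately and shown to be $O_p(\sqrt{\log d/d})$. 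So your remark that ``we cannot invoke rotational invariance of $\vx$ since $\vu$ is an arbitrary unit vector'' is not quite right: rotating both $\vx$ and $\vu$ by the same orthogonal $\mU$ is perfectly legitimate and is exactly what the paper does. Your route is nonetheless a clean alternative---it avoids the explicit rotation and the external corollary, at the mild cost of not isolating the sharper $O(\sqrt{\log d/d})$ bound on the noise contribution.
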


\begin{proof}
As in the previous lemma, we know that 
    \begin{align*}
    \sqrt{r}\kappa\Gamma^*\vx =& (\mU^{\top}\mD\mU + \mN) \vx 
\end{align*}
holds.  
Again, from rotational invariance, when we define $\vy$ as $\vy = \mU \vx$, $\vy\sim \mathcal{N}(0,\mI_d)$ is satisfied.
From the definition of $\mD$, the first $r$ components of $\mD \vy$ follow standard normal distribution i.i.d., and the other $(n-r)$ components are equal to zero.  Also, when we define $\vu'=\mU\vu$, we have $\|\vu'\| = 1$, which means $\sqrt{u_1'^2+\dots+u_r'^2}\leq1$.  Therefore, from Corollary 30 in \citet{nishikawa2025nonlinear}, we have
\begin{equation*}
    \inner{\mU^{\top}\mD\mU\vx}{\vu} = \inner{\mD\vy}{\vu'} = O_p(\sqrt{\log{d}}). 
\end{equation*}
As for $\mN \vx$, again from Corollary 30 in \citet{nishikawa2025nonlinear}, we have
\begin{equation*}
    \inner{\mN \vx}{\vu} = O_p\left(\sqrt{\frac{\log{d}}{d}}\right). 
\end{equation*}
This is smaller than the main term. 
\end{proof}

\begin{lemma}\label{lemm:difvector2}
     Let $\vx_1 \dots \vx_n\sim \mathcal{N}(0,\mI_d)$, and $z_1,\dots.z_n$ are i.i.d. random variables which satisfy $|z_i|\leq C_z$ with high probability. 
     When $n=\mathrm{poly}(d)$, it holds that 
     \begin{equation*}
    \left\|\frac{1}{n}\sum_{i=1}^n(z_i\sqrt{r}\kappa\Gamma^*\vx_i)-\mathbb{E}[z_1\sqrt{r}\kappa\Gamma^*\vx_1]\right\|=\tilde{O}\left(C_z\sqrt\frac{r}{n}\right),  
     \end{equation*}    
     with high probability.
\end{lemma}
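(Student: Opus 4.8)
The plan is to exploit the structure of the pretrained matrix recalled immediately above. Since $\Gamma^*=(r\mathbb{E}_{\beta}[\beta\beta^\top]+\mN)/(\kappa\sqrt{r})$ and $r\mathbb{E}_{\beta}[\beta\beta^\top]=\mU^\top\mD\mU$, where $\mD$ is the coordinate projection onto the first $r$ coordinates and $\|\mN\|_F=O_d(1/\sqrt{d})$, we have $\sqrt{r}\kappa\Gamma^*=\mU^\top\mD\mU+\mN$, so each vector $\vv_i:=z_i\sqrt{r}\kappa\Gamma^*\vx_i$ lives, up to a negligible perturbation, in an $r$-dimensional subspace. I would decompose $\vv_i=z_i\mU^\top\mD\mU\vx_i+z_i\mN\vx_i$ and bound the deviation of $\tfrac1n\sum_i\vv_i$ from $\mathbb{E}[\vv_1]$ by treating the two summands separately, the first producing the claimed rate $\tilde{O}(C_z\sqrt{r/n})$ and the second being of lower order.

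For the main term, set $\vw_i:=\mU\vx_i$, which are i.i.d.\ $\mathcal{N}(0,\mI_d)$ by rotational invariance; then $z_i\mU^\top\mD\mU\vx_i=\mU^\top(z_i\vw_{i,1},\dots,z_i\vw_{i,r},0,\dots,0)^\top$, and since $\mU^\top$ preserves Euclidean norm, $\big\|\tfrac1n\sum_i z_i\mU^\top\mD\mU\vx_i-\mathbb{E}[\,\cdot\,]\big\|$ equals the norm of the $r$-vector whose $j$-th entry is $\tfrac1n\sum_i z_i\vw_{i,j}-\mathbb{E}[z_1\vw_{1,j}]$. Each such entry is an average of i.i.d.\ terms $z_i\vw_{i,j}$ with $|z_i\vw_{i,j}|=\tilde{O}(C_z)$ with high probability, so after the truncation device in the proof of Lemma~\ref{lemm:hoeffding} (truncating $z_i$ at $C_z$ and $\vw_{i,j}$ at $\tilde{O}(1)$, which is simultaneously exact over all $i$ since $n=\poly(d)$ and perturbs the mean by only $\tilde{O}(d^{-C_*})$), Lemma~\ref{lemm:hoeffding} gives that this entry is $\tilde{O}(C_z/\sqrt{n})$ with high probability. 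A union bound over $j=1,\dots,r$ then yields $\big\|\tfrac1n\sum_i z_i\mU^\top\mD\mU\vx_i-\mathbb{E}[\,\cdot\,]\big\|=\tilde{O}(C_z\sqrt{r/n})$.

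For the remainder term, write $\tfrac1n\sum_i z_i\mN\vx_i-\mathbb{E}[z_1\mN\vx_1]=\mN\vd$ with $\vd:=\tfrac1n\sum_i z_i\vx_i-\mathbb{E}[z_1\vx_1]$. The same coordinatewise argument applied to $z_i(\vx_i)_j$ (still $\tilde{O}(C_z)$ with high probability) together with a union bound over all $d$ coordinates gives $\|\vd\|=\tilde{O}(C_z\sqrt{d/n})$, whence $\|\mN\vd\|\le\|\mN\|_2\|\vd\|\le\|\mN\|_F\cdot\tilde{O}(C_z\sqrt{d/n})=\tilde{O}(C_z/\sqrt{n})$. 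Adding the two contributions and using $r\ge1$ gives the asserted bound $\tilde{O}(C_z\sqrt{r/n})$. The one point that needs care, and is the only real obstacle here, is that $z_i$ is correlated with $\vx_i$ and only bounded with high probability (never deterministically), so the estimate must be routed through the deterministic bound $|z_i|\le C_z$ and the truncation device of Lemma~\ref{lemm:hoeffding} rather than through any moment assumption on $z_i$; the low-rank factor $\mU^\top\mD\mU$ is exactly what lets the union bound run over $r$ rather than $d$ coordinates, and so replaces $\sqrt{d}$ by $\sqrt{r}$ in the final rate.
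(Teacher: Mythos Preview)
Your proposal is correct and follows essentially the same approach as the paper: both use the decomposition $\sqrt{r}\kappa\Gamma^*=\mU^\top\mD\mU+\mN$, handle the main $r$-dimensional term by rotational invariance and concentration after truncating $z_i$ at $C_z$, and control the $\mN$ term via $\|\mN\|_F=O(1/\sqrt{d})$ applied to a full $d$-dimensional deviation of order $\tilde{O}(C_z\sqrt{d/n})$. The only cosmetic difference is that the paper invokes a vector-level sub-Gaussian concentration bound on the truncated $r$-vector directly, whereas you do it coordinatewise via Lemma~\ref{lemm:hoeffding} followed by a union bound over the $r$ (respectively $d$) coordinates; both routes give the same rate.
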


\begin{proof}
    We use the same proof strategy as Lemma 31 in \citet{nishikawa2025nonlinear}.  Let $z_i' = z_i\1_\mathrm{|z_i|\leq C_z}$.  First, we can confirm that
    \begin{align*}
        \mathbb{E}[z_1\sqrt{r}\kappa\Gamma_*\vx_1] - \mathbb{E}[z_1'\sqrt{r}\kappa\Gamma_*\vx_1] =& \mathbb{E}[\1_{|z_i|>C_z}\sqrt{r}\kappa\Gamma_*\vx_1]\\
        \leq & \mathbb{E}[\1_{|z_i|>C_z}^2]^{1/2}\mathbb{E}[\|\sqrt{r}\kappa\Gamma_*\vx_1\|^2]^{1/2}\\
        \leq & O(d^{-C}). 
    \end{align*}
    Where $C$ can be taken sufficiently large from the definition of high probability event.  Because $n=\mathrm{poly}(d),$ by redefining $C$ if necessary, we can make this term smaller than $\tilde{O}(C_z\sqrt{\frac{r}{n}})$.  As $\frac{1}{n}\sum_{i=1}^n(z_i\sqrt{r}\kappa\Gamma^*\vx_i)=\frac{1}{n}\sum_{i=1}^n(z_i'\sqrt{r}\kappa\Gamma^*\vx_i)$ with high probability, 
    we continue to evaluate $\frac{1}{n}\sum_{i=1}^n(z_i'\sqrt{r}\kappa\Gamma^*\vx_i)-\mathbb{E}[z_1'\sqrt{r}\kappa\Gamma_*\vx_1]$.  
    Let $\vy_i=\mU\vx_i$.  Then $\vy_i \sim \mathcal{N}(0,\mI_d)$ holds.  $z_i'\mD \vy_i$ is a sub-Gaussian vector, and $(\mD \vy_i)_k=0$ when $k>r$ from the definition of $\mD$.  Then, applying a standard concentration bound for a sub-Gaussian vector to the $r$-dimensional vector $(\mD\vy_i)_{1:r}$ yields
    \begin{equation*}
        \frac{1}{n}\sum_{i=1}^{n}z_i'\mD \vy_i-\mathbb{E}[z_1'\mD\vy_1]=\tilde{O}\left(C_z\sqrt{\frac{r}{n}}\right). 
    \end{equation*}
    As multiplying the orthogonal matrix $U^{\top}$ does not change the norm of the vector, we have
      \begin{equation*}
        \frac{1}{n}\sum_{i=1}^{n}z_i'\mU^{\top}\mD\mU\vx_i-\mathbb{E}[z_1'\mU^{\top}\mD\mU\vx_1]=\tilde{O}\left(C_z\sqrt{\frac{r}{n}}\right).
    \end{equation*}
    Again from the standard concentration bound for a sub-Gaussian vector, we have 
    \begin{equation*}
        \frac{1}{n}\sum_{i=1}^{n}z_i'\vx_i-\mathbb{E}[z_1'\vx_1]=\tilde{O}\left(C_z\sqrt{\frac{d}{n}} \right), 
    \end{equation*}
    with high probability. 
    Since $\|\mN\|_F =O(1/\sqrt{d})$, we have
    \begin{equation*}
        \frac{1}{n}\sum_{i=1}^{n}z_i'\mN\vz_i-\mathbb{E}[z_1'\mN\vx_1]=\tilde{O}\left(C_z\sqrt{\frac{1}{n}}\right). 
    \end{equation*}
    In summary, we arrive at 
    \begin{align*}
        &\frac{1}{n}\sum_{i=1}^n(z_i\sqrt{r}\kappa\Gamma^*\vx_i)-\mathbb{E}[z_1\sqrt{r}\kappa\Gamma^*\vx_1]\\
        &=\frac{1}{n}\sum_{i=1}^n(z_i(\mU^{\top}\mD\mU+\mN)\vx_i)-\mathbb{E}[z_1(\mU^{\top}\mD\mU+\mN)\vx_1]
        =\tilde{O}\left(C_z\sqrt\frac{r}{n}\right), 
    \end{align*}
    with high probability, which completes the proof.
\end{proof} 

\begin{lemma}\label{lemm:Gammabeta}
We have that 
    \begin{equation*}
        \sqrt{r}\kappa\Gamma^*\beta = \beta + O(1/\sqrt{d}). 
    \end{equation*}
\end{lemma}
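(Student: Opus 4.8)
The plan is to substitute the explicit decomposition of the pretrained matrix $\Gamma^*$ recorded in the preceding subsection and then exploit the fact that $\beta$ lies in the $r$-dimensional subspace on which the projection $\mD$ acts as the identity. Recall that $\Gamma^*=(r\mathbb{E}_{\beta}[\beta\beta^\top]+\mN)/(\kappa\sqrt{r})$ with $\|\mN\|_F=O(1/\sqrt{d})$, so that $\sqrt{r}\kappa\Gamma^*\beta = r\mathbb{E}_{\beta}[\beta\beta^\top]\beta+\mN\beta$. Using the identity $r\mathbb{E}_{\beta}[\beta\beta^\top]=\mU^\top\mD\mU$ from the same subsection, it then suffices to evaluate $\mU^\top\mD\mU\beta$ and to bound $\mN\beta$.

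First I would handle the main term. Since $\mU$ is the orthogonal matrix chosen so that $\mU\beta\sim\mathrm{Unif}\{(\alpha_1,\dots,\alpha_r,0,\dots,0)\mid \alpha_1^2+\dots+\alpha_r^2=1\}$, every realization of $\mU\beta$ is supported on its first $r$ coordinates. Because $\mD$ is the identity on the first $r$ coordinates and zero elsewhere, $\mD\mU\beta=\mU\beta$, and hence $\mU^\top\mD\mU\beta=\mU^\top\mU\beta=\beta$ by orthogonality of $\mU$.

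For the remainder term I would use the operator-norm bound $\|\mN\beta\|\le\|\mN\|_2\|\beta\|\le\|\mN\|_F=O(1/\sqrt{d})$, using $\|\beta\|=1$. Combining the two pieces gives $\sqrt{r}\kappa\Gamma^*\beta=\beta+O(1/\sqrt{d})$, which is the claim.

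There is essentially no hard step here: the only points requiring care are that the decomposition $\mathbb{E}_{\beta}[\beta\beta^\top]=\mU^\top\mD\mU/r$ is \emph{exact}, which follows from rotational invariance of the uniform distribution on the unit sphere of $S_r$, and that the same $\mU$ simultaneously diagonalizes $\mathbb{E}_{\beta}[\beta\beta^\top]$ and maps $\mathrm{Supp}(\beta)$ onto the first $r$ coordinate directions — both of which are already established in the ``Pretrained matrix'' subsection.
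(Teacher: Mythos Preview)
Your proof is correct and follows essentially the same approach as the paper: decompose $\sqrt{r}\kappa\Gamma^*\beta$ via $r\mathbb{E}_\beta[\beta\beta^\top]=\mU^\top\mD\mU$, observe that $\mU\beta$ has only its first $r$ coordinates nonzero so that $\mU^\top\mD\mU\beta=\beta$, and bound the remainder $\mN\beta$ using $\|\mN\|_F=O(1/\sqrt{d})$ and $\|\beta\|=1$.
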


\begin{proof}
First note that 
    \begin{equation*}
        \sqrt{r}\kappa\Gamma^*\beta = (\mU^{\top}\mD\mU + \mN) \beta. 
    \end{equation*}
    From the definition of $U$, we have $\mU\beta = (\alpha_1,\alpha_2,\dotsc,\alpha_r,0,\dotsc,0)^\top$ for some $\alpha_1,\alpha_2,\dots,\alpha_r \in \R$.
    Therefore, we obtain that 
    \begin{align*}
        \mU^{\top}\mD\mU\beta = \mU^{\top} \mU \beta = \beta. 
    \end{align*}
    Finally, by noticing $\|\mN\|_F=O(1/\sqrt{d})$ and $\|\beta\|=1$, 
    we can see that $\mN \beta = O(1/\sqrt{d})$ holds.
\end{proof}

\begin{lemma}\label{lemm:betaanddoublegamma}
We have that 
    \begin{equation*}
        \sqrt{r}\kappa\Gamma^*(\sqrt{r}\kappa\Gamma^*\beta)=\beta+O(1/\sqrt{d}). 
    \end{equation*}
\end{lemma}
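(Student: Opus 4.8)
The plan is to deduce the claim by iterating Lemma~\ref{lemm:Gammabeta} a single time, together with a crude bound on the operator norm of $M:=\sqrt{r}\kappa\Gamma^*$. Recall from the discussion preceding Lemma~\ref{lemm:Gammax} that $M=\mU^\top\mD\mU+\mN$ with $\|\mN\|_F=O(1/\sqrt{d})$. Since $\mD$ is a coordinate projection we have $\|\mU^\top\mD\mU\|_2=\|\mD\|_2=1$, and $\|\mN\|_2\le\|\mN\|_F=O(1/\sqrt{d})$, so that $\|M\|_2\le 1+O(1/\sqrt{d})=O(1)$.

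First I would use Lemma~\ref{lemm:Gammabeta} in the sharp form given by its proof: because $\mU\beta$ is supported on its first $r$ coordinates, $\mU^\top\mD\mU\beta=\mU^\top\mU\beta=\beta$ exactly, hence $M\beta=\beta+\mN\beta$ with $\|\mN\beta\|\le\|\mN\|_F\|\beta\|=O(1/\sqrt{d})$. Applying $M$ a second time then gives
\begin{equation*}
    M(M\beta)=M\beta+M(\mN\beta)=\beta+\mN\beta+M(\mN\beta),
\end{equation*}
and by the operator-norm bound above $\|M(\mN\beta)\|\le\|M\|_2\,\|\mN\beta\|=O(1/\sqrt{d})$. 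Collecting the two error terms yields $M^2\beta=\beta+O(1/\sqrt{d})$, which is precisely $\sqrt{r}\kappa\Gamma^*(\sqrt{r}\kappa\Gamma^*\beta)=\beta+O(1/\sqrt{d})$.

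There is essentially no hard step: the statement is a one-line corollary of Lemma~\ref{lemm:Gammabeta}. The only point requiring care is checking $\|M\|_2=O(1)$, so that the $O(1/\sqrt{d})$ error produced by the first multiplication is not amplified by the second; this is immediate from the decomposition of $M$ into an orthogonal conjugate of a projection plus a Frobenius-small perturbation. Alternatively one could expand $M^2=\mU^\top\mD\mU+\mU^\top\mD\mU\,\mN+\mN\,\mU^\top\mD\mU+\mN^2$ using $\mD^2=\mD$, apply it to $\beta$, note $\mU^\top\mD\mU\beta=\beta$, and bound each of the three remaining terms by $O(\|\mN\|_F)=O(1/\sqrt{d})$ directly.
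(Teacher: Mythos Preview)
Your proof is correct and takes essentially the same approach as the paper: both apply Lemma~\ref{lemm:Gammabeta} to write $\sqrt{r}\kappa\Gamma^*\beta=\beta+\vd$ with $\vd=O(1/\sqrt{d})$, then control $\sqrt{r}\kappa\Gamma^*\vd$ via the decomposition $\mU^\top\mD\mU+\mN$ (the paper bounds $\mU^\top\mD\mU\vd$ and $\mN\vd$ separately, while you equivalently package this as $\|M\|_2=O(1)$).
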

\begin{proof}
    From lemma~\ref{lemm:Gammabeta}, we have $\sqrt{r}\kappa\Gamma^*\beta = \beta + \vd$ where $\vd = O(1/\sqrt{d})$.  Therefore, it holds that 
    \begin{equation*}
        \sqrt{r}\kappa\Gamma^*(\sqrt{r}\kappa\Gamma^*\beta)=\sqrt{r}\kappa\Gamma^*(\beta + \vd). 
    \end{equation*}
    Again from Lemma~\ref{lemm:Gammabeta}, we see that $\sqrt{r}\kappa\Gamma^*\beta = \beta + O(1/\sqrt{d})$. Also we have $\sqrt{r}\kappa\Gamma^*\vd = \mU^{\top}\mD\mU\vd + \mN\vd$, and since $\|\mU^{\top}\mD\mU\|_2 = 1$ from the definition of $\mU$ and $\mD$, we have $\mU^{\top}\mD\mU\vd = O(1/\sqrt{d})$.  Finally, since $\|\mN\|_F=O(1/\sqrt{d})$, we have $\mN\vd = O(1/d)$.  This indicates that $\sqrt{r}\kappa\Gamma^*\vd =O(1/\sqrt{d})$.  
\end{proof}

\section{Experimental details}\label{appx:experiment}
The GPT-2 model architecture we used in Section~\ref{sec:experiment} originates from \citet{garg2023transformerslearnincontextcase}.  Given the $(N+1)$-length prompt $\{(\vx_i,y_i)\}_{i=1}^{N+1}$, we first construct the embedding as 
\begin{equation*}
    \mE = [\vx_1,\tilde{\vy_1},\dots,\vx_{N+1},\tilde{\vy_{N+1}}] \in \mathbb{R}^{d\times (2N+2)}, 
\end{equation*}
where $\tilde{\vy_i} = [y_i,0,\dots,0]^{\top}$.  Next, the read-in layer transforms this embedding into $\tilde{\mE} \in \mathbb{R}^{D\times(2N+2)}$, where $D=128$.  This mapped embedding $\tilde{\mE}$ goes through a 2-layer GPT-2 backbone with 4 attention heads, following the configuration by \citet{garg2023transformerslearnincontextcase}.  Finally, the output of GPT-2 backbone is transformed by the read-out layer into the vector $[z_1,z_2,\dots,z_{2N+1},z_{2N+2}]$.  Here, $z_{2i-1}$ is the prediction of $y_i$ given the context $(\vx_1,y_1,\dots,\vx_{i-1},y_{i-1},\vx_i)$. 
We used Adam optimizer \citep{kingma2017adammethodstochasticoptimization}  with a learning rate of 0.0001.  To reduce the pretraining cost, we adopted the curriculum learning strategy, which is also used in \citet{garg2023transformerslearnincontextcase}.  The training started with the dimension $d=4$, and the dimension was increased by two until it reached the target dimension. 

For the model with TTT, we introduced low-rank adaptation (LoRA) to the attention projection layers (c\_attn and c\_proj) and the feedforward layer (c\_fc) of the base model.  The rank of LoRA was set to $4$, and the parameters LoRA\_alpha and LoRA\_dropout were set to $8$ and $0.1$, respectively.  The LoRA parameters were updated $300$ times via SGD for each query.  The inference-time learning rate was $0.1$ in the experiment in Figure~\ref{fig:main} and Figure~\ref{fig:comparison_dr}.  To prevent information leakage when evaluating the prediction of $y_i$, the model used only the preceding data $(\vx_1,y_1,\dots,\vx_{i-1},y_{i-1},\vx_i)$ to update the weight.  The LoRA weights were reset before proceeding to the prediction of the next target, $y_{i+1}$.  We observed numerical instability (gradient explosion) in a negligible fraction of trials; these instances were excluded from the final results to ensure metric stability.

The test loss was averaged over 256 runs, with each run containing 256 independent queries.


\end{document}